\documentclass[10pt,twocolumn,letterpaper]{article}

\usepackage{cvpr}              

\renewcommand{\paragraph}[1]{\vspace{.5em}\noindent\textbf{#1.}}

\usepackage{soul}
\setuldepth{foobar}

\usepackage[utf8]{inputenc} 
\usepackage[T1]{fontenc}    
\usepackage{url}            
\usepackage{amsfonts}       
\usepackage{nicefrac}       
\usepackage{microtype}      
\usepackage{xcolor}         
\usepackage{amsmath}
\allowdisplaybreaks   
\usepackage{amsthm} 
\usepackage{mathtools}
\usepackage{mathrsfs}
\usepackage{dsfont}
\usepackage{tasks}
\usepackage{wrapfig}
\definecolor{cvprblue}{rgb}{0.21,0.49,0.74}
\usepackage[pagebackref,breaklinks,colorlinks,allcolors=cvprblue]{hyperref}
\usepackage[capitalize]{cleveref}
\usepackage{enumitem}
\usepackage{booktabs}
\usepackage{pifont}
\usepackage{makecell}
\usepackage{multicol}
\usepackage{tikz-cd}
\usepackage{units}
\usepackage[ruled,linesnumbered]{algorithm2e}
\usepackage{titletoc}
\usepackage{appendix}
\usepackage{multirow}
\usepackage{colortbl}
\usepackage{crossreftools}
\usepackage{mdframed} 
\usepackage{chngcntr} 

\newmdenv[
  backgroundcolor=gray!10, 
  linecolor=gray!10,       
  linewidth=0pt,           
  roundcorner=8pt,         
  skipabove=6pt,           
  skipbelow=6pt,           
  innertopmargin=6pt,      
  innerbottommargin=6pt,   
  leftmargin=0pt,          
  rightmargin=0pt          
]{keybox}

\usepackage{listings}
\usepackage{aliascnt}

\theoremstyle{plain}
\newtheorem{theorem}{Theorem}[section]

\newaliascnt{proposition}{theorem}
\newtheorem{proposition}[proposition]{Proposition}
\aliascntresetthe{proposition}

\newaliascnt{lemma}{theorem}

\aliascntresetthe{lemma}

\newaliascnt{corollary}{theorem}
\newtheorem{corollary}[corollary]{Corollary}
\aliascntresetthe{corollary}

\theoremstyle{definition}
\newaliascnt{definition}{theorem}
\newtheorem{definition}[definition]{Definition}
\aliascntresetthe{definition}

\newaliascnt{assumption}{theorem}

\aliascntresetthe{assumption}

\theoremstyle{remark}
\newaliascnt{remark}{theorem}
\newtheorem{remark}[remark]{Remark}
\aliascntresetthe{remark}

\providecommand{\calF}{\mathcal{F}}

\providecommand{\calO}{\mathcal{O}}
\providecommand{\calX}{\mathcal{X}}

\providecommand{\bbR}[1]{\mathbb {R}^{#1}}

\providecommand{\bbRscalar}{\mathbb {R}}

\providecommand{\dist}{\operatorname{d}}
\providecommand{\rieexp}{\operatorname{Exp}}
\providecommand{\rielog}{\operatorname{Log}}

\providecommand{\pt}[2]{\operatorname{PT}_{#1 \rightarrow #2}}

\providecommand{\rieLog}[1]{\operatorname{Log}_{#1}}

\providecommand{\softmax}{{\mathrm{softmax}}}

\providecommand{\sign}{\operatorname{sign}}

\providecommand{\id}{\mathds{1}}

\providecommand{\gyr}{\operatorname{gyr}}

\providecommand{\inner}[2]{\left\langle #1,#2 \right\rangle}

\providecommand{\norm}[1]{\left\| #1 \right\|}

\providecommand{\Rzero}{\mathbf{0}}

\providecommand{\calM}{\mathcal{M}}

\providecommand{\vecone}{\boldsymbol{1}}

\providecommand{\hyperspace}[1]{\mathcal{H}^{#1}_{K}}
\providecommand{\Hoplus}{\oplus _\mathcal{H}}
\providecommand{\Hominus}{\ominus _\mathcal{H}}
\providecommand{\Hodot}{\odot _\mathcal{H}}

\providecommand{\pball}[1]{\mathbb{P}^{#1}_{K}}
\providecommand{\Moplus}{\oplus _\mathrm{M}}
\providecommand{\Mominus}{\ominus _\mathrm{M}}
\providecommand{\Modot}{\odot _\mathrm{M}}

\providecommand{\lorentz}[1]{\mathbb{L}^{#1}_K}
\providecommand{\unitlorentz}[1]{\mathbb{L}^{#1}}
\providecommand{\Lnorm}[1]{\left\| #1 \right\|_{\mathcal{L}}}
\providecommand{\Linner}[2]{\left\langle #1, #2 \right\rangle_{\mathcal{L}}}
\providecommand{\Lzero}{\overline{\mathbf{0}}}

\providecommand{\Loplus}{\oplus _\mathbb{L}}
\providecommand{\Lominus}{\ominus _\mathbb{L}}
\providecommand{\Lodot}{\odot _\mathbb{L}}

\providecommand{\unitsphere}[1]{\mathbb{S}^{#1}}

\providecommand{\unitsphere}[1]{\mathbb{S}^{#1}_{-1}}

\providecommand{\norm}[1]{\left\| #1 \right\|}

\providecommand{\CAT}{\mathrm{CAT}}

\newcommand{\cmark}{\textcolor{green}{\text{\ding{51}}}}%
\newcommand{\xmark}{\textcolor{red}{\text{\ding{55}}}}

\providecommand{\ie}{\emph{i.e.},\xspace}
\providecommand{\eg}{\emph{e.g.},\xspace}
\providecommand{\redbf}[1]{\textbf{\textcolor{red}{#1}}}

\providecommand{\greenbf}[1]{\textbf{\textcolor{green}{#1}}}

\definecolor{HilightColor}{RGB}{240, 240, 250} 

\crefname{equation}{Eq.}{Eqs.}
\Crefname{equation}{Equation}{Equations}
\crefname{figure}{Fig.}{Figs.}
\Crefname{figure}{Figure}{Figures}
\crefname{table}{Tab.}{Tabs.}
\Crefname{table}{Table}{Tables}
\crefname{algocf}{Alg.}{Algs.}
\Crefname{algocf}{Algorithm}{Algorithms}
\crefname{section}{Sec.}{Secs.}
\Crefname{section}{Section}{Sections}
\crefname{appendix}{App.}{Apps.}
\Crefname{appendix}{Appendix}{Appendices}

\crefname{theorem}{Thm.}{Thms.}
\Crefname{theorem}{Theorem}{Theorems}
\crefname{lemma}{Lem.}{Lems.}
\Crefname{lemma}{Lemma}{Lemmas}
\crefname{definition}{Def.}{Defs.}
\Crefname{definition}{Definition}{Definitions}
\crefname{corollary}{Cor.}{Cors.}
\Crefname{corollary}{Corollary}{Corollaries}
\crefname{remark}{Rmk.}{Rmks.}
\Crefname{remark}{Remark}{Remarks}
\crefname{proposition}{Prop.}{Props.}
\Crefname{proposition}{Proposition}{Propositions}
\crefname{proof}{Pr.}{Prs.}
\Crefname{proof}{Proof}{Proofs}

\makeatletter

\newcommand{\eqnrefs}[1]{%
  \begingroup
    \edef\eqn@list{#1}%
    \count@=0\relax
    \@for\eqn@tmp:=\eqn@list\do{\advance\count@ by 1\relax}%
    \ifnum\count@=1
      Equation~\ref{#1}%
    \else
      Equations~%
      \count256=0\relax
      \@for\eqn@tmp:=\eqn@list\do{%
        \advance\count256 by 1\relax
        \ifnum\count256<\numexpr\count@-0\relax
          \ref{\eqn@tmp}%
          \ifnum\count256<\numexpr\count@-1\relax
            ,\ 
          \else
            \ 
          \fi
        \else
          and~\ref{\eqn@tmp}%
        \fi
      }%
    \fi
  \endgroup
}
\makeatother

\usepackage[acronym, toc, automake, hyperfirst=true]{glossaries-extra}

\newglossary*{notation}{List of Symbols}
\setglossarystyle{long} 
\setabbreviationstyle[acronym]{long-short} 

\makeglossaries 

\newacronym[sort=nn]{CNNs}{CNNs}{Convolutional Neural Networks}
\newacronym[sort=nn]{RNNs}{RNNs}{Recurrent Neural Networks}
\newacronym[sort=nn]{DNNs}{DNNs}{Deep Neural Networks}
\newacronym[sort=nn]{GCN}{GCN}{Graph Convolutional Network}

\newacronym[sort=nn]{FC}{FC}{Fully Connected}
\newacronym[sort=nn]{MLR}{MLR}{Multinomial Logistics Regression}
\newacronym[sort=nn]{BMLR}{BMLR}{Busemann Multinomial Logistics Regression}
\newacronym[sort=nn]{BFC}{BFC}{Busemann Fully Connected}

\newacronym[sort=nn]{SPDNet}{SPDNet}{SPD Neural Network}
\newacronym[sort=nn]{GrNet}{GrNet}{Grassmann network}
\newacronym[sort=nn]{HNN}{HNN}{Hyperbolic Neural Network}
\newacronym[sort=nn]{HNN++}{HNN++}{Hyperbolic Neural Network++}

\newacronym[sort=bn]{BN}{BN}{Batch Normalization}
\newacronym[sort=bn]{RBN}{RBN}{Riemannian Batch Normalization}
\newacronym[sort=bn]{LieBN}{LieBN}{Lie Group Batch Normalization}
\newacronym[sort=bn]{SPDBN}{SPDBN}{SPD Batch Normalization}
\newacronym[sort=bn]{GyroBN}{GyroBN}{Gyrogroup Batch Normalization}

\newacronym[sort=spd]{SPD}{SPD}{Symmetric Positive Definite}
\newacronym[sort=spd]{AIM}{AIM}{Affine-Invariant Metric}
\newacronym[sort=spd]{LCM}{LCM}{Log-Cholesky Metric}
\newacronym[sort=spd]{LEM}{LEM}{Log-Euclidean Metric}

\newacronym[sort=gr]{ONB}{ONB}{Orthonormal Basis}
\newacronym[sort=gr]{PP}{PP}{Projector Perspective}

\newacronym[sort=cor]{ECM}{ECM}{Euclidean--Cholesky Metric}
\newacronym[sort=cor]{LECM}{LECM}{Log-Euclidean--Cholesky Metric}
\newacronym[sort=cor]{LSM}{LSM}{Log-Scaled Metric}
\newacronym[sort=cor]{OLM}{OLM}{Off-Log Metric}
\newacronym[sort=cor]{PHCM}{PHCM}{Poly-Hyperbolic-Cholesky Metric}

\newacronym[sort=score]{MCC}{MCC}{Matthews Correlation Coefficient}

\newglossaryentry{manifold}{
  type=notation, 
  name={$\mathcal{M}$}, 
  text={$\mathcal{M}$}, 
  description={Riemannian manifold}, 
}

\newglossaryentry{metric}{
  type=notation,
  name={$\mathbf{g}$},
  text={$\mathbf{g}$},
  description={Metric on $\mathcal{M}$},
}

\newglossaryentry{reals}{
  type=notation,
  name={$\mathbb{R}^n$},
  text={$\mathbb{R}^n$},
  description={A $n$-dimensional Euclidean space},
}

\newcommand{\glsfull}[1]{\glsentrylong{#1} (\glsentryshort{#1})}

\def\paperID{***} 
\def\confName{CVPR}
\def\confYear{2026}

\title{Hyperbolic Busemann Neural Networks}

\author{
Ziheng Chen$^{1,2}$,
Bernhard Schölkopf$^{2}$ \&
Nicu Sebe$^{1}$ 
\\
$^1$ University of Trento, $^2$ MPI-IS \\
}

\begin{document}
\maketitle
\begin{abstract}
Hyperbolic spaces provide a natural geometry for representing hierarchical and tree-structured data due to their exponential volume growth. To leverage these benefits, neural networks require intrinsic and efficient components that operate directly in hyperbolic space. In this work, we lift two core components of neural networks, Multinomial Logistic Regression (MLR) and Fully Connected (FC) layers, into hyperbolic space via Busemann functions, resulting in Busemann MLR (BMLR) and Busemann FC (BFC) layers with a unified mathematical interpretation. BMLR provides compact parameters, a point-to-horosphere distance interpretation, batch-efficient computation, and a Euclidean limit, while BFC generalizes FC and activation layers with comparable complexity. Experiments on image classification, genome sequence learning, node classification, and link prediction demonstrate improvements in effectiveness and efficiency over prior hyperbolic layers. The code is available at \url{https://github.com/GitZH-Chen/HBNN}.
\end{abstract}    
\section{Introduction}
\label{sec:intro}

Hyperbolic representations have recently delivered strong performance across different applications because the exponential volume growth of negatively curved manifolds enables low-distortion embeddings of tree-like and hierarchical structure \citep{nickel2017poincare}. These advantages have been validated in computer vision \citep{gao2021curvature,khrulkov2020hyperbolic,ermolov2022hyperbolic,van2023poincare,gao2023exploring,bdeir2024fully,he2025lorentzian,bdeir2025robust,sur2025hyperbolic,liu2025hyperbolic,wang2026wasserstein}, graph learning \citep{chami2019hyperbolic,bachmann2020constant,fu2024hyperbolic,sun2024geometry}, multimodal learning \citep{desai2023hyperbolic,pal2025compositional}, recommendation systems \citep{yanghg2025former}, astronomy \citep{chen2025galaxy}, genome sequence learning \citep{khan2025hyperbolic}, natural language processing \citep{nickel2017poincare,ganea2018hyperbolic,nickel2018learning,gulcehre2019hyperbolic,he2025helm,yang2025hyperbolic}, and brain signal decoding \citep{li2026heegnet}. These empirical successes highlight the need for principled and efficient hyperbolic neural network components that fully leverage hyperbolic geometry.

Among canonical formulations, the Poincaré ball and the Lorentz model are the most widely used, owing to their closed-form Riemannian operators. To support hyperbolic deep learning, several key building blocks in neural networks have been generalized to Poincaré or Lorentz spaces, including attention \citep{gulcehre2019hyperbolic,chen2022fully,yang2024hypformer}, batch normalization \citep{lou2020differentiating,bdeir2024fully,chen2025gyrobn,chen2025gyrobnextension}, linear feed-forward layers \citep{ganea2018hyperbolic,shimizu2021hyperbolic,chen2022fully}, activation \citep{ganea2018hyperbolic,bdeir2024fully}, residual blocks \citep{van2023poincare,katsman2024riemannian,he2025lorentzian}, \glsfull{MLR} \citep{shimizu2021hyperbolic,bdeir2024fully,nguyen2025neural}, and graph convolution \citep{chami2019hyperbolic,liu2019hyperbolic,bachmann2020constant,dai2021hyperbolic}. 
Among these components, MLR classification and \glsfull{FC} layers play a fundamental role in feature transformation and final decision-making.

Recently, hyperplanes and point-to-hyperplane distances \citep{lebanon2004hyperplane} have been adopted to construct hyperbolic MLR in both Poincaré \citep{ganea2018hyperbolic,shimizu2021hyperbolic,nguyen2025neural} and Lorentz \citep{bdeir2024fully} models. \citet[Sec.~3.1]{ganea2018hyperbolic} introduced the first Poincaré MLR based on Poincaré hyperplanes, but the formulation suffers from over-parameterization and lacks batch efficiency. \citet[Sec.~3.1]{shimizu2021hyperbolic} alleviated these issues through a re-parameterization strategy. Building on these ideas, \citet[Sec.~4.3]{bdeir2024fully} proposed a Lorentz MLR. However, its hyperplane is defined by the ambient Minkowski space, which is model-specific and may distort Lorentzian geometry.

For hyperbolic FC layers, three main formulations exist. \citet[Sec.~3.2]{ganea2018hyperbolic} introduced Möbius matrix-vector multiplication through the tangent space on the Poincaré ball. \citet[Sec.~3.2]{shimizu2021hyperbolic} further proposed the Poincaré FC layer, defined intrinsically but restricted to the Poincaré model. On the Lorentz model, \citet[Sec.~2.2]{chen2022fully} constructed a Lorentz FC layer by applying linear transformations in the ambient Minkowski space followed by projection onto the Lorentz model. Thus, Möbius and Lorentz FC rely on flat-space (tangent or ambient) approximations that could distort intrinsic geometry, whereas Poincaré FC is intrinsic but model-specific.

On the other hand, the Busemann function and its level sets, horospheres, have emerged as powerful intrinsic tools for hyperbolic learning. They enjoy convenient metric properties \citep[Ch. II.8]{bridson2013metric} and admit closed-form expressions on both the Poincaré and Lorentz models \citep[Prop.~9]{bonet2025sliced}. These operators have supported several hyperbolic algorithms, including SVM \citep{fan2023horospherical}, PCA \citep{chami2021horopca}, Sliced Wasserstein distances \citep{bonet2025sliced}, and prototype learning \citep{ghadimi2021hyperbolic}. We also note that \citet[Cor.~4.3]{nguyen2025neural} proposed a Poincaré MLR based on the Busemann function. However, its induced point-to-hyperplane distance is pseudo, coincides with the true distance only in Euclidean geometry, remains over-parameterized, and is not batch efficient.

These observations motivate intrinsic and batch-efficient formulations for MLR and FC layers that can operate on both the Poincaré ball and the Lorentz model. To address this need, we propose \emph{\glsfull{BMLR}} and \emph{\glsfull{BFC}} layers, two Busemann-based components for hyperbolic networks. Our contributions are summarized as follows:
\begin{itemize}

\item
We introduce BMLR, deriving intrinsic logits directly from Busemann functions with a point-to-horosphere distance interpretation. BMLR uses a compact per-class parameterization, eliminates manifold-valued parameters in prior MLRs, remains batch-efficient, and recovers Euclidean MLR as curvature tends to zero.

\item
We develop BFC layers by generalizing the FC and activation layers through the Busemann function, providing intrinsic constructions on both the Poincaré and Lorentz models. BFC preserves comparable complexity and parameter counts, and recovers Euclidean FC in the zero curvature limit.

\item
We provide empirical validation across image classification, genome sequence learning, node classification, and link prediction. BMLR and BFC generally outperform existing hyperbolic layers. BMLR shows particularly large gains as the number of classes increases, and the Lorentz BMLR is the fastest among all hyperbolic MLRs.
\end{itemize}

\section{Preliminaries}
\label{sec:preliminaries}

\textbf{Hyperbolic geometry.}
An $n$-dimensional hyperbolic space is a Riemannian manifold with constant negative sectional curvature $K<0$. There exist five models of hyperbolic space \citep[Sec.~7]{cannon1997hyperbolic}. Among them, the Poincaré ball and Lorentz models are the most widely used. The \emph{Poincaré ball model} \citep[p.~62]{lee2018introduction} is
$\pball{n}=\{x\in \bbR{n}: \norm{x}^2<-1/K \}$ with Riemannian metric $g_x(u,v)=(\lambda_x^K)^2\inner{u}{v}$, where $\lambda_x^K=\frac{2}{1+K\norm{x}^2}$. Here $\inner{\cdot}{\cdot}$ and $\norm{\cdot}$ denote the Euclidean inner product and norm. The origin is the zero vector $\Rzero \in \pball{n}$. The \emph{Lorentz model} \citep[p.~62]{lee2018introduction}, also called the hyperboloid model, is
$\lorentz{n}=\{ x\in \bbR{n+1}: \Linner{x}{x}=1/K,\ x_t>0 \}$, where $\Linner{x}{y}=-x_t y_t+\inner{x_s}{y_s}$ is the Lorentzian inner product and $\Lnorm{x}=\sqrt{\Linner{x}{x}}$ is the induced Lorentzian norm. Following the convention of special relativity \citep[§3.1]{ratcliffe2006foundations}, we write $x=[x_t, x_s^\top]^\top$ with temporal component $x_t\in\bbRscalar$ and spatial component $x_s\in\bbR{n}$. The canonical origin is $\Lzero=[\sqrt{1/|K|},0,\ldots,0]^\top \in \lorentz{n}$. Throughout, we write $\hyperspace{n}\in\{\pball{n},\lorentz{n}\}$ for a hyperbolic space.

\textbf{Geodesic, geodesic ray, and asymptote.}
\emph{Geodesics} are length-minimizing curves. We consider unit speed geodesics. A \emph{geodesic ray} is a geodesic defined for $t \in [0,\infty)$. Two geodesic rays are \emph{asymptotic} if the distance between corresponding points remains bounded as $t$ tends to infinity \citep[II. Def.~8.1]{bridson2013metric}. This notion generalizes Euclidean parallel lines in two respects: (i) Euclidean parallels have constant separation, whereas asymptotic rays have bounded separation; (ii) Euclidean parallels never intersect, and asymptotic rays do not intersect as well \citep[II. Prop.~8.2]{bridson2013metric}.

\textbf{Busemann function.}
Let $\gamma$ be a geodesic ray and let $\dist(\cdot,\cdot)$ denote the geodesic distance. The \emph{Busemann function} \citep[II. Def.~8.17]{bridson2013metric} associated with $\gamma$ is defined as
\begin{equation}
\forall x \in \hyperspace{n}, \quad
B^{\gamma}(x) = \lim_{t\to\infty} \left( \dist\left(x,\gamma(t)\right) - t \right).
\end{equation}
In hyperbolic spaces, this limit exists \citep[II. Lem.~8.18]{bridson2013metric}. In Euclidean space, the Busemann function associated with the geodesic $\gamma(t)=t v$ that starts at $\Rzero$ with unit direction $v \in \unitsphere{n-1} =\{ v \in \bbR{n} \mid \norm{v}=1 \}$ is
\begin{equation}
    \forall x \in \bbR{n}, \quad B^v(x) = -\inner{x}{v},
\end{equation}
which coincides, up to a sign, with the inner product. Hence, the Busemann function provides an intrinsic generalization of the inner product to manifolds. Up to an additive constant, it is equivalent among asymptotic geodesic rays \citep[II. Cor.~8.20]{bridson2013metric}, independent of the starting point. Consequently, we write $B^v(x)$ for the ray that emanates from the origin $e \in \hyperspace{n}$ with unit velocity $v \in \unitsphere{n-1} \subset T_{e}\hyperspace{n}$. For $v \in \unitsphere{n-1}$ and $x \in \hyperspace{n}$, closed forms of the Poincaré and Lorentz Busemann functions \citep[Prop.~9]{bonet2025sliced} are
\begin{align}
    \label{eq:poincare-busemann}
    &\pball{n}: \ B^{v}(x)
    = \frac{1}{\sqrt{-K}} \log\left( 
    \frac{\norm{v - \sqrt{-K} x}^{2}}{1 + K\norm{x}^{2}} 
    \right), \\
    \label{eq:lorentz-busemann}
    &\lorentz{n}: \ B^{v}(x)
    = \frac{1}{\sqrt{-K}} \log\left(
    \sqrt{-K} \left(x_t - \inner{x_s}{v}\right)
    \right).
\end{align}

\begin{figure}[t!]
\centering
\includegraphics[width=\linewidth,trim=0 0cm 0 0]{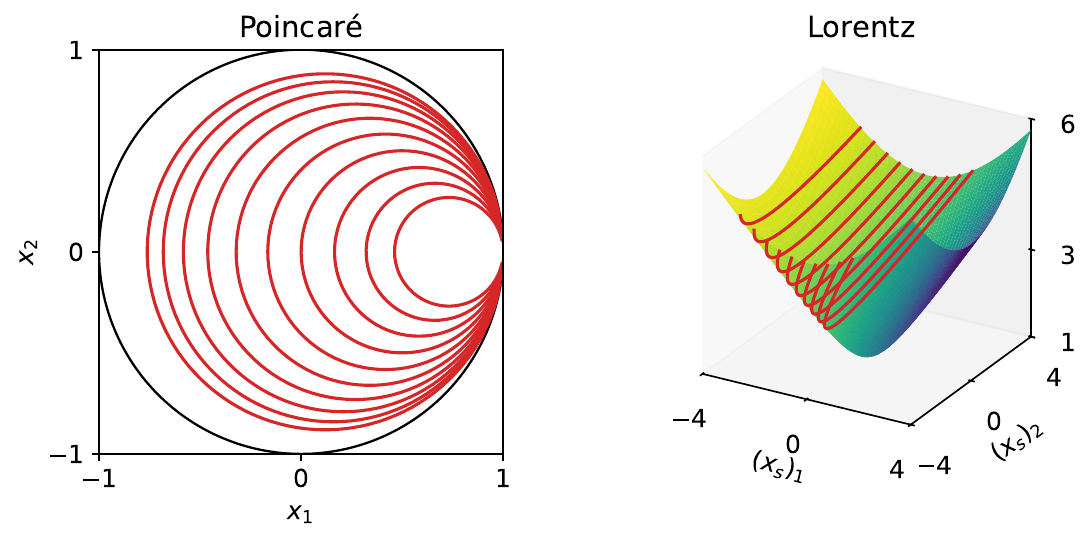}
\caption{Illustration: red curves are different horospheres of $B^{v}$.}
\label{fig:horospheres}
\vspace{-5mm}
\end{figure}

\textbf{Horosphere.}
The level sets of a Busemann function are called \emph{horospheres}, the hyperbolic counterpart of Euclidean hyperplanes. In Euclidean space, for a unit direction $v$, the hyperplanes $H^{v}_{\tau}=\{ x \in \bbR{n} \mid \inner{x}{v}=\tau \}$ with $\tau \in \bbRscalar$ are parallel. Analogously, for fixed $v$, the horospheres $H^{v}_{\tau}=\{ x \in \hyperspace{n} \mid B^{v}(x)=\tau \}$ are equidistant (see \cref{thm:distance-horospheres}). \cref{fig:horospheres} illustrates such horospheres, and \cref{tab:euclid-hyperbolic} summarizes the correspondence of the above concepts between Euclidean and hyperbolic geometries.

\begin{table}[t]
\centering
\caption{Generalization: Euclidean vs. hyperbolic.}
\label{tab:euclid-hyperbolic}
\resizebox{\linewidth}{!}{
\begin{tabular}{cc}
\toprule
Euclidean & Hyperbolic \\
\midrule
Straight line & Geodesic \\
Parallel lines & Asymptotic geodesic rays \\
Inner product $\inner{v}{x}$ & Busemann function $-B^{v}(x)$ \\
Hyperplane & Horosphere \\
Parallel hyperplanes with $v \in \unitsphere{n-1}$ & Horospheres with $v \in \unitsphere{n-1} \cap T_{e}\hyperspace{n}$ \\
\bottomrule
\end{tabular}
}
\end{table}

\textbf{Riemannian \& gyrovector operators.}
The hyperbolic space $\hyperspace{n}$ provides closed-form expressions for standard Riemannian operators. It also supports a gyrovector structure that extends vector space operations to hyperbolic geometry. On the Poincaré ball $\pball{n}$, this structure corresponds to the Möbius gyrovector space \citep[Ch.~6.14]{ungar2022analytic}, denoted as $\{\pball{n}, \Moplus, \Modot\}$. As shown by \citet[Props.~24 and 25]{chen2025gyrobnextension}, the Lorentz model also admits a closed-form gyrovector space, denoted as $\{\lorentz{n}, \Loplus, \Lodot\}$. Please refer to \cref{app:subsec:hyperbolic-geometry} for detailed expressions.

\section{Busemann multinomial logistic regression}
We begin by reformulating the Euclidean \glsfull{MLR}, then lift it to hyperbolic space via the Busemann function, introducing \emph{Busemann MLR (BMLR)}. We also present a point-to-hyperplane interpretation. Finally, we compare BMLR with existing hyperbolic MLRs, highlighting our advantages in geometric fidelity, parameterization, and computational efficiency.

\subsection{Formulation}
The Euclidean MLR $\softmax(Ax+b)$ computes the multinomial probability for each class $k \in \{1,\ldots, C\}$ given an input $x \in \bbR{n}$. It admits the inner product form:
\begin{equation} \label{eq:euc-mlr}
\forall k,\ p(y=k\mid x) = \frac{\exp(\inner{a_k}{x} + b_k)}{\sum_{j=1}^C \exp(\inner{a_j}{x} + b_j)},
\end{equation}
where $a_k \in \bbR{n}$ and $b_k \in \bbRscalar$ are the weight and bias for the class $k$. We write $p(y=k \mid x) \propto \exp\left(u_k(x)\right)$ with $u_k(x) = \inner{a_k}{x} + b_k$. Decomposing the weight vector into a magnitude $\alpha_k = \norm{a_k} > 0$ and a unit direction $v_k = \frac{a_k}{\norm{a_k}} \in \unitsphere{n-1}$, each logit is
\begin{equation}
\label{eq:vk-decomp}
u_k(x)= \alpha_k \inner{v_k}{x} + b_k.
\end{equation}

As shown in \cref{tab:euclid-hyperbolic}, the Busemann function naturally generalizes the Euclidean inner product. Analogously to \cref{eq:vk-decomp}, we define the hyperbolic logits via the Busemann function, yielding BMLR:
\begin{align}
\forall k,\ p(y=k\mid x) 
&= \frac{\exp(u_k(x))}{\sum_{j=1}^C \exp(u_j(x))}, \\
\label{eq:b-logits}
u_k(x) &= -\alpha_k B^{v_k}(x) + b_k,
\end{align}
with $\alpha_k > 0$, $v_k \in \unitsphere{n-1}$, and $b_k \in \bbRscalar$ as parameters.

The following results show that, as $K \to 0^{-}$, both the Poincaré and Lorentz BMLRs reduce to the Euclidean MLR.

\begin{theorem}[Limits of BMLRs] \label{thm:limits-bmlr}
As $K\to 0^{-}$, the hyperbolic Busemann functions converge to the Euclidean inner product:
\begin{align}
\text{(Poincaré)} \quad & B^{v}(x) \xrightarrow{K\to 0^{-}} -2\inner{v}{x}, \\
\label{eq:lorentz-b-limits}
\text{(Lorentz)} \quad & B^{v}(x) \xrightarrow{K\to 0^{-}} -\inner{v}{x_s}.
\end{align}
The hyperbolic BMLRs converge to the Euclidean MLR:
\begin{align}
\text{(Poincaré)} \quad &u_k(x) \xrightarrow{K\to 0^{-}} 2\alpha_k\inner{v_k}{x} + b_k, \\
\text{(Lorentz)} \quad &u_k(x) \xrightarrow{K\to 0^{-}} \alpha_k\inner{v_k}{x_s} + b_k,
\end{align}
\end{theorem}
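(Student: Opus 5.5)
The plan is a direct first-order expansion in $\epsilon=\sqrt{-K}\to 0^{+}$ of the closed forms \cref{eq:poincare-busemann,eq:lorentz-busemann}, holding the point $x$ fixed. For the Poincaré ball, $x$ is a fixed vector in $\bbR{n}$; since $\pball{n}$ grows to all of $\bbR{n}$ as $K\to 0^{-}$, $x$ eventually lies in the ball. For the Lorentz model, $x$ has to move with $K$, because $\lorentz{n}$ changes with $K$. So I will fix the spatial part $x_s$ and set $x_t=\sqrt{1/|K|+\norm{x_s}^2}$. The second half of the statement, on the logits, is then immediate: $u_k(x)=-\alpha_k B^{v_k}(x)+b_k$ with $\alpha_k,b_k,v_k$ independent of $K$, so the logit limits follow from the Busemann limits by linearity.

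\textbf{Poincaré case.} Expand the numerator using $\norm{v}=1$:
\begin{equation*}
\norm{v-\epsilon x}^2 = 1-2\epsilon\inner{v}{x}+\epsilon^2\norm{x}^2 .
\end{equation*}
The denominator is $1-\epsilon^2\norm{x}^2$. Hence
\begin{equation*}
B^v(x)=\frac{1}{\epsilon}\Bigl[\log\bigl(1-2\epsilon\inner{v}{x}+\epsilon^2\norm{x}^2\bigr)-\log\bigl(1-\epsilon^2\norm{x}^2\bigr)\Bigr].
\end{equation*}
Using $\log(1+z)=z+O(z^2)$, the first logarithm equals $-2\epsilon\inner{v}{x}+O(\epsilon^2)$ and the second is $O(\epsilon^2)$. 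Dividing by $\epsilon$ gives $-2\inner{v}{x}+O(\epsilon)$. Equivalently, one can apply L'Hôpital's rule in $\epsilon$ at $\epsilon=0$.

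\textbf{Lorentz case.} The argument of the logarithm is
\begin{equation*}
\epsilon x_t-\epsilon\inner{x_s}{v}=\sqrt{1+\epsilon^2\norm{x_s}^2}-\epsilon\inner{x_s}{v},
\end{equation*}
using $x_t=\sqrt{\epsilon^{-2}+\norm{x_s}^2}$. This equals $1-\epsilon\inner{x_s}{v}+O(\epsilon^2)$. Taking the logarithm and dividing by $\epsilon$ gives $-\inner{v}{x_s}+O(\epsilon)$, which is \cref{eq:lorentz-b-limits}.

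The only real obstacle is the Lorentz case: one has to fix a sensible meaning for "$x$ fixed" when the manifold itself depends on $K$. The natural choice is the spatial-coordinate parametrization above, or equivalently fixing $\epsilon x_t$ only through the constraint. The Taylor expansions themselves are routine.
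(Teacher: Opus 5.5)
Your proposal is correct and follows essentially the same route as the paper. The paper also sets $\kappa=\sqrt{-K}$ and expands $\log(1+z)=z+O(z^2)$ in the Poincaré closed form. In the Lorentz case it uses the hyperboloid constraint to write $\kappa x_t=\sqrt{1+\kappa^2\|x_s\|^2}$, which amounts to holding $x_s$ fixed, and it reads off the logit limits by linearity. Your explicit remark about what ``$x$ fixed'' means on the $K$-dependent hyperboloid is a useful clarification that the paper leaves implicit.
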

\begin{proof}
    The proof is provided in \cref{app:prf:thm:limits-bmlr}.
\end{proof}

\begin{remark}[Intuition]
On the Poincaré ball, letting $K \to 0^{-}$ recovers Euclidean geometry \citep[App.~A.4.2]{skopek2020mixed}. For the Lorentz model, as $K \to 0^{-}$, the temporal coordinate diverges while the spatial component approaches $\bbR{n}$, making $\lorentz{n}$ converge to a Euclidean space. Consistently, the Poincaré and Lorentz Busemann functions and the associated BMLR logits reduce to their Euclidean counterparts, providing a natural generalization of Euclidean MLR.
\end{remark}

\begin{table*}[t]
    \centering
    \caption{Comparison of $C$-class MLR. In Dist, \greenbf{Real} means the point-to-hyperplane distance is the real distance, obtained by $\inf_{y \in H} \dist(x,y)$, where $H$ is a hyperplane and $\dist$ is the geodesic distance; \redbf{Pseudo} denotes a surrogate that coincides with the real distance only in Euclidean geometry. Compact params indicate whether each logit avoids an additional manifold-valued parameter. Batch efficiency indicates whether the MLR can avoid inefficient per-class loops in implementation (see \cref{app:sec:mlr-comparison}). In \#Params, we highlight the heaviest in \redbf{red}. In FLOPs, we mark the slowest in \redbf{red} and the fastest in \greenbf{green}.}
    \label{tab:logit-comparison}
    \resizebox{\linewidth}{!}{
    \begin{tabular}{cccccccc}
        \toprule
        Method & Logit $u_k(x), \ \forall k \in \{1,\ldots, C\}$ & Space & Dist & \#Params & \makecell{Compact \\ params} & FLOPs & \makecell{Batch \\ efficiency}\\
        \midrule
        Euclidean MLR & $\inner{a_k}{x} + b_k$, with $a_k \in \bbR{n}, b_k \in \bbRscalar$ & $\bbR{n}$ & \greenbf{Real} & $C(n+1)$ & \cmark & $C(2n)$ & \cmark \\
        \midrule
        \makecell{Poincaré MLR \\ \citep[Eq. (25)]{ganea2018hyperbolic}} & \makecell{$\displaystyle \frac{\lambda_{p_k}^{K} \norm{a_k}}{\sqrt{-K}} \sinh^{-1}\left(\frac{2\sqrt{-K} \inner{-p_k \Moplus x}{a_k}}{\left(1 + K \norm{-p_k \Moplus x}^{2}\right) \norm{a_k}}\right)$, \\ with $p_k \in \pball{n}, a_k \in T_{p_k} \pball{n}$} & $\pball{n}$ & \greenbf{Real} & $C(2n)$ & \xmark & $C(19n+29)$ & \xmark \\
        \midrule
        \makecell{Poincaré MLR \\ \citep[Eq. (6)]{shimizu2021hyperbolic}} & \makecell{$\displaystyle \frac{2}{\sqrt{-K}} \alpha_k \sinh^{-1}\left( \alpha - \beta \right)$, \\ $\alpha = \lambda_{x}^{K} \sqrt{-K} \inner{x}{v_k} \cosh\left(2\sqrt{-K} b_k\right)$, \\ $\beta = \left(\lambda_{x}^{K} - 1\right) \sinh\left(2\sqrt{-K} b_k\right)$, \\ with $\alpha_k > 0, v_k \in \unitsphere{n-1}, b_k \in \bbRscalar$} & $\pball{n}$ & \greenbf{Real} & $C(n+2)$ & \cmark & $C(4n+52)$ & \cmark \\
        \midrule
        \makecell{Pseudo-Busemann MLR \\ \citep[Cor. 4.3]{nguyen2025neural}} & \makecell{$\displaystyle - d(x, p_k) \frac{B^{v_k}\left(- p_k \Moplus x\right)}{\norm{- p_k \Moplus x}}$, \\ with $p_k \in \pball{n}, v_k \in \unitsphere{n-1}$} & $\pball{n}$ & \redbf{Pseudo} & \redbf{$C(2n)$} & \xmark & \redbf{$C(19n+34)$} & \xmark \\
        \midrule
        \makecell{ Lorentz MLR \\ \citep[Eq. (12)]{bdeir2024fully}} & 
        \makecell{$\displaystyle \frac{1}{\sqrt{-K}} \sign(\alpha) \beta\left|\sinh ^{-1}\left(\sqrt{-K} \frac{\alpha}{\beta}\right)\right|$, \\
        $\alpha=\cosh\left(\sqrt{-K} b_k\right)\inner{z_k}{x_s}-\sinh\left(\sqrt{-K} b_k\right)$, \\
        $\beta=\sqrt{ \norm{\cosh (\sqrt{-K} b_k) z_k }^2-(\sinh (\sqrt{-K} b_k)\norm{z_k})^2}$, \\ with $z_k \in \bbR{n},\ b_k \in \bbRscalar$}
        & $\lorentz{n}$ & \greenbf{Real} & $C(n+1)$ & \cmark & $C(4n+52)$ & \cmark \\
        \midrule
        \rowcolor{HilightColor} BMLR & \Gape[0pt][2pt]{\makecell{$-\alpha_k B^{v_k}(x) + b_k$, \\ with $\alpha_k > 0, v_k \in \unitsphere{n-1}, b_k \in \bbRscalar$}} & \Gape[0pt][2pt]{\makecell{$\pball{n}$ \\ $\lorentz{n}$}} & \greenbf{Real} & $C(n+2)$ & \cmark & \makecell{ $\pball{n}: C(6n+12)$ \\ \greenbf{$\lorentz{n}: C(2n+12)$} } & \cmark \\
        \bottomrule
    \end{tabular}
    }
\end{table*}

\subsection{Geometric interpretation}
As shown by \citet[Sec. 5]{lebanon2004hyperplane}, the Euclidean MLR can be reformulated by point-to-hyperplane distances:
\begin{equation}\label{eq:euc-point-to-hyperplane} 
    u_k(x) = \sign\left(\inner{a_k}{x} + b_k\right) \norm{a_k} 
    \dist\left(x, H_{a_k,b_k}\right),
\end{equation}
with $a_k \in \bbR{n}$ and $b_k \in \bbRscalar$. Here, $H_{a_k,b_k} = \{x \in \bbR{n}: \inner{a_k}{x} + b_k = 0\}$ denotes the margin hyperplane, and the point-to-hyperplane distance is 
\( \dist\left(x, H_{a_k,b_k}\right) = \frac{\left|\inner{a_k}{x} + b_k\right|}{\norm{a_k}} \). We next show that our BMLR also admits such an interpretation through point-to-horosphere distances.

We first introduce these distances in Hadamard spaces, metric spaces with favorable properties (see \cref{app:def:hadamard}), where Euclidean and hyperbolic geometries arise as special cases.

\begin{theorem}[Hadamard horosphere distance]\label{thm:distance-horospheres}
Let $(\calX,\dist)$ be a Hadamard space, and $B^\gamma: \calX \to \bbRscalar$ be the Busemann function associated with a geodesic ray $\gamma:[0,\infty)\to\calX$. For any $\tau_1,\tau_2 \in \bbRscalar$, define the horospheres by
\begin{equation}
    H^\gamma_{\tau_i}=\left\{ x \in \calX \mid B^\gamma(x)=\tau_i \right\}, \ i=1,2.
\end{equation}
The distance between these horospheres is constant:
\begin{equation} \label{eq:equidistance}
\dist\left(H^\gamma_{\tau_1}, H^\gamma_{\tau_2}\right)
=\dist\left(H^\gamma_{\tau_2}, H^\gamma_{\tau_1}\right)
=\left|\tau_2-\tau_1\right|.
\end{equation}
In particular, the point-to-horosphere distance is
\begin{equation}
    \dist\left(x, H^\gamma_\tau\right)=\left|B^\gamma(x) - \tau\right|, \ \forall x \in \calX.
\end{equation}
\end{theorem}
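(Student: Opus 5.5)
The argument has two halves: a lower bound from the $1$-Lipschitz property of $B^\gamma$, and an upper bound from an explicit curve along which $B^\gamma$ changes at unit rate. Throughout, $\dist(A,B)=\inf_{a\in A,b\in B}\dist(a,b)$ and $\dist(x,H)=\inf_{y\in H}\dist(x,y)$.

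\emph{Lower bound.} For any $x,y$ and $t\ge0$, the triangle inequality gives
\[
\dist(x,\gamma(t))-t\le \dist(x,y)+\dist(y,\gamma(t))-t .
\]
Letting $t\to\infty$ and using that the limit exists (as recalled from \citep[II. Lem.~8.18]{bridson2013metric}), we get $|B^\gamma(x)-B^\gamma(y)|\le\dist(x,y)$. Hence if $x\in H^\gamma_{\tau_1}$ and $y\in H^\gamma_{\tau_2}$, then $\dist(x,y)\ge|\tau_1-\tau_2|$. This yields $\dist(x,H^\gamma_\tau)\ge|B^\gamma(x)-\tau|$, and taking the infimum gives the lower bound on the horosphere distance.

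\emph{Upper bound.} Given $x$, I will build a geodesic ray $\sigma_x$ from $x$ asymptotic to $\gamma$ such that $B^\gamma(\sigma_x(s))=B^\gamma(x)-s$ for all $s\ge0$. In a Hadamard space (complete CAT(0)), such a ray exists by \citep[II. Prop.~8.2]{bridson2013metric}: it is the limit of the geodesic segments $[x,\gamma(t)]$. By \citep[II. Cor.~8.20]{bridson2013metric}, $B^\gamma$ and $B^{\sigma_x}$ differ by a constant, and $B^{\sigma_x}(\sigma_x(s))=-s$ directly from the definition. Therefore
\[
B^\gamma(\sigma_x(s))=B^\gamma(x)-s .
\]
Now split into cases.
\begin{itemize}
\item If $\tau\le B^\gamma(x)$, take $s=B^\gamma(x)-\tau$. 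Then $\sigma_x(s)\in H^\gamma_\tau$ and $\dist(x,\sigma_x(s))=s$, so the bound is attained.
\item If $\tau>B^\gamma(x)$, we must move away from the boundary point. I would use the segments $[\gamma(t),x]$ extended backwards. Hadamard spaces need not be geodesically complete, so instead I will use the intermediate value theorem. Along the geodesic from $x$ to a far point $\gamma(t)$, traversed in reverse, $B^\gamma$ is continuous.
\end{itemize}

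\emph{Main obstacle.} The second case is the hard step, since the existence of points with larger Busemann value at the right distance is not automatic.

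An alternative for that case is to choose a point $y\in H^\gamma_\tau$ and look along the ray $\sigma_y$. The point $\sigma_y(s)$ with $s=\tau-B^\gamma(x)$ lies in $H^\gamma_{B^\gamma(x)}$, but not necessarily at $x$, so this fails in general. I will therefore argue as follows, noting that $B^\gamma$ is unbounded above along $\gamma$ reversed only if $\gamma$ extends.

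For the hyperbolic models of interest, geodesic completeness holds, so I extend $\sigma_x$ to a full geodesic line. Along that line, $B^\gamma(\sigma_x(s))=B^\gamma(x)-s$ also holds for $s<0$, since in $\hyperspace{n}$ the Busemann function is affine along geodesics asymptotic to $\gamma$. Taking $s=B^\gamma(x)-\tau<0$ then gives the attainment.

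\emph{Conclusion.} Combining the two bounds gives $\dist(x,H^\gamma_\tau)=|B^\gamma(x)-\tau|$. For the distance between horospheres, the lower bound holds for every pair, and any $x\in H^\gamma_{\tau_1}$ attains it by the point-to-horosphere formula. This gives \cref{eq:equidistance}, and symmetry follows because the expression $|\tau_2-\tau_1|$ is symmetric in $\tau_1,\tau_2$.
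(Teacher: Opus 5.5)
Your argument is sound, and where it overlaps with the paper's it takes a somewhat different route. The lower bound is the same $1$-Lipschitz argument: you derive it from the triangle inequality, while the paper reads it off a characterization of Busemann functions. For the upper bound, the paper fixes $x\in H^\gamma_{\tau_2}$ with $\tau_2>\tau_1$ and uses the characterization that $t\mapsto\pi_{\{B^\gamma\le B^\gamma(x)-t\}}(x)$ is a unit-speed geodesic ray. It then argues by contradiction that the projection at time $\tau_2-\tau_1$ lands on $H^\gamma_{\tau_1}$ rather than inside the horoball. You instead take the ray $\sigma_x$ from $x$ asymptotic to $\gamma$ (Bridson--Haefliger II.8.2) and use II.8.20 to obtain $B^\gamma(\sigma_x(s))=B^\gamma(x)-s$ exactly. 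It is the same ray, but your version gives the level directly from two standard facts, with no projection machinery and no contradiction step.

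The obstacle you isolated for $\tau>B^\gamma(x)$ is a defect of the statement, not of your proof: in a general Hadamard space the point-to-horosphere formula fails there. Take the closed convex set $\calX=\{p\in\bbR{2}: p_2\ge 0,\ p_1+p_2\ge 0\}$ with the Euclidean metric and $\gamma(t)=(t,0)$. Then $B^\gamma(p)=-p_1$ and every horosphere is nonempty. Yet for $x=(0,0)$ we get $H^\gamma_1=\{(-1,s): s\ge 1\}$ and $\dist(x,H^\gamma_1)=\sqrt{2}\neq 1$.

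The paper's proof does not handle this case either. It only proves $\dist(x,H^\gamma_{\tau_1})=\tau_2-\tau_1$ for $x$ on the \emph{higher} horosphere, and its final step swaps $\tau_1,\tau_2$ only for the symmetric set-to-set distance. So your appeal to geodesic completeness is exactly the missing hypothesis, and it covers what the paper actually uses, namely the corollary on $\pball{n}$ and $\lorentz{n}$. The unit-rate formula on the backward extension follows by applying II.8.20 to the subray starting at $\sigma_x(s_0)$, $s_0<0$, which is again asymptotic to $\gamma$.

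Two cleanups:
\begin{itemize}
\item Drop the intermediate-value and $\sigma_y$ detours.
\item In the set-to-set conclusion, take $x$ on the horosphere with the larger level, so that only your first case is used. The equidistance formula then holds in every Hadamard space, provided the horospheres involved are nonempty. For example, on $[0,\infty)$ with $\gamma(t)=t$, one has $H^\gamma_\tau=\emptyset$ for $\tau>0$.
\end{itemize}
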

\begin{proof}
A Hadamard space does not need to be a manifold. It extends manifolds with nonpositive curvature to the broader setting of metric spaces. A brief review of metric geometry and the complete proof are given in \cref{app:subsec:metric-geometry,app:prf:thm:distance-horospheres}.
\end{proof}

\begin{corollary}[Point-to-horosphere distance]\label{cor:poincare-lorentz}
In a hyperbolic space $\hyperspace{n} \in \{ \pball{n}, \lorentz{n} \}$ with nonpositive curvature $K \leq 0$, the point-to-horosphere distance is
\begin{equation}
    \dist\left(x, H^v_\tau\right)=\left|B^v(x)-\tau\right|,
\end{equation}
where $H^v_\tau=\left\{ x \mid B^v(x)=\tau \right\}$ denotes the horosphere with respect to the direction $v \in \unitsphere{n-1}$.
\end{corollary}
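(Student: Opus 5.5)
The plan is to deduce \cref{cor:poincare-lorentz} directly from \cref{thm:distance-horospheres}. We check that its hypotheses hold for $\pball{n}$ and $\lorentz{n}$, and that the Busemann function $B^v$ in the corollary is one of the functions $B^\gamma$ covered by the theorem. Then the identity $\dist(x,H^v_\tau)=|B^v(x)-\tau|$ is exactly the particular case stated at the end of \cref{thm:distance-horospheres}.

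\textbf{Step 1: Hadamard structure.} For $K<0$, both $\pball{n}$ and $\lorentz{n}$ are complete, simply connected Riemannian manifolds of constant sectional curvature $K\le 0$. By the Cartan--Hadamard theorem, with the geodesic distance they are complete $\CAT(0)$ spaces, i.e.\ Hadamard spaces in the sense of \cref{app:def:hadamard}. Completeness holds because each model is isometric to the standard hyperbolic space. Simple connectivity holds because both models are diffeomorphic to $\bbR{n}$: for the ball this is clear, and for the hyperboloid the map $x\mapsto x_s$ is a diffeomorphism onto $\bbR{n}$.

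For the degenerate value $K=0$, the statement is read as flat Euclidean space $\bbR{n}$. This is trivially Hadamard, and the Busemann function is $B^v(x)=-\inner{x}{v}$. In that case one can also check the claim directly, since it is just the Euclidean point-to-hyperplane distance with $\norm{v}=1$. I would mention this case briefly so that the inclusive bound $K\le0$ is covered.

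\textbf{Step 2: Identify $B^v$ with $B^\gamma$.} Take $\gamma(t)=\rieexp_e(tv)$, the unit-speed geodesic ray from the origin $e$ with initial velocity $v\in\unitsphere{n-1}$. Here $v$ has unit norm with respect to the Riemannian metric at $e$, which is the convention under which the closed forms \cref{eq:poincare-busemann,eq:lorentz-busemann} are stated. By definition, $B^v=B^\gamma$. Hence $H^v_\tau=H^\gamma_\tau$, and \cref{thm:distance-horospheres} applies verbatim.

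\textbf{Main obstacle.} The only real subtlety is the normalization of $v$ in Step 2. On the Poincaré ball, the conformal factor at $\Rzero$ is $\lambda_\Rzero^K=2$. A Euclidean unit vector is therefore not a Riemannian unit vector, and the ray must be parametrized so that it has unit speed; otherwise the Busemann function is rescaled and the identity would pick up a constant factor. I would settle this by checking directly that the closed form \cref{eq:poincare-busemann} is $1$-Lipschitz with gradient of Riemannian norm one, i.e.\ it is a genuine Busemann function of a unit-speed ray. The analogous check for \cref{eq:lorentz-busemann} is immediate.

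Once this is verified, the corollary follows from \cref{thm:distance-horospheres} with no further computation.
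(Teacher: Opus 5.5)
Your route is the paper's. The corollary has no separate proof there: it is read off from \cref{thm:distance-horospheres}, using that $\pball{n}$ and $\lorentz{n}$ are Hadamard spaces (your Step~1 is fine) and that the cited closed forms are Busemann functions of unit-speed rays from the origin.

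The problem is Step~2 and the way you propose to settle the normalization. On $\pball{n}$, the $v$ in \cref{eq:poincare-busemann} is a \emph{Euclidean} unit vector; that is what makes $v/\sqrt{-K}$ an ideal point on the boundary sphere. Its Riemannian norm at $\Rzero$ is therefore $\lambda^K_{\Rzero}=2$, and $t\mapsto\rieexp_{\Rzero}(tv)$ has speed $2$. The unit-speed ray actually encoded by \cref{eq:poincare-busemann} is $\gamma(t)=\rieexp_{\Rzero}(tv/2)=\tanh(\sqrt{-K}\,t/2)\,v/\sqrt{-K}$, along which one checks $B^v(\gamma(t))=-t$. Your assertion that $v$ is Riemannian-unit holds only for $\lorentz{n}$, where the metric on $T_{\Lzero}\lorentz{n}=\{[0,w^\top]^\top\}$ is Euclidean.

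The check you propose does not close this gap. Being $1$-Lipschitz with Riemannian gradient of norm one does \emph{not} characterize Busemann functions. The signed distance to a totally geodesic hyperplane in $\hyperspace{n}$ is smooth, $1$-Lipschitz and has unit gradient. Yet it is not convex, and its level sets are equidistant hypersurfaces (the zero level set is even totally geodesic), not horospheres. So your ``i.e.'' is false, and passing this test does not license feeding the function into \cref{thm:distance-horospheres}.

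To stay on the paper's route, establish Busemann-ness directly. Either cite \citep[Prop.~9]{bonet2025sliced} as the paper does, or reduce to $K=-1$ via the dilation $x\mapsto\sqrt{-K}\,x$ (which multiplies distances by $\sqrt{-K}$) and compute $\lim_{t\to\infty}\left(\dist(x,\gamma(t))-t\right)$ for the ray above.

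Alternatively, your unit-gradient computation already proves the corollary on its own, bypassing \cref{thm:distance-horospheres}. The Lipschitz bound gives $\dist(x,H^v_\tau)\ge|B^v(x)-\tau|$. For the reverse, follow the integral curve of $\pm\nabla B^v$, a complete unit vector field along which $B^v$ changes at unit rate, for time $|B^v(x)-\tau|$. This lands on $H^v_\tau$ via a curve of exactly that length. That argument is valid and more elementary, but it is a different one from the reduction you announce.
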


A Euclidean hyperplane $H_{a,b}$ can be parameterized as $H_{v,\alpha,b}=\left\{ x \in \bbR{n} : \alpha \inner{v}{x}+b=0 \right\}$ with $v \in \unitsphere{n-1}$, $\alpha>0$, and $b \in \bbRscalar$. Similarly, we parameterize a hyperbolic horosphere as
\begin{equation} \label{eq:horosphere_param}
    H_{v,\alpha,b} = \left\{ x \in \hyperspace{n} \mid -\alpha B^v(x) + b = 0 \right\},
\end{equation}
with $v \in \unitsphere{n-1}$, $\alpha>0$, and $b \in \bbRscalar$. With this parameterization, we extend \cref{eq:euc-point-to-hyperplane} to hyperbolic spaces:
\begin{equation} \label{eq:horosphere-mlr}
    u_k(x) =  \sign_k \alpha_k \dist\left(x, H_{v_k,\alpha_k,b_k}\right)
\end{equation}
where $\sign_k = \sign\left(-\alpha_k B^{v_k}(x)+b_k\right)$, and $\{ v_k \in \unitsphere{n-1}, \alpha_k>0, b_k \in \bbRscalar \}$ are parameters for class $k$. By \cref{cor:poincare-lorentz}, the point-to-horosphere distance is
\begin{equation}\label{eq:point-to-horosphere-dist-param}
    \dist\left(x, H_{v,\alpha,b}\right) = \frac{\left|- \alpha B^v(x) + b\right|}{\alpha}.
\end{equation}
By \cref{eq:point-to-horosphere-dist-param}, \cref{eq:horosphere-mlr} equals the exact BMLR logit in \cref{eq:b-logits}.

\begin{remark}[Generality]
Since $B^v(x)=-\inner{v}{x}$ in Euclidean geometry, \cref{eq:horosphere_param,eq:point-to-horosphere-dist-param,eq:horosphere-mlr} naturally generalize to their Euclidean counterparts. We also acknowledge \citet[Eq. (2) and Prop. 3.1]{fan2023horospherical}, who used horospheres and point-to-horosphere distances to construct a hyperbolic SVM. However, they considered only the unit Poincaré ball with the specific curvature $K=-1$, which is a special case of our \cref{eq:horosphere_param,eq:point-to-horosphere-dist-param}.
\end{remark}

\subsection{Comparison}
Based on the point-to-hyperplane reformulation in \cref{eq:euc-point-to-hyperplane}, recent work extended MLR to the Poincaré \citep{ganea2018hyperbolic,shimizu2021hyperbolic,nguyen2025neural} and Lorentz \citep{bdeir2024fully} models. \citet[Sec. 3.1]{ganea2018hyperbolic} introduced the first Poincaré MLR by replacing the Euclidean point-to-hyperplane distance with its hyperbolic counterpart, where the hyperplane is defined by geodesics and the resulting distance is the real point-to-hyperplane distance, obtained as an infimum over the hyperplane. However, the formulation is not batch efficient (see \cref{app:sec:mlr-comparison}). It also requires per-class parameters $a_k \in T_{p_k} \pball{n}$ and $p_k \in \pball{n}$, which leads to over-parameterization. \citet[Sec. 3.1]{shimizu2021hyperbolic} alleviated such issues via re-parameterization. \citet[Sec. 4.3]{bdeir2024fully} further developed a Lorentz MLR, but its hyperplanes are defined by the ambient Minkowski space, which is tailored to the Lorentz model and does not fully respect the intrinsic hyperbolic geometry. Moreover, \citet[Cor. 4.3]{nguyen2025neural} proposed a Poincaré MLR based on the Busemann function. We refer to it as Pseudo-Busemann MLR, as the induced point-to-hyperplane distance is pseudo, coinciding with the real point-to-hyperplane distance only in Euclidean geometry. It also suffers from over-parameterization and is not batch efficient.

As summarized in \cref{tab:logit-comparison}\footnote{Relative to \citep[Def. 4.2, Cor. 4.3, and App. B.1.2]{nguyen2025neural}, the Pseudo-Busemann MLR written here includes an additional sign $-$; this is intentional and matches their official implementation.}, BMLR unifies advantages that prior hyperbolic MLRs offer only partially. In particular, BMLR respects the real point-to-horosphere distance, uses compact parameters without an additional manifold-valued point, attains the lowest FLOPs on $\lorentz{n}$ and a competitive cost on $\pball{n}$, and supports batch-efficient computation. On $\lorentz{n}$, its FLOPs are even close to those of the Euclidean MLR.

\section{Busemann fully connected layer}
We first revisit the Euclidean \glsfull{FC} layer from a geometric perspective, then propose the hyperbolic \emph{Busemann FC (BFC)} layer with its manifestations in the Poincaré and Lorentz models.

\subsection{Formulation}
\textbf{Euclidean FC layers.} An FC affine transformation $\mathcal{F}: \bbR{n} \ni x \mapsto y=Ax+b \in \bbR{m}$ can be expressed element-wise as $y_k = \inner{a_k}{x} + b_k$ with $a_k \in \bbR{n}$ and $b_k \in \bbRscalar$. As shown by \citet[Sec. 3.2]{shimizu2021hyperbolic} and \citet[Sec. 3.1]{chen2025building}, the LHS $y_k$ can be interpreted as the signed distance from $y$ to the hyperplane passing through the origin and orthogonal to the $k$-th axis of the output space. Hence, the FC layer can be expressed as
\begin{equation} \label{eq:euc-fc-p2h}
    \bar{\dist}\left(y, H_{e_k, 0}\right) 
    = \inner{a_k}{x} + b_k, \ \forall 1 \leq k \leq m,
\end{equation}
where $\bar{\dist}\left(y, H_{e_k, 0}\right) = \sign\left(\inner{e_k}{y}\right) \dist\left(y, H_{e_k, 0}\right)$ is the signed point-to-hyperplane distance, and $H_{e_k, 0}=\{y \in \bbR{m} \mid \inner{e_k}{y} = 0\}$ is a hyperplane with $e_k \in \bbR{m}$ as the vector whose $k$-th element is 1 and all others are 0.

\textbf{Lifting to hyperbolic space.}
To extend \cref{eq:euc-fc-p2h} into hyperbolic space, the RHS can be readily replaced by \cref{eq:b-logits}, as it generalizes the term $\inner{a_k}{x} + b_k$. For the LHS, a natural idea is to use the signed point-to-horosphere distance. However, as detailed in \cref{app:subsec:busemann-fc-p2h-distance}, this may fail to admit a solution for $y$. We therefore follow the point-to-hyperplane distance in \citep[Thm. 5]{ganea2018hyperbolic} for the Poincaré model and the one in \citep[Eq. (44)]{bdeir2024fully} for the Lorentz model. Given $x \in \hyperspace{n}$, the hyperbolic BFC layer $\calF: \hyperspace{n} \ni x \mapsto y \in \hyperspace{m}$ is given by solving $y$ via the following $m$ equations:
\begin{equation} \label{eq:hyp-fc-p2h}
    \bar{\dist}\left(y, H_{e_k, e}\right) = u_k(x), \ \forall 1 \leq k \leq m,
\end{equation}
where $u_k(x)= -\alpha_k B^{v_k}(x) + b_k$ with $\{ \alpha_k > 0, v_k \in \unitsphere{n-1}, b_k \in \bbRscalar \}$ as parameters. Here, $\bar{\dist} \left( y, H_{e_k,e} \right)$ is the hyperbolic signed distance from $y$ to the hyperplane passing through the origin $e \in \hyperspace{n}$. Next, we show that the above implicit definition has an explicit solution for the output $y$.

\begin{theorem}\label{thm:bfc-poincare}
Given an input $x \in \pball{n}$, the Poincaré BFC layer $\mathcal{F}: \pball{n} \to \pball{m}$ is given by
\begin{equation*}
    y = \frac{\omega}{1 + \sqrt{1 - K \norm{\omega}^{2}}}, \
    \omega =\left[\frac{\sinh \left(\sqrt{-K} u_k(x)\right)}{\sqrt{-K}}\right]_{k=1}^m,
\end{equation*}
where $u_k(x)= -\alpha_k B^{v_k}(x) + b_k$ with $\{ \alpha_k > 0, v_k \in \unitsphere{n-1}, b_k \in \bbRscalar \}$ as parameters for $\{k=1,\ldots,m\}$.
\end{theorem}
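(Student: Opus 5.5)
The plan is to solve the implicit system \cref{eq:hyp-fc-p2h} in closed form. We first make the signed distance $\bar{\dist}(y, H_{e_k,\Rzero})$ explicit on $\pball{m}$, then invert the $m$ coordinate equations simultaneously. Note that the right-hand sides $u_k(x)$ play no role in the geometry; they are simply $m$ given real numbers. So the claim reduces to showing that the map
\begin{equation*}
y \mapsto \left[\bar{\dist}(y,H_{e_k,\Rzero})\right]_{k=1}^m
\end{equation*}
is a bijection $\pball{m}\to\bbR{m}$ whose inverse is the stated formula.

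\textbf{Step 1: explicit signed distance.} Following \citep[Thm.~5]{ganea2018hyperbolic}, the distance from $y$ to the Poincaré hyperplane through $p$ with normal $a$ is
\begin{equation*}
\frac{1}{\sqrt{-K}}\sinh^{-1}\left(\frac{2\sqrt{-K}\,|\inner{-p\Moplus y}{a}|}{(1+K\norm{-p\Moplus y}^2)\norm{a}}\right).
\end{equation*}
For $p=\Rzero$ and $a=e_k$ we have $-\Rzero\Moplus y = y$, so the signed version is
\begin{equation*}
\bar{\dist}(y,H_{e_k,\Rzero}) = \frac{1}{\sqrt{-K}}\sinh^{-1}\left(\frac{2\sqrt{-K}\,y_k}{1+K\norm{y}^2}\right).
\end{equation*}
The sign of $\inner{e_k}{y}=y_k$ matches the sign of the argument of $\sinh^{-1}$, since $1+K\norm{y}^2>0$ on the ball. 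As a sanity check, this reduces to $2y_k$ as $K\to0^-$, consistent with the factor $2$ in \cref{thm:limits-bmlr}.

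\textbf{Step 2: reduce to a vector equation.} Setting this equal to $u_k(x)$ and applying $\sinh$ gives
\begin{equation*}
\frac{2y_k}{1+K\norm{y}^2}=\omega_k, \qquad \omega_k=\frac{\sinh(\sqrt{-K}u_k(x))}{\sqrt{-K}}.
\end{equation*}
In vector form this reads $y = \tfrac{1+K\norm{y}^2}{2}\,\omega$. Hence $y$ is a positive multiple of $\omega$ (or $y=\Rzero$ when $\omega=\Rzero$), and it remains to determine the scalar $r=\norm{y}$.

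\textbf{Step 3: solve the scalar equation.} Taking norms gives $2r = (1+Kr^2)\norm{\omega}$, which is the quadratic
\begin{equation*}
K\norm{\omega}\,r^2 - 2r + \norm{\omega}=0.
\end{equation*}
Its roots are
\begin{equation*}
r=\frac{1\pm\sqrt{1-K\norm{\omega}^2}}{K\norm{\omega}}.
\end{equation*}
Because $K<0$, the admissible root, i.e.\ the one with $r\ge0$, is the "$-$" root. Rationalizing it gives
\begin{equation*}
r=\frac{\norm{\omega}}{1+\sqrt{1-K\norm{\omega}^2}}.
\end{equation*}
We must also confirm $r^2<-1/K$. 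Writing $s=\sqrt{1-K\norm{\omega}^2}\ge1$, one has $-K r^2 = \frac{s^2-1}{(1+s)^2}=\frac{s-1}{s+1}<1$, so $y$ lies in $\pball{m}$. The other root violates this bound, which gives uniqueness. Combining with Step 2, $y=r\,\omega/\norm{\omega}$, which is exactly the stated formula, and the case $\omega=\Rzero$ is consistent with it.

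\textbf{Main obstacle.} The algebra is routine. The delicate step is Step 1: justifying the signed point-to-hyperplane distance with the correct sign convention and curvature scaling for general $K$. Ganea's theorem is stated for the ball of radius $1/\sqrt{-K}$, so this requires translating their $c=-K$ convention carefully.
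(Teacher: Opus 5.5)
Your proposal is correct and follows essentially the same route as the paper: write the signed distance to the coordinate hyperplane through the origin as $\frac{1}{\sqrt{-K}}\sinh^{-1}\bigl(\frac{2\sqrt{-K}y_k}{1+K\norm{y}^2}\bigr)$, invert $\sinh$ to get $2y=(1+K\norm{y}^2)\omega$, and solve a scalar quadratic along the direction of $\omega$. Your root selection is slightly more careful than the paper's. You choose the root by requiring $r\ge 0$ and then check explicitly that $-Kr^2=\frac{s-1}{s+1}<1$, whereas the paper picks the branch by continuity ($y\to\Rzero$ as $\omega\to\Rzero$) and only remarks that one can check $y\in\pball{m}$.
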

\begin{proof}
    The proof, together with $H_{e_k, e}$ and $\bar{\dist}\left(y, H_{e_k, e}\right)$, is provided in \cref{app:prf:thm:bfc-poincare}, which is inspired by \citet[App. D.3]{shimizu2021hyperbolic}.
\end{proof}

\begin{theorem}\label{thm:bfc-lorentz}
Given an input $x \in \lorentz{n}$, the Lorentz BFC layer $\calF: \lorentz{n} \to \lorentz{m}$ is given by 
\begin{equation*}
    y = \begin{bmatrix}
    y_t\\
    y_s
    \end{bmatrix}
    =
    \begin{bmatrix}
    \sqrt{\frac{1}{-K}+\norm{y_s}^{2}} \\[6pt]
    \frac{1}{\sqrt{-K}}\sinh\left(\sqrt{-K} u(x)\right)
    \end{bmatrix}
\end{equation*}
where $u(x)=\left(u_1(x),\ldots,u_m(x)\right)^{\top}$ with $u_k(x)= -\alpha_k B^{v_k}(x) + b_k$. Here, $\{ \alpha_k > 0, v_k \in \unitsphere{n-1}, b_k \in \bbRscalar \}$ are parameters for $\{k=1,\ldots,m\}$.
\end{theorem}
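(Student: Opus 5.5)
The plan mirrors the Poincaré case. First, fix the Lorentz hyperplane $H_{e_k,e}$ through the origin $e=\Lzero$ and compute the signed distance $\bar{\dist}(y,H_{e_k,e})$ explicitly. I will use the Lorentz point-to-hyperplane distance of \citet[Eq.~(44)]{bdeir2024fully}. For a hyperplane through $\Lzero$ with unit spatial normal $z\in\unitsphere{m-1}$ (zero bias), the normal vector in ambient Minkowski space is $[0,z^\top]^\top$, whose Lorentzian norm is $\norm{z}=1$. The signed distance then reduces to
\begin{equation*}
\bar{\dist}\left(y,H_{z,\Lzero}\right)=\frac{1}{\sqrt{-K}}\sinh^{-1}\left(\sqrt{-K}\,\inner{z}{y_s}\right).
\end{equation*}
The $\cosh(\sqrt{-K}b)$ and $\sinh(\sqrt{-K}b)$ factors in the formula of \cref{tab:logit-comparison} become $1$ and $0$ when $b=0$. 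Taking $z=e_k$ gives $\bar{\dist}(y,H_{e_k,\Lzero})=\frac{1}{\sqrt{-K}}\sinh^{-1}(\sqrt{-K}\,y_{s,k})$, where $y_{s,k}$ is the $k$-th spatial coordinate.

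As a sanity check, I will also verify this intrinsically. The hyperplane $H_{e_k,\Lzero}=\{y\in\lorentz{m}: y_{s,k}=0\}$ is a totally geodesic copy of $\lorentz{m-1}$. The distance from $y$ to it is attained along the geodesic orthogonal to it, and the standard identity $\sinh(\sqrt{-K}\,\dist)=\sqrt{-K}\,|\Linner{y}{n}|$ holds for the unit spacelike normal $n=[0,e_k^\top]^\top$. This confirms the formula, including the sign convention $\sign(y_{s,k})$.

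Second, substitute this into the defining system \cref{eq:hyp-fc-p2h}. The $m$ equations decouple coordinatewise as $\frac{1}{\sqrt{-K}}\sinh^{-1}(\sqrt{-K}\,y_{s,k})=u_k(x)$, which inverts uniquely to $y_{s,k}=\frac{1}{\sqrt{-K}}\sinh(\sqrt{-K}\,u_k(x))$ because $\sinh$ is a bijection on $\bbRscalar$. Hence $y_s=\frac{1}{\sqrt{-K}}\sinh(\sqrt{-K}\,u(x))$ holds componentwise.

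Finally, the temporal coordinate is forced by the constraint $\Linner{y}{y}=1/K$ together with $y_t>0$, which gives $y_t=\sqrt{-1/K+\norm{y_s}^2}$. This yields existence and uniqueness of $y\in\lorentz{m}$, and therefore the stated formula.

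The main obstacle is the first step: pinning down precisely which hyperplane and signed-distance convention \citet{bdeir2024fully} use, and checking that with zero offset the normal normalization yields exactly $\sinh^{-1}(\sqrt{-K}\,y_{s,k})/\sqrt{-K}$ with no stray factors. For this reason I will confirm the formula with the intrinsic orthogonal-geodesic computation described above.
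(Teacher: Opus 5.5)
Your proposal is correct and follows the paper's proof essentially step for step. You identify $H_{\bar e_k,\Lzero}=\{y:(y_s)_k=0\}$ using the zero-time-component normal $\bar e_k=[0,e_k^\top]^\top\in T_{\Lzero}\lorentz{m}$, read off the signed distance $\frac{1}{\sqrt{-K}}\sinh^{-1}(\sqrt{-K}\,(y_s)_k)$ from the Lorentz point-to-hyperplane formula, invert $\sinh^{-1}$ coordinatewise, and recover $y_t$ from $\Linner{y}{y}=1/K$ with $y_t>0$. Your intrinsic check via $\sinh(\sqrt{-K}\,\dist)=\sqrt{-K}\,|\Linner{y}{n}|$ for the unit spacelike normal goes beyond the paper, which relies only on the cited formula, and it correctly confirms both the magnitude and the sign convention.
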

\begin{proof}
    The proof, along with the hyperplane and point-to-hyperplane distance, is provided in \cref{app:prf:thm:bfc-lorentz}.
\end{proof}

Analogously to \cref{thm:limits-bmlr}, our BFC layers converge to their Euclidean counterparts as $K \to 0^{-}$.
\begin{theorem}[Limits of BFC layers]\label{thm:limits-bfc}
As $K \to 0^{-}$, the hyperbolic BFC layer $\hyperspace{n} \ni x \mapsto y \in \hyperspace{m}$ reduces to a Euclidean FC layer:
\begin{align}
\text{(Poincaré)} \quad & y_k \xrightarrow{K\to 0^{-}} \alpha_k \inner{v_k}{x} + \frac{1}{2} b_k, \\
\text{(Lorentz)} \quad & (y_s)_k \xrightarrow{K\to 0^{-}} \alpha_k \inner{v_k}{x_s} + b_k.
\end{align}
\end{theorem}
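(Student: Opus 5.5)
The plan is to combine the explicit formulas of \cref{thm:bfc-poincare,thm:bfc-lorentz} with the Busemann limits already established in \cref{thm:limits-bmlr}. Throughout, $x$ and the parameters $\{\alpha_k, v_k, b_k\}$ are held fixed while $K \to 0^{-}$. For $x \in \pball{n}$ we may assume $K$ is small enough that $\norm{x}^2 < -1/K$. Write $c=\sqrt{-K}\to 0^{+}$.

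\textbf{Step 1 (logits).} By \cref{thm:limits-bmlr}, the Poincaré Busemann function satisfies $B^{v_k}(x)\to -2\inner{v_k}{x}$. Hence $u_k(x)\to 2\alpha_k\inner{v_k}{x}+b_k$. In the Lorentz case, $B^{v_k}(x)\to-\inner{v_k}{x_s}$, so $u_k(x)\to \alpha_k\inner{v_k}{x_s}+b_k$. In both cases $u_k(x)$ stays bounded as $K\to 0^{-}$, and that boundedness is all the next step needs.

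\textbf{Step 2 (the $\sinh$ factor).} Since $u_k$ is bounded, $c\,u_k\to 0$. We have $\sinh(c u_k)/c = u_k\,\frac{\sinh(c u_k)}{c u_k}$, and $\sinh(z)/z\to1$ as $z\to 0$, so $\omega_k = \sinh(c u_k)/c$ tends to the same limit as $u_k$. This finishes the Lorentz case immediately, since \cref{thm:bfc-lorentz} gives $(y_s)_k=\omega_k$, so $(y_s)_k\to \alpha_k\inner{v_k}{x_s}+b_k$.

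\textbf{Step 3 (Poincaré normalization).} Since $\omega$ is bounded, $K\norm{\omega}^2\to 0$, and therefore $1+\sqrt{1-K\norm{\omega}^2}\to 2$. It follows that
\begin{equation*}
y_k=\frac{\omega_k}{1+\sqrt{1-K\norm{\omega}^2}}\to \frac{2\alpha_k\inner{v_k}{x}+b_k}{2}=\alpha_k\inner{v_k}{x}+\tfrac12 b_k .
\end{equation*}

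The only delicate point is Step 1. The limit in \cref{thm:limits-bmlr} is pointwise in $x$, with $x$ required to lie in the ball for all small $|K|$, and Step 2 needs boundedness of $u_k$ for such $K$. Both follow from convergence, since a convergent family is eventually bounded, so nothing beyond \cref{thm:limits-bmlr} and continuity of the elementary functions is required.
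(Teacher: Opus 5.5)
Your proposal is correct and follows essentially the same route as the paper's proof. It substitutes the Busemann limits of \cref{thm:limits-bmlr} into the explicit formulas of \cref{thm:bfc-poincare,thm:bfc-lorentz}, shows that $\sinh(\sqrt{-K}\,u_k(x))/\sqrt{-K}$ has the same limit as $u_k(x)$, and notes that the Poincar\'e denominator $1+\sqrt{1-K\norm{\omega}^2}$ tends to $2$. The only difference is presentation: the paper uses Taylor expansions with explicit $O(\kappa^{2})$ remainders (with $K=-\kappa^{2}$), while you use $\sinh(z)/z\to 1$ and continuity. That needs only the eventual boundedness of $u_k(x)$, which you correctly identify and which the paper also assumes implicitly when it expands for bounded $z$.
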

\begin{proof}
    The proof is provided in \cref{app:prf:thm:limits-bfc}.
\end{proof}

\begin{table*}[t]
    \centering
    \caption{Comparison of hyperbolic FC layers. For simplicity, BFC layers do not involve the gyroaddition and assume $\phi$ is the identity map, which is in line with the Möbius and Lorentz FC layers.}
    \label{tab:fc-comparison}
    \resizebox{\linewidth}{!}{
    \begin{tabular}{ccccccc}
        \toprule
        Method & $\calF: \hyperspace{n} \ni x \mapsto y \in \hyperspace{m}$ & Space & Methodology & Parameters & \#Params & FLOPs \\
        \midrule
        \makecell{M\"obius  \\ \citep[Eq. 27]{ganea2018hyperbolic}} &
        $\displaystyle \frac{1}{\sqrt{-K}} \tanh\left( \frac{\norm{W x}}{\norm{x}} \tanh^{-1}\left(\sqrt{-K} \norm{x}\right)\right) \frac{W x}{\norm{W x}}$ &
        $\pball{n}$ & Tangent & $W \in \bbR{m \times n}$ & $mn$ & \makecell{$2nm+2n$ \\ $+2m+24$} \\
        \midrule
        \makecell{Poincar\'e FC \\ \citep[Eq. (7)]{shimizu2021hyperbolic}} &
        \makecell{$\displaystyle y = \frac{\omega}{1 + \sqrt{1 - K \norm{\omega}^{2}}}, \ \omega_k = \frac{\sinh\left(\sqrt{-K} u_k(x)\right)}{\sqrt{-K}}$, \\ with $u_k(x)$ in \cref{tab:logit-comparison}} &
        $\pball{n}$ & \makecell{Poincar\'e \\ geometry} & \makecell{$\alpha_k>0,\ v_k \in \unitsphere{n-1}$,\\ $b_k \in \bbRscalar$, for $k=1,\ldots,m$} & $m(n+2)$ & $4nm+71m+4$\\
        \midrule
        \makecell{Lorentz FC \\ \citep[Eq. (3)]{chen2022fully}} &
        \makecell{$\displaystyle y = \begin{bmatrix} \sqrt{\norm{\psi(Wx, v)}^{2} - 1/K} \\ \psi(Wx, v) \end{bmatrix}$, \\ $\displaystyle \psi(Wx, v) = \lambda \sigma\left( v^{\top} x + b' \right) \frac{ W \phi(x) + b }{ \norm{ W \phi(x) + b } }$ \\ with $\phi$ and $\sigma$ as the activation and sigmoid} &
        $\lorentz{n}$ & \makecell{ Ambient \\ Minkowski} & \makecell{$W \in \bbR{m\times (n+1)}$,\\ $v \in \bbR{n+1}$, $b \in \bbR{m}$,\\ $b' \in \bbRscalar$, $\lambda>0$} & \makecell{$m(n+1)+m$ \\ $+(n+1)+2$} & \makecell{$2nm+8m$ \\ $+2n+10$}\\
        \midrule
        \rowcolor{HilightColor} BFC &
        \Gape[0pt][2pt]{\makecell{$\displaystyle y = \frac{\omega}{1 + \sqrt{1 - K \norm{\omega}^{2}}}$, \ $\displaystyle \omega = \frac{\sinh\left(\sqrt{-K} u(x)\right)}{\sqrt{-K}}$; \\ $\displaystyle y_s = \frac{1}{\sqrt{-K}} \sinh\left(\sqrt{-K} u(x)\right)$, \ $\displaystyle y_t = \sqrt{\frac{1}{-K} + \norm{y_s}^{2}}$, \\ with $\displaystyle u_k(x) = \phi(-\alpha_k B^{v_k}(x) + b_k)$}} &
        \Gape[0pt][2pt]{\makecell{$\pball{n}$ \\ $\lorentz{n}$}} & Busemann & \makecell{$\alpha_k>0,\ v_k \in \unitsphere{n-1}$,\\ $b_k \in \bbRscalar$, for $k=1,\ldots,m$} & $m(n+2)$ & \makecell{$6nm+29m+4$ \\ $2nm+30m+2$}\\
        \bottomrule
    \end{tabular}
    }
    \vspace{-3mm}
\end{table*}

\subsection{Generalization}\label{subsec:busemann-fc-generalization}
When incorporating an activation function $\phi: \bbRscalar \to \bbRscalar$, we can jointly express the Euclidean FC and activation layers, which yields \cref{eq:euc-fc-p2h} with $u_k(x)=\phi\left(\inner{a_k}{x} + b_k\right)$. Accordingly, we extend the hyperbolic BFC by inserting the activation into \cref{eq:hyp-fc-p2h}:
\begin{equation}
    \bar{\dist}\left(y, H_{e_k, e}\right) = \phi\left(u_k(x)\right), \ \forall 1 \leq k \leq m.
\end{equation}
This is reflected in \cref{thm:bfc-lorentz,thm:limits-bfc} by replacing every $u_k(x)$ with $\phi\left(-\alpha_k B^{v_k}(x)+b_k\right)$. Moreover, inspired by the Poincaré Möbius transformation \citep[Sec. 3.2]{ganea2018hyperbolic}, a BFC transformation could be further followed by a gyroaddition $\Hoplus$: $\hyperspace{n} \ni x \mapsto \calF(x) \Hoplus b \in \hyperspace{m}$ with $b \in \hyperspace{m}$ as a gyro bias. For example, the Lorentz BFC layer is generalized as
\begin{equation*}
    \lorentz{n} \ni x \mapsto y =
    \begin{bmatrix}
    \sqrt{\frac{1}{-K}+\norm{y_s}^{2}} \\[6pt]
    \frac{1}{\sqrt{-K}}\sinh\left(\sqrt{-K} u(x)\right)
    \end{bmatrix} \Loplus b \in \lorentz{m},
\end{equation*}
where $u_k(x)=\phi\left(-\alpha_k B^{v_k}(x)+b_k\right)$ with parameters $\{ \alpha_k > 0, v_k \in \unitsphere{n-1}, b_k \in \bbRscalar \}^m_{k=1}$ and $b \in \lorentz{m}$.

\subsection{Comparison}
\cref{tab:fc-comparison} compares BFC with prior hyperbolic FC layers. BFC faithfully respects hyperbolic geometry, whereas the M\"obius and Lorentz FC layers apply Euclidean transformations in the tangent or ambient Minkowski space, which can distort intrinsic geometry. BFC also offers flexibility across models, while Poincar\'e FC and Lorentz FC are tailored to their respective models. In addition, BFC uses a comparable parameterization and maintains $\calO(nm)$ FLOPs. On $\lorentz{n}$, its FLOPs are $\calO(2mn)$, matching the fastest layers.

\section{Experiments}

We first compare BMLRs with prior hyperbolic MLRs on three architectures: ResNet-18 (image classification), CNN (genome sequences), and HGCN (node classification). We then compare BFC with prior hyperbolic FC layers on link prediction. All experiments use both the Poincar\'e and Lorentz models.

\begin{table*}[t]
  \centering
  \caption{Top-1 image classification accuracy (\%) of MLR methods on the ResNet-18 backbone. Best results within each hyperbolic model are in \textbf{bold}. Fit time (s/epoch) highlights: fastest hyperbolic MLR in \greenbf{green}, slowest MLR in \redbf{red}. \#Params denotes the number of parameters in the MLR head, with the largest marked in \redbf{red}.}
  \label{tab:exp-image-classification}
  \resizebox{\linewidth}{!}{
    \begin{tabular}{c|c|ccc|ccc|ccc|ccc}
    \toprule
    \multirow{2}[4]{*}{Space} & \multirow{2}[4]{*}{Method} & \multicolumn{3}{c|}{\makecell{CIFAR-10 \\ (Num. classes: 10)}} & \multicolumn{3}{c|}{\makecell{CIFAR-100 \\ (Num. classes: 100)}} & \multicolumn{3}{c|}{\makecell{Tiny-ImageNet \\ (Num. classes: 200)}} & \multicolumn{3}{c}{\makecell{ImageNet-1k \\ (Num. classes: 1000)}} \\
\cmidrule{3-14}          &       & Acc   & Fit Time &  \#Params & Acc   & Fit Time &  \#Params & Acc   & Fit Time &  \#Params & Acc   & Fit Time &  \#Params \\
    \midrule
    $\bbR{n}$ & MLR   & 95.14 ± 0.12 & 10.66 &  5.13K & 77.72 ± 0.15 & 10.60  &  51.30K & 65.19 ± 0.12 & 69.17  &  102.60K & 71.87 & 2263.12  & 513K \\
    \midrule
    \multirow{3}[2]{*}{$\pball{n}$} & PMLR & 95.04 ± 0.13 & 11.94 &  5.14K & 77.19 ± 0.50 & 12.11  &  51.40K & 64.93 ± 0.38 & 71.90  &  102.80K & 71.77 & 2300.11  & 514K \\
          & PBMLR-P & 95.23 ± 0.08 & \redbf{21.92} & \redbf{ 10.24K} & 77.78 ± 0.15 & \redbf{76.84 } & \redbf{ 102.40K} & 65.43 ± 0.27 & \redbf{336.58 } & \redbf{ 204.80K} & 71.46 & \redbf{3907.12 } & \redbf{1024K} \\
          & \cellcolor{HilightColor} BMLR-P & \cellcolor{HilightColor}\textbf{95.32 ± 0.14} & \cellcolor{HilightColor}12.01 & \cellcolor{HilightColor} 5.14K & \cellcolor{HilightColor}\textbf{78.10 ± 0.35} & \cellcolor{HilightColor}12.13  & \cellcolor{HilightColor} 51.40K & \cellcolor{HilightColor}\textbf{66.16 ± 0.19} & \cellcolor{HilightColor}71.98  & \cellcolor{HilightColor} 102.80K & \cellcolor{HilightColor}\textbf{73.36} & \cellcolor{HilightColor}2300.77  & \cellcolor{HilightColor}514K \\
    \midrule
    \multirow{2}[2]{*}{$\lorentz{n}$} & LMLR  & 94.98 ± 0.12 & 11.55 &  5.13K & 78.03 ± 0.21 & 11.72  &  51.30K & 65.63 ± 0.10 & 69.27  &  102.60K & 72.46 & 2277.17  & 513K \\
          & \cellcolor{HilightColor} BMLR-L & \cellcolor{HilightColor}\textbf{95.25 ± 0.02} & \cellcolor{HilightColor}\greenbf{11.08} & \cellcolor{HilightColor} 5.14K & \cellcolor{HilightColor}\textbf{78.07 ± 0.26} & \cellcolor{HilightColor}\greenbf{11.22}  & \cellcolor{HilightColor} 51.40K & \cellcolor{HilightColor}\textbf{65.99 ± 0.14} & \cellcolor{HilightColor}\greenbf{69.19}  & \cellcolor{HilightColor} 102.80K & \cellcolor{HilightColor}\textbf{73.24} & \cellcolor{HilightColor}\greenbf{2276.53}  & \cellcolor{HilightColor}514K \\
    \bottomrule
    \end{tabular}%
  }
\end{table*}%

\begin{figure}[t]
\centering
\includegraphics[width=\linewidth,trim=0 0cm 0 0]{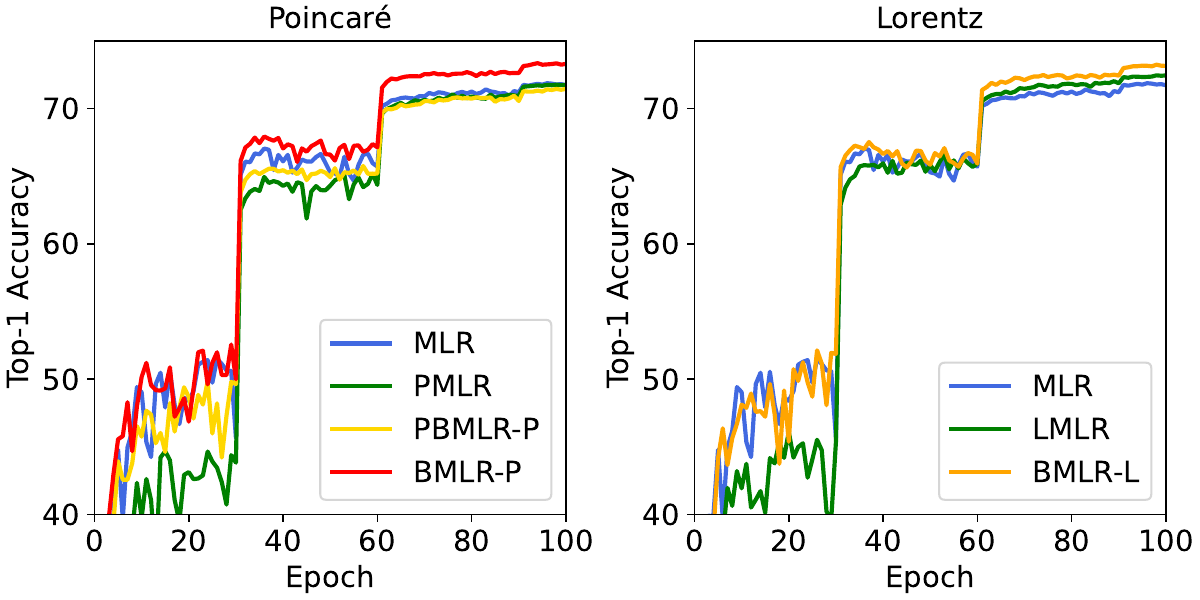}
\caption{Validation accuracy curves on ImageNet-1k.}
\label{fig:acc_curves_imagenet}
\vspace{-5mm}
\end{figure}

\begin{table*}[t]
  \centering
  \caption{Genomic MCC of MLR methods under the CNN backbone. Best results within each hyperbolic model are in \textbf{bold}.}
    \label{tab:exp-genome-sequence-learning}%
    \resizebox{\linewidth}{!}{
    \begin{tabular}{ccc|c|ccc|cc}
    \toprule
    \multirow{2}[3]{*}{Benchmark} & \multirow{2}[3]{*}{Task} & \multirow{2}[3]{*}{Dataset} & Num. & \multicolumn{3}{c|}{$\pball{n}$} & \multicolumn{2}{c}{$\lorentz{n}$} \\
\cmidrule{5-9}          &       &       & classes & PMLR  & PBMLR-P & \cellcolor{HilightColor}BMLR & LMLR  & \cellcolor{HilightColor}BMLR-L \\
\midrule
    \multirow{7}[5]{*}{TEB} & \multirow{3}[1]{*}{Retrotransposons} & LTR Copia & 2     & 75.34 ± 1.02 & 74.37 ± 1.48 & \cellcolor{HilightColor}\textbf{76.73 ± 1.08} & 73.01 ± 1.07 & \cellcolor{HilightColor}\textbf{75.86 ± 1.52} \\
          &       & LINEs & 2     & 85.54 ± 0.61 & 85.92 ± 0.65 & \cellcolor{HilightColor}\textbf{86.05 ± 1.08} & 83.14 ± 0.80 & \cellcolor{HilightColor}\textbf{86.72 ± 0.58} \\
          &       & SINEs & 2     & 95.30 ± 0.85 & 95.34 ± 1.58 & \cellcolor{HilightColor}\textbf{95.99 ± 0.74} & \textbf{96.70 ± 0.87} & \cellcolor{HilightColor}96.29 ± 0.59 \\
\cmidrule{2-9}          & \multirow{2}[2]{*}{DNA transposons} & CMC-EnSpm & 2     & 83.39 ± 0.56 & 83.62 ± 1.00 & \cellcolor{HilightColor}\textbf{84.03 ± 0.71} & 81.78 ± 1.05 & \cellcolor{HilightColor}\textbf{84.15 ± 1.00} \\
          &       & hAT-Ac & 2     & 89.38 ± 0.90 & \textbf{89.86 ± 0.54} & \cellcolor{HilightColor}89.62 ± 0.74 & 88.94 ± 0.69 & \cellcolor{HilightColor}\textbf{90.70 ± 0.51} \\
\cmidrule{2-9}          & \multirow{2}[2]{*}{Pseudogenes} & processed & 2     & 72.45 ± 1.49 & 71.99 ± 2.04 & \cellcolor{HilightColor}\textbf{73.09 ± 1.66} & \textbf{73.71 ± 1.76} & \cellcolor{HilightColor}73.32 ± 1.65 \\
          &       & unprocessed & 2     & 75.37 ± 2.27 & 71.99 ± 1.47 & \cellcolor{HilightColor}\textbf{75.71 ± 1.89} & 74.54 ± 1.98 & \cellcolor{HilightColor}\textbf{76.15 ± 1.61} \\
    \midrule
    \multirow{9}[8]{*}{GUE} & \multirow{3}[2]{*}{Core Promoter Detection} & tata  & 2     & \textbf{80.95 ± 1.47} & 79.32 ± 2.44 & \cellcolor{HilightColor}80.29 ± 1.63 & 80.90 ± 1.15 & \cellcolor{HilightColor}\textbf{81.76 ± 1.16} \\
          &       & notata & 2     & 70.02 ± 0.52 & \textbf{70.60 ± 0.75} & \cellcolor{HilightColor}70.48 ± 0.35 & \textbf{71.26 ± 0.56} & \cellcolor{HilightColor}70.43 ± 0.39 \\
          &       & all   & 2     & 67.64 ± 0.77 & 68.02 ± 0.63 & \cellcolor{HilightColor}\textbf{68.50 ± 0.61} & 67.63 ± 0.56 & \cellcolor{HilightColor}\textbf{68.36 ± 1.07} \\
\cmidrule{2-9}          & \multirow{3}[2]{*}{Promoter Detection} & tata  & 2     & 80.30 ± 1.59 & 80.27 ± 2.71 & \cellcolor{HilightColor}\textbf{82.83 ± 1.69} & \textbf{83.27 ± 1.95} & \cellcolor{HilightColor}82.55 ± 1.54 \\
          &       & notata & 2     & 92.63 ± 0.36 & \textbf{93.05 ± 0.32} & \cellcolor{HilightColor}92.75 ± 0.51 & 91.74 ± 0.57 & \cellcolor{HilightColor}\textbf{92.60 ± 0.49} \\
          &       & all   & 2     & 90.53 ± 0.50 & \textbf{90.79 ± 0.77} & \cellcolor{HilightColor}90.20 ± 0.65 & 89.34 ± 0.40 & \cellcolor{HilightColor}\textbf{89.82 ± 0.45} \\
\cmidrule{2-9}          & Covid Variant Classification & Covid & 9     & \textbf{74.09 ± 0.25} & 70.84 ± 0.80 & \cellcolor{HilightColor}73.40 ± 0.30 & 64.07 ± 0.51 & \cellcolor{HilightColor}\textbf{72.45 ± 0.21} \\
\cmidrule{2-9}          & \multirow{2}[2]{*}{Species Classification} & Virus & 20    & 67.24 ± 2.10 & 59.17 ± 3.32 & \cellcolor{HilightColor}\textbf{77.12 ± 1.23} & 71.34 ± 2.05 & \cellcolor{HilightColor}\textbf{77.21 ± 1.04} \\
          &       & Fungi & 25    & 15.06 ± 1.32 & 18.75 ± 1.77 & \cellcolor{HilightColor}\textbf{30.01 ± 0.76} & 15.07 ± 1.88 & \cellcolor{HilightColor}\textbf{30.14 ± 2.48} \\
    \bottomrule
    \end{tabular}
    }
    \vspace{-3mm}
\end{table*}%

\subsection{Image classification}
\label{subsec:exp-image-classification}

\textbf{Setup.}
Following \citet[Sec. 5.1]{bdeir2024fully} and \citet[Sec. 4]{guo2022clipped}, we use a hybrid architecture with a ResNet-18 \citep{he2016deep} backbone and an MLR head. We compare Euclidean MLR with hyperbolic variants in both models. In Poincar\'e, we evaluate Poincaré MLR (PMLR) re-parameterized by \citet{shimizu2021hyperbolic}, Pseudo Busemann MLR (PBMLR-P) \citep{nguyen2025neural}, and our BMLR-P. In Lorentz, we evaluate Lorentz MLR (LMLR) \citep{bdeir2024fully} and our BMLR-L. For hyperbolic MLRs, we map the ResNet-18 features to the target hyperbolic space before classification. We evaluate on CIFAR-10 \citep{krizhevsky2009learning}, CIFAR-100 \citep{krizhevsky2009learning}, Tiny-ImageNet \citep{le2015tiny}, and ImageNet-1k \citep{deng2009imagenet}. On the first three datasets, we conduct five-fold experiments. More details appear in \cref{app:subsec:image-classification-details}.

\textbf{Results.}
\cref{tab:exp-image-classification} reports top-1 validation accuracy, fit time per epoch, and classifier-head parameters. \cref{fig:acc_curves_imagenet} presents the ImageNet-1k accuracy curves. Overall, BMLR-P and BMLR-L consistently outperform prior hyperbolic MLRs with comparable parameters. Within each hyperbolic model, the accuracy margin over prior hyperbolic MLRs increases with the number of classes, from CIFAR-10 to CIFAR-100 and Tiny-ImageNet, with the largest gains on ImageNet-1k. This demonstrates the advantage of BMLR as task complexity increases. Besides, PBMLR-P uses approximately double the head parameters and is markedly slower due to complex batch-inefficient computation (see \cref{app:sec:mlr-comparison}), whereas BMLR-L achieves the fastest fit time among all hyperbolic MLRs.

\subsection{Genome sequence learning}

\begin{table}[t]
  \centering
  \caption{Fit time (s/epoch) on genome sequence learning. The fastest times are in \greenbf{green} and the slowest ones are in \redbf{red}.}
  \label{tab:exp-genome-sequence-learning-efficiency}%
  \resizebox{0.9\linewidth}{!}{
    \begin{tabular}{c|ccc|cc}
    \toprule
    \multirow{2}[4]{*}{Dataset} & \multicolumn{3}{c|}{$\pball{n}$} & \multicolumn{2}{c}{$\lorentz{n}$} \\
\cmidrule{2-6}          & PMLR  & PBMLR-P & \cellcolor{HilightColor}BMLR & LMLR  & \cellcolor{HilightColor}BMLR-L \\
    \midrule
    LTR Copia & 3.96  & \redbf{5.11} & \cellcolor{HilightColor}4.06  & 3.92  & \cellcolor{HilightColor}\greenbf{3.77} \\
    LINEs & 5.80  & \redbf{6.90} & \cellcolor{HilightColor}5.73  & 5.50  & \cellcolor{HilightColor}\greenbf{5.32} \\
    SINEs & 1.36  & \redbf{1.80} & \cellcolor{HilightColor}1.39  & 1.38  & \cellcolor{HilightColor}\greenbf{1.28} \\
    \midrule
    CMC-EnSpm & 3.11  & \redbf{4.38} & \cellcolor{HilightColor}3.04  & 2.89  & \cellcolor{HilightColor}\greenbf{2.82} \\
    hAT-Ac & 4.37  & \redbf{5.37} & \cellcolor{HilightColor}4.38  & 4.13  & \cellcolor{HilightColor}\greenbf{3.96} \\
    \midrule
    processed & 5.02  & \redbf{5.94} & \cellcolor{HilightColor}4.89  & 4.68  & \cellcolor{HilightColor}\greenbf{4.58} \\
    unprocessed & 3.30  & \redbf{3.90} & \cellcolor{HilightColor}3.29  & 3.05  & \cellcolor{HilightColor}\greenbf{2.95} \\
    \midrule
    CPD-tata & 0.72  & \redbf{1.35} & \cellcolor{HilightColor}0.70  & 0.67  & \cellcolor{HilightColor}\greenbf{0.62} \\
    CPD-notata & 6.13  & \redbf{12.40} & \cellcolor{HilightColor}6.00  & 5.73  & \cellcolor{HilightColor}\greenbf{5.71} \\
    CPD-all & 6.85  & \redbf{14.35} & \cellcolor{HilightColor}6.59  & 6.35  & \cellcolor{HilightColor}\greenbf{6.33} \\
    \midrule
    PD-tata & 0.98  & \redbf{1.20} & \cellcolor{HilightColor}0.97  & 1.04  & \cellcolor{HilightColor}\greenbf{0.94} \\
    PD-notata & 8.33  & \redbf{10.53} & \cellcolor{HilightColor}8.29  & 8.11  & \cellcolor{HilightColor}\greenbf{7.84} \\
    PD-all & 9.40  & \redbf{12.07} & \cellcolor{HilightColor}9.19  & 9.06  & \cellcolor{HilightColor}\greenbf{8.83} \\
    \midrule
    Covid & 28.96  & \redbf{45.52} & \cellcolor{HilightColor}27.97  & 27.58  & \cellcolor{HilightColor}\greenbf{26.67} \\
    \midrule
    Virus & 25.28  & \redbf{29.57} & \cellcolor{HilightColor}25.67  & 25.12  & \cellcolor{HilightColor}\greenbf{24.85} \\
    Fungi & 6.41  & \redbf{8.96} & \cellcolor{HilightColor}6.41  & \greenbf{6.25}  & \cellcolor{HilightColor}\greenbf{6.25} \\
    \bottomrule
    \end{tabular}%
  }
  \vspace{-3mm}
\end{table}%

\textbf{Setup.}
Recently, \citet{khan2025hyperbolic} demonstrated the effectiveness of hyperbolic embedding in genome sequence learning. Following their settings, we adopt a CNN backbone, which consists of three convolutional blocks and an MLR head. Similar to \cref{subsec:exp-image-classification}, we compare our BMLR against previous hyperbolic MLR heads by replacing the final Euclidean MLR with a hyperbolic MLR. We validate on two benchmarks: Transposable Element Benchmark (TEB) \citep{khan2025hyperbolic} and Genome Understanding Evaluation (GUE) \citep{zhou2024dnabert}, covering a total of 16 datasets. More details are provided in \cref{app:subsec:genome-sequence-learning-details}.

\textbf{Results.}
\cref{tab:exp-genome-sequence-learning} summarizes 5-fold average \glsfull{MCC} across TEB and GUE. Compared with other hyperbolic MLRs, our BMLR-P and BMLR-L achieve higher MCC in most tasks. Similar to \cref{subsec:exp-image-classification}, the gains are more pronounced on complex datasets with more classes, \eg, Virus (20 classes) and Fungi (25 classes), demonstrating the effectiveness of our approach. \cref{tab:exp-genome-sequence-learning-efficiency} reports fit time per epoch, where PBMLR-P is consistently the slowest due to batch inefficiency, and BMLR-L is the fastest.

\begin{table*}[!t]
  \centering
  \caption{Comparison of hyperbolic FC layers on link prediction. Best results within each hyperbolic model are in \textbf{bold}.}
  \label{tab:exp-lp-hnn}%
  \resizebox{0.85\linewidth}{!}{
    \begin{tabular}{c|cc|cccc}
    \toprule
    \multirow{2}[2]{*}{Space} & \multirow{2}[2]{*}{Method} & \multirow{2}[2]{*}{Methodology} & Disease & Airport & PubMed & Cora \\
          &       &       & $\delta=0$     & $\delta=1$     & $\delta=3.5$ & $\delta=11$ \\
    \midrule
    \multirow{3}[2]{*}{$\pball{n}$} & Möbius & Tangent & 76.35 ± 1.83 & 93.31 ± 0.41 & \textbf{94.93 ± 0.06} & 90.80 ± 0.56 \\
          & Poincaré FC & Poincaré geometry & 79.45 ± 1.01 & 94.31 ± 0.16 & 94.24 ± 0.25 & 88.21 ± 0.72 \\
          & \cellcolor{HilightColor} BFC-P & \cellcolor{HilightColor} Busemann & \cellcolor{HilightColor}\textbf{80.45 ± 0.93} & \cellcolor{HilightColor}\textbf{94.88 ± 0.39} & \cellcolor{HilightColor}94.85 ± 0.07 & \cellcolor{HilightColor}\textbf{91.94 ± 0.32} \\
    \midrule
    \multirow{3}[2]{*}{$\lorentz{n}$} & LTFC  & Tangent & 71.32 ± 5.36 & 92.68 ± 0.35 & 94.85 ± 0.17 & 89.37 ± 0.64 \\
          & Lorentz FC & Ambient Minkowski & 72.78 ± 2.04 & 92.99 ± 0.33 & 94.20 ± 0.10 & 92.06 ± 0.62 \\
          & \cellcolor{HilightColor} BFC-L & \cellcolor{HilightColor} Busemann & \cellcolor{HilightColor}\textbf{78.36 ± 0.51} & \cellcolor{HilightColor}\textbf{95.37 ± 0.17} & \cellcolor{HilightColor}\textbf{94.90 ± 0.04} & \cellcolor{HilightColor}\textbf{92.28 ± 0.12} \\
    \bottomrule
    \end{tabular}%
  }
  \vspace{-3mm}
\end{table*}%

\begin{table}[t]
  \centering
  \caption{Node classification F1 scores of hyperbolic MLRs on the HGCN backbone, where $\delta$ denotes the graph hyperbolicity (lower is more hyperbolic). The best results within each hyperbolic model are highlighted in \textbf{bold}.}
  \label{tab:exp-node-classification}%
  \resizebox{\linewidth}{!}{
    \begin{tabular}{c|c|cccc}
    \toprule
    \multirow{2}[3]{*}{Space} & \multirow{2}[3]{*}{Method} & Disease & Airport & PubMed & Cora \\
     &  & $\delta=0$     & $\delta=1$     & $\delta=3.5$ & $\delta=11$ \\
    \midrule
    \multirow{4}[2]{*}{$\pball{n}$} & HGCN  & 86.87 ± 2.58 & 85.34 ± 1.16 & 76.29 ± 0.98 & 76.56 ± 0.81 \\
          & HGCN-PMLR & 88.98 ± 1.96 & 84.78 ± 1.48 & 76.02 ± 1.09 & 77.47 ± 1.15 \\
          & HGCN-PBMLR-P & 89.05 ± 0.78 & 85.04 ± 0.97 & 75.89 ± 0.78 & 77.90 ± 1.00 \\
          & \cellcolor{HilightColor} HGCN-BMLR-P & \cellcolor{HilightColor}\textbf{92.45 ± 0.96} & \cellcolor{HilightColor}\textbf{86.02 ± 0.53} & \cellcolor{HilightColor}\textbf{77.36 ± 0.73} & \cellcolor{HilightColor}\textbf{78.48 ± 1.52} \\
    \midrule
    \multirow{3}[2]{*}{$\lorentz{n}$} & HGCN  & 87.83 ± 0.77 & 84.94 ± 1.40 & 76.49 ± 0.88 & 77.37 ± 1.72 \\
          & HGCN-LMLR & 89.72 ± 1.51 & 82.61 ± 1.01 & 75.44 ± 1.17 & 69.91 ± 3.61 \\
          & \cellcolor{HilightColor} HGCN-BMLR-L & \cellcolor{HilightColor}\textbf{90.80 ± 1.15} & \cellcolor{HilightColor}\textbf{85.27 ± 1.17} & \cellcolor{HilightColor}\textbf{77.30 ± 0.41} & \cellcolor{HilightColor}\textbf{77.65 ± 2.10} \\
    \bottomrule
    \end{tabular}%
  }
  \vspace{-3mm}
\end{table}%

\subsection{Node classification}

\textbf{Setup.}
Following \citet{nguyen2025neural}, we adopt the HGCN \citep{chami2019hyperbolic} backbone to evaluate our BMLR on graph datasets, including Disease \citep{anderson1991infectious}, Airport \citep{zhang2018link}, PubMed \citep{namata2012query}, and Cora \citep{sen2008collective}. The HGCN backbone consists of a hyperbolic Graph Convolutional Network (GCN) and an MLR as the final classification layer. Both the GCN and the MLR are built on the hyperbolic space. The vanilla HGCN uses a tangent MLR, which maps features into the tangent space via $\rielog_e$ and applies a Euclidean MLR. We replace this with different hyperbolic MLRs. More details are provided in \cref{app:subsec:node-classification-details}.

\textbf{Results.}
\cref{tab:exp-node-classification} reports average F1 scores. Our BMLRs consistently outperform prior hyperbolic MLRs within each hyperbolic model. As graphs become less hyperbolic, that is, for larger $\delta$, existing hyperbolic heads could underperform the vanilla tangent-based MLR, for example, PBMLR-P on PubMed, and LMLR on Airport, PubMed, and Cora. Especially on Cora, which has the largest $\delta$, LMLR lags the tangent baseline by a large margin (69.91 vs. 77.37). In contrast, BMLR remains the top performer across all $\delta$ values, indicating that Busemann-based decoding robustly strengthens HGCN over a broader range of graph hyperbolicity.

\subsection{Link prediction}

\textbf{Setup.}
We compare our BFC layers with prior hyperbolic FC layers, including the Möbius layer \citep{ganea2018hyperbolic} that operates via the tangent space, the Lorentz FC layer \citep{chen2022fully} that operates through the ambient Minkowski space, and the Poincar\'e FC layer \citep{shimizu2021hyperbolic}. Mimicking the Möbius layer, we further implement a Lorentz tangent FC layer, $\rielog_{\Lzero} (M\rielog_{\Lzero}(x))$, referred to as LTFC. Following \citet{chami2019hyperbolic}, we evaluate on four graph datasets for the link prediction task: Disease \citep{anderson1991infectious}, Airport \citep{zhang2018link}, PubMed \citep{namata2012query}, and Cora \citep{sen2008collective}. Following the HNN implementation \citep{ganea2018hyperbolic,chami2019hyperbolic}, all methods share the same backbone with two FC layers. For a fair comparison, all hyperbolic FC layers are followed by a gyroaddition biasing. For BFC, we set $\phi=\tanh$ on Airport and Cora, which yields better performance, while we use the identity map on the other two datasets. More details are provided in \cref{app:subsec:link-prediction-details}.

\textbf{Results.}
\cref{tab:exp-lp-hnn} reports 5-fold test AUC. Our BFC layers generally outperform prior hyperbolic FC layers. The gains are most pronounced on Disease, which is the most hyperbolic ($\delta=0$), where Busemann-based decoding is markedly more effective than tangent or ambient methods, indicating a better capture of intrinsic hyperbolic geometry. This observation aligns with geometric intuition, since tangent space or ambient space approximations inherently struggle to represent curved manifolds in highly non-Euclidean cases. \cref{app:subsec:albations-efficiency-bfc} summarizes fit time and parameter counts. LTFC is the slowest due to costly logarithmic and exponential maps, and LFC uses the largest number of parameters among Lorentz variants. In contrast, our BFC layers achieve training time and model size comparable to those of existing layers.

\section{Conclusion}
We introduce BMLR and BFC as intrinsic components for hyperbolic neural networks on the Poincaré and Lorentz models, both built from the Busemann function. BMLR provides compact parameters, a point-to-horosphere interpretation, batch-efficient computation, and a limit that recovers Euclidean MLR. BFC extends FC and activation layers with practical $O(nm)$ complexity. Experiments across image classification, genome sequence learning, node classification, and link prediction validate the effectiveness and efficiency of our approaches. BMLR shows increasingly strong performance as the number of classes increases, and replacing existing hyperbolic FC layers with BFC yields additional gains. These results indicate that Busemann geometry offers unified and effective mathematical tools for building hyperbolic neural networks.
\section*{Acknowledgements}
This work was supported by EU Horizon project ELLIOT (No. 101214398) and by the FIS project GUIDANCE (No. FIS2023-03251). We acknowledge CINECA for awarding high-performance computing resources under the ISCRA initiative, and the EuroHPC Joint Undertaking for granting access to Leonardo at CINECA, Italy.

{
    \small
    \bibliographystyle{ieeenat_fullname}
    \bibliography{ref,ref_cof,ref_others}
}

\clearpage
\appendix
\onecolumn
\crefalias{section}{appendix}
\begin{center}
\Large
\textbf{\thetitle}\\
\vspace{0.5em}Supplementary Material \\
\vspace{1.0em}
\end{center}

\startcontents[appendices]
\printcontents[appendices]{l}{1}{\section*{Appendix Contents}}
\newpage


\section{Notation} 
\label{app:sec:notation}
\cref{app:tab:notation} summarizes the notation used throughout the paper.

\begin{table}[t]
    \centering
    \caption{Summary of notation.}
    \label{app:tab:notation}
    \begin{tabular}{cc}
    \toprule
    Notation & Description \\
    \midrule
    $\calM$ & Riemannian manifold \\
    $T_x\calM$ & Tangent space at $x$ \\
    $\rieexp_x(v)$ & Exponential map at $x$ \\
    $\rielog_x(y)$ & Logarithmic map at $x$ \\
    $\pt{x}{y}(v)$ & Parallel transport of $v$ from $x$ to $y$ \\
    $g_x(u,v)$ & Riemannian metric at $x$ \\
    $\dist(x,y)$ & Geodesic distance \\
    $\gamma(t)$ & Unit speed geodesic ray \\
    $B^{\gamma}(x)$ & Busemann function associated with the geodesic ray $\gamma$ \\
    $(\calX,\dist)$ & Metric space and its distance function \\
    $\partial \calX$ & Boundary at infinity of $\calX$ \\
    $[x,y]$ & Geodesic segment joining $x$ and $y$ \\
    $\Delta(x,y,z)$ & Geodesic triangle in $\calX$ \\
    $(M^{n}_{K},\dist_{K})$ & Model space of constant curvature $K$ with distance $\dist_{K}$ \\
    $\CAT(K)$ & $\CAT(K)$ space \\
    $HB^{\gamma}_{\tau},\ H^{\gamma}_{\tau}$ & Horoball and horosphere of $\gamma$ at level $\tau$ \\
    \midrule
    $\bbR{n}$ & Euclidean space of dimension $n$ \\
    $\inner{\cdot}{\cdot}$, $\norm{\cdot}$ & Euclidean inner product and norm \\
    $\Rzero$ & Zero vector in $\bbR{n}$ \\
    $\hyperspace{n}$ & Hyperbolic space, either $\pball{n}$ or $\lorentz{n}$ \\
    $K<0$ & Constant sectional curvature \\
    $\Hoplus$ and $\Hodot$ & Gyroaddition and scalar gyromultiplication on $\hyperspace{n}$ \\
    $e$ & Origin in $\hyperspace{n}$ \\
    $\pball{n}$ & Poincar\'e ball model in $\bbR{n}$ with curvature $K$ \\
    $\lambda_x^{K}$ & Conformal factor $\frac{2}{1+K\norm{x}^{2}}$ \\
    $\Moplus,\ \Modot$ & M\"obius gyroaddition and scalar gyromultiplication on $\pball{n}$ \\
    $\lorentz{n}$ & Lorentz model in $\bbR{n+1}$ with curvature $K$ \\
    $\Linner{x}{y}$, $\Lnorm{x}$ & Lorentzian inner product and norm \\
    $x=\left[x_t, x_s^{\top}\right]^{\top}$ & Temporal and spatial decomposition in the Lorentz ambient space \\
    $\Lzero$ & Origin of $\lorentz{n}$ \\
    $\Loplus,\ \Lodot$ & Gyroaddition and scalar gyromultiplication on $\lorentz{n}$ \\
    $\unitsphere{n-1}$ & Unit sphere in $\bbR{n}$ \\
    $B^{v}(x)$ & Busemann function associated with a unit direction $v \in \unitsphere{n-1}$ \\
    $H_{v,\alpha,b}$ & Horosphere $\left\{ x \in \hyperspace{n} \mid -\alpha B^v(x) + b = 0 \right\}$ \\
    $\bar{\dist}(y,H)$ & Signed point-to-hyperplane distance \\
    \midrule
    $\softmax$ & Softmax operator \\
    $u_k(x)$ & Logit for class $k$ \\
    $e_k \in \bbR{m}$ & $k$-th standard basis vector \\
    $\phi$ & Activation function $\bbRscalar \to \bbRscalar$ \\
    \bottomrule
    \end{tabular}
\end{table}

\section{Preliminaries}
\label{app:sec:preliminaries}

\subsection{Riemannian geometry}
\label{app:subsec:riemannian-geometry}
We briefly review Riemannian geometry. For in-depth discussions, please refer to \cite{do1992riemannian}.

\textbf{Tangent space.}
Given a smooth manifold $\calM$, the \emph{tangent space} $T_x\calM$ at $x\in\calM$ is a Euclidean space, consisting of velocities of smooth curves through $x$, namely $v\in T_x\calM$ if there exists a smooth $\gamma$ with $\gamma(0)=x$ and $\dot\gamma(0)=v$.

\textbf{Riemannian manifold.}
A \emph{Riemannian manifold} is a smooth manifold $\calM$ endowed with a \emph{Riemannian metric} $g$, that is, for each $x\in\calM$ an inner product $g_x(\cdot,\cdot)$ or $\inner{\cdot}{\cdot}_x$ on the tangent space $T_x\calM$ varying smoothly with $x$. The metric induces the length of a smooth curve $\gamma:[0,1]\to\calM$ as $L(\gamma)=\int_0^1 \sqrt{g_{\gamma(t)}\left(\dot\gamma(t),\dot\gamma(t)\right)} dt$, and the \emph{geodesic distance} $\dist(x,y)$ is the infimum of lengths over all smooth curves joining $x$ and $y$.

\textbf{Geodesic.}
Straight lines generalize to constant-speed curves that are locally length minimizing between points $x, y \in \calM$, known as \emph{geodesics}:
\begin{equation}
    \gamma^* = \arg \min_\gamma L(\gamma)
    \quad \text{subject to } \gamma(0)=x, \gamma(1)=y, \norm{\dot\gamma(t)}_{\gamma(t)}=c>0.
\end{equation}
We focus on unit-speed geodesics, \ie, $c=1$. In hyperbolic space, between any two points there exists a unique geodesic segment that is globally length minimizing.

\textbf{Exponential and Logarithmic Maps.}
For $x \in \calM$ and $v \in T_x\calM$, let $\gamma_{x,v}$ denote the unique geodesic with $\gamma_{x,v}(0) = x$ and $\dot\gamma_{x,v}(0) = v$. 
The exponential map $\rieexp_x : T_x\calM \supset \mathcal{V} \to \calM$ is defined by $\rieexp_x(v) = \gamma_{x,v}(1)$, where $\mathcal{V}$ is an open neighborhood of the origin in $T_x\calM$. Its local inverse, defined for $y$ in a neighborhood $\mathcal{U} \subset \calM$ of $x$, is the logarithmic map $\rielog_x : \mathcal{U} \to T_x\calM$, satisfying $\rieexp_x \circ \rielog_x = \id_{\mathcal{U}}$. In hyperbolic space, these maps are globally well-defined, that is, $\rieexp_x$ is defined on all of $T_x\calM$ and $\rielog_x(y)$ exists for every $y\in\calM$.

\textbf{Parallel transport.}
\emph{Parallel transport} moves tangent vectors along a curve while preserving the norm. Given a geodesic $\gamma$ from $x$ to $y$, the parallel transport of a tangent vector $v \in T_x\calM$ is the unique vector $\pt{x}{y} (v) \in T_y\calM$ obtained by transporting $v$ along $\gamma$ so that its covariant derivative along $\gamma$ vanishes. Parallel transport defines a linear isometry between $T_x\calM$ and $T_y\calM$.

\subsection{Metric geometry}
\label{app:subsec:metric-geometry}
We present a concise overview of metric geometry, which generalizes Riemannian concepts to metric spaces. The theory develops geodesics and curvature without smooth structure, providing the tools used in our analysis of point-to-horosphere distances. We follow \citep[Ch.~I.1, I.2, II.1, II.2 and II.8]{bridson2013metric} for definitions and results.

\subsubsection{Geodesic metric spaces}

We begin by recalling some basic notions in metric spaces.
\begin{definition}[Metric space]
A \emph{metric space} is a pair $(\calX,\dist)$ where $\calX$ is a set and the distance function $\dist: \calX \times \calX \to \bbRscalar$ satisfies, for all $x,y,z\in\calX$,
\begin{align}
    \text{Positivity: } &\dist(x,y) \ge 0,\quad \dist(x,y)=0 \iff x=y,\\
    \text{Symmetry: }  &\dist(x,y)=\dist(y,x),\\
    \text{Triangle inequality: } &\dist(x,z) \le \dist(x,y)+\dist(y,z)
\end{align}
\end{definition}

Geodesics, rays, and lines generalize unit-speed minimizing geodesics to metric spaces. 
\begin{definition}[Geodesic, geodesic ray and line]
Let $(\calX,\dist)$ be a metric space and let $I=[0,l] \subseteq \bbRscalar$ be a closed interval. A \emph{geodesic} joining $x$ to $y$ is a map $\gamma:I \to \calX$ with $\gamma(0)=x$, $\gamma(l)=y$ such that
\begin{equation}
\dist\left(\gamma(t),\gamma(t')\right)=|t-t'| \quad \text{for all } t,t'\in I\cap[0,l].
\end{equation}
A \emph{geodesic ray} is a map $\gamma:[0,\infty)\to\calX$ such that $\dist\left(\gamma(t),\gamma(t')\right)=|t-t'|$ for all $t,t'\ge 0$. A \emph{geodesic line} is a map $\gamma:\bbRscalar\to\calX$ such that $\dist\left(\gamma(t),\gamma(t')\right)=|t-t'|$ for all $t,t'\in\bbRscalar$.
\end{definition}

Geodesic metric spaces extend the Riemannian premise that any two points can be joined by a geodesic to metric spaces.
\begin{definition}[Geodesic metric space]
The metric space $(\calX,\dist)$ is a \emph{geodesic metric space} (or, more briefly, a \emph{geodesic space}) if every two points are joined by a geodesic. We say that $(\calX,\dist)$ is \emph{uniquely geodesic} if there is exactly one geodesic joining $x$ to $y$ for all $x,y \in \calX$.
\end{definition}

Convexity is defined through geodesics, generalizing linear convexity in Euclidean space and geodesic convexity on manifolds.
\begin{definition}[Convex subset]
Let $(\calX,\dist)$ be a metric space. A subset $C\subseteq\calX$ is \emph{convex} if every pair $x,y\in C$ can be joined by a geodesic in $\calX$ and the image of every such geodesic is contained in $C$.
\end{definition}

\subsubsection{CAT(0) spaces}
We next review concepts that extend nonpositive curvature from manifolds to metric spaces. As a starting point, we recall the model spaces, that is, manifolds of constant curvature, including hyperbolic, Euclidean, and spherical geometries. These serve as reference spaces for metric spaces.
\begin{definition}[Model space $M^{n}_{K}$]
For $K\in\bbRscalar$, the \emph{model space} $(M^{n}_{K},\dist_{K})$ is given by
\begin{equation}
(M^{n}_{K},\dist_{K})=
\begin{cases}
(\unitlorentz{n},\ 1/\sqrt{-K} \dist), & K>0,\\
(\bbR{n},\ \dist ), & K=0,\\
(\unitsphere{n}, 1/\sqrt{K}\dist ), & K<0,
\end{cases}
\end{equation}
where $\unitlorentz{n}$ and $\unitsphere{n}$ are the $n$-dimensional unit Lorentz and sphere manifolds, respectively, and $\dist$ is the geodesic distance in the corresponding manifold. The diameter of $M_k^2$ is denoted $D_k$, which is equal to $\pi / \sqrt{K}$ if $K>0$, and $\infty$ otherwise.
\end{definition}

\begin{definition}[Comparison triangle]
Let $\triangle(x,y,z)$ be a geodesic triangle in $\calX$ with side lengths $a=\dist(y,z)$, $b=\dist(x,z)$, and $c=\dist(x,y)$. A \emph{comparison triangle} for $\triangle(x,y,z)$ is a triangle $\triangle(\bar x,\bar y,\bar z)$ in $M^{2}_{K}$ with $\dist_{K}(\bar y,\bar z)=a$, $\dist_{K}(\bar x,\bar z)=b$, and $\dist_{K}(\bar x,\bar y)=c$. When $a+b+c<2 D_{K}$, the comparison triangle exists.
\end{definition}

CAT($K$) spaces encode curvature through triangle comparison with the model plane $M^2_{K}$. Intuitively, a $\CAT(K)$ space is a metric space where triangles are "thinner" than the corresponding comparison triangles in the model space $M_K^2$.
\begin{definition}[$\CAT(K)$ space]
Let $\calX$ be a metric space and let $K$ be a real number. Let $\Delta$ be a geodesic triangle in $\calX$ with perimeter less than $2 D_K$. Let $\bar{\Delta} \subset M_K^2$ be a comparison triangle for $\Delta$. Then, $\Delta$ is said to satisfy the \emph{$\CAT(K)$ inequality} if for all $x, y \in \Delta$ and all comparison points $\bar{x}, \bar{y} \in \bar{\Delta}$,
\begin{equation}
    d(x, y) \leq d_{K}(\bar{x}, \bar{y}).
\end{equation}
Then, the $\CAT(K)$ space is defined as follows.
\begin{itemize}
    \item
    If $K \leq 0$, then $\calX$ is called a $\CAT(K)$ space (more briefly, "$\calX$ is $\CAT(K)$") if $\calX$ is a geodesic space all of whose geodesic triangles satisfy the $\CAT(K)$ inequality.
    \item 
    If $K>0$, then $\calX$ is called a $\CAT(K)$ space if $\calX$ is $D_K$-geodesic and all geodesic triangles in $\calX$ of perimeter less than $2 D_K$ satisfy the $\CAT(K)$ inequality. 
\end{itemize}
Here, $D_K$-geodesic means that for every pair of points $x, y \in \calX$ with $d(x, y)<D_K$ there is a geodesic joining $x$ to $y$.
\end{definition}

\begin{definition}[Hadamard space]\label{app:def:hadamard}
A \emph{Hadamard space} is a complete $\CAT(0)$ space.
\end{definition}

\begin{proposition}[Orthogonal projection] 
    Let $(\calX,\dist)$ be a $\CAT(0)$ space and let $C\subseteq\calX$ be a convex subset that is complete in the induced metric. For every $x\in\calX$, there exists a unique point $\pi_{C}(x)\in C$ such that
    \begin{equation}
    \dist\left(x,\pi_{C}(x)\right)
    = \inf_{y\in C} \dist\left(x,y\right)=\dist\left(x,C\right).
    \end{equation}
    If $x'$ belongs to the geodesic segment $\left[x,\pi_{C}(x)\right]$, then
    \begin{equation}
    \pi_{C}(x') = \pi_{C}(x).
    \end{equation}
    The map $\pi_{C}: \calX \to C$ is a called an \emph{orthogonal projection}, or simply a \emph{projection}.
\end{proposition}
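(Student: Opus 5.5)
The plan is to follow the classical argument of \citet[II. Prop.~2.4]{bridson2013metric}. Everything rests on one quantitative consequence of the $\CAT(0)$ inequality, the (CN) inequality of Bruhat--Tits. For $y,z\in\calX$, let $m$ be the midpoint of the geodesic $[y,z]$. Comparing the triangle $\Delta(x,y,z)$ with its Euclidean comparison triangle, and using the Apollonius/parallelogram identity in $\bbR{2}$ for the comparison median, gives
\begin{equation*}
\dist(x,m)^2 \le \tfrac12 \dist(x,y)^2 + \tfrac12 \dist(x,z)^2 - \tfrac14 \dist(y,z)^2 .
\end{equation*}
I will derive this first, since it is the only place the curvature hypothesis enters.

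\textbf{Existence.} Set $\delta=\dist(x,C)$ and pick $y_i\in C$ with $\dist(x,y_i)\to\delta$. By convexity of $C$, the midpoint $m_{ij}$ of $[y_i,y_j]$ lies in $C$, so $\dist(x,m_{ij})\ge\delta$. The (CN) inequality then yields
\begin{equation*}
\tfrac14\dist(y_i,y_j)^2 \le \tfrac12\dist(x,y_i)^2+\tfrac12\dist(x,y_j)^2-\delta^2 \to 0 .
\end{equation*}
Hence $(y_i)$ is Cauchy. Since $C$ is complete, $y_i$ converges to some $\pi_C(x)\in C$, and continuity of $\dist$ gives $\dist(x,\pi_C(x))=\delta$.

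\textbf{Uniqueness.} Suppose $y,z\in C$ both attain $\delta$. Applying the same inequality to their midpoint, which lies in $C$, gives $\tfrac14\dist(y,z)^2\le \delta^2-\delta^2=0$, so $y=z$. I expect this Cauchy/uniqueness step to be the main technical point, and the (CN) inequality is what handles it; the rest is bookkeeping.

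\textbf{Invariance along the segment.} Let $p=\pi_C(x)$ and $x'\in[x,p]$. Since $x'$ lies on the geodesic from $x$ to $p$, we have $\dist(x,p)=\dist(x,x')+\dist(x',p)$. Let $y\in C$ be arbitrary. The triangle inequality gives $\dist(x,x')+\dist(x',y)\ge \dist(x,y)\ge \dist(x,p)$, and therefore $\dist(x',y)\ge \dist(x',p)$. Thus $p$ realizes $\dist(x',C)$, and uniqueness of the nearest point, established above, gives $\pi_C(x')=p$.
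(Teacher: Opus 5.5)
Your proof is correct and is the standard argument of Bridson--Haefliger (Prop.~II.2.4); the paper does not prove this proposition itself but takes it from that source. As there, the midpoint (CN) inequality from the Euclidean comparison triangle gives both the Cauchy property of a minimizing sequence and uniqueness, and the triangle inequality together with uniqueness handles points $x'\in[x,\pi_C(x)]$ (the only tacit assumption, shared with the statement, is that $C$ is nonempty).
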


With geodesics, we can extend the asymptote, Busemann function and horosphere in \cref{sec:preliminaries} to metric spaces.

\begin{definition}[Asymptote]
Let $(\calX,\dist)$ be a metric space. Two geodesic rays $\gamma,\eta:[0,\infty)\to\calX$ are \emph{asymptotic} if
\begin{equation}
\sup_{t\ge 0}  \dist\left(\gamma(t),\eta(t)\right) < \infty.
\end{equation}
The set $\partial \calX$ of \emph{boundary} points of $\calX$, which we shall also call the \emph{points at infinity} or \emph{ideal points}, is the set of equivalence classes of geodesic rays: two geodesic rays being equivalent if and only if they are asymptotic. 
\end{definition}

\begin{definition}[Busemann function, horoball, horosphere]
Let $(\calX,\dist)$ be a metric space and let $\gamma:[0,\infty)\to\calX$ be a geodesic ray. The \emph{Busemann function} associated with $\gamma$ is
\begin{equation}
B^{\gamma}(x) = \lim_{t\to\infty} \left(\dist\left(x,\gamma(t)\right) - t\right),\quad x\in\calX.
\end{equation}
Its sublevel sets $HB^{\gamma}_{\tau}=\left\{x\in\calX: B^{\gamma}(x) \le \tau\right\}$ are \emph{horoballs} and the level sets $H^{\gamma}_{\tau}=\left\{x\in\calX: B^{\gamma}(x)=\tau\right\}$ are \emph{horospheres}.
\end{definition}
As shown by \citet[Lem. II. 8.18]{bridson2013metric}, the limits in the Busemann function exist. Besides, Busemann functions in Hadamard spaces are invariant to the choice of asymptotic geodesic ray \citet[Cor. II. 8.20]{bridson2013metric}.

\begin{corollary}
    If $\calX$ is a Hadamard space, then the Busemann functions associated to asymptotic rays in $\calX$ are equal up to addition of a constant.
\end{corollary}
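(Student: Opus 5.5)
The plan is to prove the corollary directly from the definition of the Busemann function, and to handle the two rays through a third one. Let $\gamma,\eta:[0,\infty)\to\calX$ be asymptotic rays. The goal is to show that $B^{\gamma}-B^{\eta}$ is constant on $\calX$. The key geometric input is a standard CAT(0) fact: for asymptotic rays the function $t\mapsto \dist(\gamma(t),\eta(t))$ is convex and bounded on $[0,\infty)$, hence nonincreasing. This follows from the convexity of the metric in CAT(0) spaces \citep[Prop.~II.2.2]{bridson2013metric}, since a bounded convex function on a half-line is nonincreasing.

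\textbf{Step 1 (reduction to a common basepoint).} Using completeness of $\calX$, I will invoke the existence of a ray $\eta'$ asymptotic to $\gamma$ starting at $\eta(0)$ \citep[Prop.~II.8.2]{bridson2013metric}. By uniqueness of such a ray, $\eta'=\eta$. It therefore suffices to treat two cases.
\begin{itemize}
\item[(a)] Reparametrizations $\gamma_s(t)=\gamma(s+t)$. Here
\begin{equation*}
\dist(x,\gamma(s+t))-t=\bigl(\dist(x,\gamma(s+t))-(s+t)\bigr)+s,
\end{equation*}
so $B^{\gamma_s}=B^{\gamma}+s$.
\item[(b)] Two asymptotic rays with arbitrary starting points. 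This is the heart of the argument.
\end{itemize}

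\textbf{Step 2 (key estimate).} For case (b), the first claim is that $c:=\lim_{t\to\infty}\dist(\gamma(t),\eta(t))$ exists, by monotonicity. The natural hope is to compare $\dist(x,\gamma(t))$ with $\dist(x,\eta(t+a))$ for a well-chosen shift $a$, namely $a=B^{\eta}(\gamma(0))$ or similar. The aim is to show $\dist(\gamma(t),\eta(t+a))\to 0$; then the triangle inequality gives
\begin{equation*}
\bigl|\dist(x,\gamma(t))-t-\bigl(\dist(x,\eta(t+a))-(t+a)\bigr)-a\bigr|\to 0,
\end{equation*}
that is, $B^{\gamma}=B^{\eta}+a$.

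\textbf{Main obstacle.} Proving $\dist(\gamma(t),\eta(t+a))\to 0$ is where I expect the difficulty. In general CAT(0) spaces this need not hold: flat strips give parallel rays at constant positive distance, for instance in Euclidean space. So I would avoid it and argue more robustly. Set $f_t(x)=\dist(x,\gamma(t))-\dist(x,\eta(t))$. For each $x$, the triangle inequality bounds $|f_t(x)|$ by $\dist(\gamma(t),\eta(t))\le D$. The limit of $f_t(x)$ exists, since both Busemann limits exist, and equals $B^{\gamma}(x)-B^{\eta}(x)$. The task is then to show this limit is independent of $x$.

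\textbf{Step 3 (independence of $x$).} Fix $x,y$ and let $\eta_x$, $\gamma_x$ be the rays from $x$ asymptotic to $\eta$ and $\gamma$. These coincide by uniqueness, since $\gamma$ and $\eta$ are asymptotic. I will use the known identity
\begin{equation*}
B^{\gamma}(y)-B^{\gamma}(x)=\lim_{t\to\infty}\bigl(\dist(y,\gamma_x(t))-t\bigr),
\end{equation*}
which comes from the basepoint-change lemma \citep[Lem.~II.8.18 and its proof]{bridson2013metric}. The same formula holds for $\eta$, with the same ray $\gamma_x=\eta_x$. Hence
\begin{equation*}
B^{\gamma}(y)-B^{\gamma}(x)=B^{\eta}(y)-B^{\eta}(x),
\end{equation*}
which is the claim. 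The delicate point is justifying the basepoint-change identity. I would prove it by projecting $\gamma(t)$ and comparing geodesic triangles $\Delta(x,\gamma(s),\gamma(t))$ via the CAT(0) inequality, letting $s,t\to\infty$.
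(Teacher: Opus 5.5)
The paper gives no proof of this statement; it quotes it from Bridson--Haefliger (Cor.~II.8.20). Your argument therefore has to stand on its own, and it has a genuine gap in Step~3. The basepoint-change identity
\[
B^{\gamma}(y)-B^{\gamma}(x)=\lim_{t\to\infty}\bigl(\dist(y,\gamma_x(t))-t\bigr)=B^{\gamma_x}(y)\quad\text{for all } y
\]
is not an auxiliary lemma; it is the corollary itself. Take $x=\eta(0)$: uniqueness of asymptotic rays gives $\gamma_x=\eta$, and the identity becomes $B^{\gamma}=B^{\eta}+B^{\gamma}(\eta(0))$.

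Steps~1--2 do not reduce anything either. Case (b) is the general case, and Step~2 only restates the goal. So the whole content of the statement is deferred to this identity. The lemma you cite does not contain it: the paper invokes II.8.18 only for existence of the limit, which follows from monotonicity.

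Your sketch also cannot produce it, because the triangles $\Delta(x,\gamma(s),\gamma(t))$ do not involve $y$. As in the proof of II.8.2, they show that the segments $\sigma_t=[x,\gamma(t)]$ converge to $\gamma_x$. Combined with $\dist(y,\gamma(t))\le\dist(y,\sigma_t(s))+\dist(x,\gamma(t))-s$, this yields only the one-sided bound $B^{\gamma}(y)-B^{\gamma}(x)\le B^{\gamma_x}(y)$. The reverse inequality is where CAT(0) geometry is genuinely needed. For example, in the normed plane $(\mathbb{R}^2,\norm{\cdot}_\infty)$ the asymptotic rays $t\mapsto t(1,1)$ and $t\mapsto(1,0)+t(1,1)$ have Busemann functions $-\min(z_1,z_2)$ and $-\min(z_1-1,z_2)$, whose difference is not constant.

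The missing step is a comparison in triangles that contain the evaluation point. Let $\delta=\dist(x,y)$, $L_t=\dist(x,\gamma(t))$, $M_t=\dist(y,\gamma(t))$, and $\beta=\lim_t(M_t-L_t)=B^{\gamma}(y)-B^{\gamma}(x)$. Apply the CAT(0) inequality in $\Delta(y,x,\gamma(t))$ to the vertex $y$ and the point $\sigma_t(s)$ on the opposite side. The Euclidean law of cosines in the comparison triangle then gives
\[
\dist(y,\sigma_t(s))^2\le\delta^2+s^2-\frac{s\left(\delta^2+L_t^2-M_t^2\right)}{L_t}\xrightarrow{t\to\infty}s^2+2s\beta+\delta^2 .
\]
Let $t\to\infty$ (so $\sigma_t(s)\to\gamma_x(s)$) and then $s\to\infty$. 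This gives $B^{\gamma_x}(y)\le\lim_{s\to\infty}\bigl(\sqrt{s^2+2s\beta+\delta^2}-s\bigr)=\beta$. Together with the easy bound, $B^{\gamma_x}=B^{\gamma}-B^{\gamma}(x)$. Choosing $x=\eta(0)$ then proves the corollary directly, with no need for Steps~1--2 or the detour through $\gamma_x=\eta_x$.
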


\subsection{Gyrovector space}

In this subsection we first present the general definitions of gyrogroup and gyrovector space. We then instantiate these notions on concrete manifolds.

\subsubsection{General definition}

Classical vector spaces can be characterized as a commutative group together with a compatible scalar multiplication. By analogy, a gyrovector space is built from a gyrocommutative gyrogroup endowed with a compatible scalar gyromultiplication \citep{ungar2022analytic}.

\begin{definition}[Gyrogroup \citep{ungar2022analytic}]
    \label{def:gyrogroups}
    Given a nonempty set $G$ with a binary operation $\oplus: G \times G \to G$, the pair $(G,\oplus)$ is a \emph{gyrogroup} if, for all $x,y,z\in G$, the following axioms hold:
    
    \noindent (G1) There is at least one element $e \in G$ called a left identity (or neutral element) such that $e \oplus x = x$.
    
    \noindent (G2) There is an element $\ominus x \in G$ called a left inverse of $x$ such that $\ominus x \oplus x = e$.
    
    \noindent (G3) There is an automorphism $\gyr[x, y]: G \rightarrow G$ for each $x, y \in G$ such that
    \begin{equation*}
        x \oplus \left(y \oplus z\right) = \left(x \oplus y\right) \oplus \gyr[x,y]  z \quad \text{(Left gyroassociative law).}
    \end{equation*}
    The map $\gyr[x,y]$ is the \emph{gyration} of $G$ generated by $x$ and $y$.
    
    \noindent (G4) Left reduction law: $\gyr[x, y]=\gyr[x \oplus y, y]$.
\end{definition}

\begin{definition}[Gyrocommutative gyrogroup \citep{ungar2022analytic}]
    \label{def:gyrocommutative_gyrogroups}
    A gyrogroup $(G,\oplus)$ is \emph{gyrocommutative} if
    \begin{equation*}
        x \oplus y = \gyr[x,y] \left(y \oplus x\right) \quad \text{(Gyrocommutative law).}
    \end{equation*}
\end{definition}

\begin{definition}[Gyrovector space \citep{chen2025gyrobnextension}]
\label{def:gyrovector_spaces}
A gyrocommutative gyrogroup $(G,\oplus)$ equipped with a scalar gyromultiplication $\odot: \mathbb{R} \times G \to G$ is a \emph{gyrovector space} if, for $s,t \in \mathbb{R}$ and $x,y,z \in G$, the following axioms hold:
\\ \noindent (V1) Identity scalar multiplication:
$1 \odot x = x$.
\\ \noindent (V2) Scalar distributive law:
$(s+t) \odot x = s \odot x \oplus t \odot x$.
\\ \noindent (V3) Scalar associative law:
$(st) \odot x = s \odot \left(t \odot x\right)$.
\\ \noindent (V4) Gyroautomorphism homogeneity:
$\gyr[x,y] \left(t \odot z\right) = t \odot \gyr[x,y]  z$.
\\ \noindent (V5) Identity gyroautomorphism:
$\gyr\left[s \odot x, t \odot x\right] = \id$, where $\id$ denotes the identity map.
\end{definition}

\textbf{Intuition.} Gyrogroups generalize groups: they are nonassociative, yet obey a controlled form of associativity governed by gyrations. Since gyrations in a group are the identity, every group is a gyrogroup. In the same spirit, gyrovector spaces extend vector spaces and provide an algebraic toolkit that has proved effective for modeling hyperbolic geometry \citep{ungar2022analytic}.

\subsection{Hyperbolic geometry}
\label{app:subsec:hyperbolic-geometry}

\begin{table}[t]
\centering
\renewcommand{\arraystretch}{1.25}
\caption{Riemannian operators on Poincaré ball and Lorentz ($K<0$).}
\label{tab:hyperbolic-operators}
\resizebox{\linewidth}{!}{
\begin{tabular}{lcc}
\toprule
Operator & Poincaré ball $\pball{n}$ & Lorentz $\lorentz{n}$ \\ 
\midrule
Definition &
$\displaystyle \pball{n} = \{x \in \bbR{n} \mid \norm{x}^2 < -1/K \}$ &
$\displaystyle \lorentz{n} = \{x \in \bbR{n+1} \mid \Linner{x}{x} = 1/K,  x_t > 0\}$ \\[6pt]

$g_x(w,v)$ &
$\displaystyle \left(\lambda_x^{K}\right)^2 \inner{w}{v}, \quad \lambda_x^K=\frac{2}{1+K\norm{x}^2}$ &
$\displaystyle \Linner{w}{v} = \inner{v_s}{w_s}- v_t w_t$ \\

$\dist(x,y)$ &
$\displaystyle \frac{2}{\sqrt{|K|}} \tanh^{-1}  \left(\sqrt{|K|}  \norm{-x \Moplus y}\right)$ &
$\displaystyle \frac{1}{\sqrt{|K|}}\cosh^{-1}    \left(K\Linner{x}{y}\right)$ \\

$\rielog_x y$ &
$\displaystyle \frac{2}{\sqrt{|K|}\lambda_x^K}\tanh^{-1}    \left(\sqrt{|K|}  \norm{-x \Moplus y}\right)\frac{-x \Moplus y}{\norm{-x \Moplus y}}$ &
$\displaystyle \frac{\cosh^{-1}(\beta)}{\sqrt{\beta^2 - 1}}(y-\beta x), \quad \beta=K\Linner{x}{y}$ \\

$\rieexp_x v$ &
$\displaystyle x \Moplus  \left(\tanh  \left(\sqrt{|K|}  \frac{\lambda_x^K\norm{v}}{2}\right)\frac{v}{\sqrt{|K|}\norm{v}}\right)$ &
$\displaystyle \cosh(\alpha)x+\frac{\sinh(\alpha)}{\alpha}v, \quad \alpha=\sqrt{|K|}  \Lnorm{v}$ \\

$\pt{x}{y}(v)$ &
$\displaystyle \frac{\lambda_x^K}{\lambda_y^K}  \gyr[y,-x]v$ &
$\displaystyle v-\frac{K\Linner{y}{v}}{1+K\Linner{x}{y}}(x+y)$ \\
\bottomrule
\end{tabular}
}
\end{table}

Given $x,y \in \hyperspace{n}$ and tangent vectors $v,w \in T_x\hyperspace{n}$, \cref{tab:hyperbolic-operators} summarizes the Riemannian operators. 

\textbf{Gyrovector operators.}
The gyro-structure over the hyperbolic space can be defined by its Riemannian operators \cite{ganea2018hyperbolic,chen2025gyrobnextension}. Given $x,y,z \in \hyperspace{n}$ and $t \in \bbRscalar$, the gyroaddition and gyromultiplication are defined as
\begin{align}
x \Hoplus y &= \rieexp_{x} \left(\pt{e}{x}\left(\rielog_{e} y\right)\right),\\
t \Hodot x &= \rieexp_{e}\left(t \rielog_{e} x\right), \\
\gyr[x, y] z &= \Hominus \left(x \Hoplus y\right) \Hoplus \left(x \Hoplus \left(y \Hoplus z\right)\right),
\end{align}
where $e$ denotes the origin in $\hyperspace{n}$. On the Poincaré ball $\pball{n}$, such gyro-structure is known as the Möbius gyrovector space \citep[Ch. 6.14]{ungar2022analytic}:
\begin{align*}
x \Moplus y &= \frac{\left(1-2 K\langle x, y\rangle-K\|y\|^2\right) x+\left(1+K \|x\|^2 \right) y}{1 -2 K\langle x, y\rangle+K^2 \|x\|^2\|y\|^2}, \\
t \Modot x &= \displaystyle \frac{\tanh \left(t \tanh^{-1}(\sqrt{|K|}\|x\|)\right)}{\sqrt{|K|}} \frac{x}{\|x\|}
\end{align*}
where $\Mominus x=-1 \Modot x= -x$ is the gyroinverse and $\Rzero$ is the gyro identity: $\Rzero \Moplus x =x, \forall x \in \pball{n}$. As shown by \citet[Props. 24-25]{chen2025gyrobnextension}, the Lorentz gyroaddition and gyromultiplication also admit closed-form expressions:
\begin{align}
    \label{eq:gyroadd_calmk}
    &x\Loplus y = 
    \begin{cases}
    x, & y=\Lzero, \\
    y, & x=\Lzero, \\
    \begin{bmatrix}
    \frac{1}{\sqrt{|K|}} \frac{D - K N}{D + K N} \\
    \frac{2 \left( A_s x_s + A_y y_s \right)}{ D + K N }
    \end{bmatrix},  & \text{Otherwise}.
    \end{cases} \\
    \label{eq:gyro_prod_calmk}
    &t \Lodot x =
    \begin{cases}
    \Lzero, & t = 0 \lor x=\Lzero \\
    \frac{1}{\sqrt{|K|}} 
    \begin{bmatrix}
    \cosh \left( t \theta \right) \\
    \dfrac{\sinh \left( t \theta \right)}{\norm{x_s}}    x_s
    \end{bmatrix}, & \text{Otherwise},
    \end{cases}
\end{align}
Here, $\theta = \cosh^{-1}(\sqrt{|K|}    x_t)$, $A_s = ab^2 - 2K b s_{xy} - K a n_y$ and $A_y = b(a^2 + K n_x)$ with the following notation:
\begin{equation}
    \begin{aligned}
    & a=1+\sqrt{|K|}x_t,
    b=1+\sqrt{|K|} y_t, \\
    & n_x=\norm{x_s}^2,
    n_y=\norm{y_s}^2, 
    s_{xy}=\langle x_s,y_s\rangle, \\
    &D = a^2b^2 - 2K ab s_{xy} + K^2 n_x n_y, \\
    &N = a^2 n_y + 2ab s_{xy} + b^2 n_x .
    \end{aligned}
\end{equation}
In particular, the gyro identity is $\Lzero$ and the gyroinverse is $\Lominus x 
= -1 \Lodot x = [x_t, -x_s^\top]^\top$.

\section{Comparison with existing hyperbolic MLR}
\label{app:sec:mlr-comparison}

\begin{table}[t]
    \centering
    \caption{Comparison of hyperplanes. Compact params indicates whether the parameterization requires an additional manifold-valued point.}
    \label{app:tab:hyperplane-comparison}
    \begin{tabular}{ccccc}
        \toprule
        Method & Hyperplane & Formulation & \makecell{Applied \\ manifolds} & \makecell{Compact \\ params} \\
        \midrule
        Euclidean MLR & Euclidean & \makecell{$\left\{ x \in \bbR{n} \mid \inner{a}{x} + b = 0 \right\}$ \\ $a \in \bbR{n},\ b \in \bbRscalar$} & $\bbR{n}$ & \cmark \\
        \midrule
        Poincaré MLR \citep{ganea2018hyperbolic} & \makecell{Geodesic \\ \citep[Def. 3.1]{ganea2018hyperbolic}} & \makecell{$\left\{ x \in \pball{n} \mid \inner{ \rieLog{p}\left(x\right)}{a }_{p} = 0 \right\}$ \\ $ p \in \pball{n},\ a \in T_{p}\pball{n}$} & $\pball{n}$ & \xmark \\
        \midrule
        Pseudo-Busemann MLR \citep{nguyen2025neural} & \makecell{Busemann \& gyro \\ \citep[Def. 4.1]{nguyen2025neural}} & \makecell{$\left\{ x \in \pball{n} \mid B^{v}\left( - p \Moplus x \right) = 0 \right\}$ \\ $v \in \unitsphere{n-1},\ p \in \pball{n}$} & $\pball{n}$ & \xmark\\
        \midrule
        Lorentz MLR \citep{bdeir2024fully} & \makecell{Ambient Minkowski \\ \citep[Eq. (7)]{bdeir2024fully}} & \makecell{$\left\{ x \in \lorentz{n} \mid \Linner{w}{x} = 0 \right\}$ \\ $p \in \lorentz{n},\ w \in T_{p}\lorentz{n}$} & $\lorentz{n}$ & \xmark\\
        \midrule
        \rowcolor{HilightColor} BMLR & \Gape[0pt][2pt]{Horosphere} & \Gape[0pt][2pt]{\makecell{$\left\{ x \in \hyperspace{n} \mid -\alpha B^{v}\left(x\right) + b = 0 \right\}$ \\ $\alpha > 0,\ v \in \unitsphere{n-1},\ b \in \bbRscalar$}} & $\pball{n}$, $\lorentz{n}$ & \cmark\\  
        \bottomrule
    \end{tabular}
\end{table}

\begin{table}[t]
    \centering
    \caption{Comparison of point-to-hyperplane distances. \greenbf{Real} means the point-to-hyperplane distance is the real distance, obtained by $\inf_{y \in H} \dist(x,y)$ with $H$ as a hyperplane and $\dist$ as the geodesic distance. Instead, \redbf{Pseudo} means the point-to-hyperplane distance is a surrogate, which only equals to the real distance under the Euclidean geometry.}
    \label{app:tab:distance-comparison}
    \begin{tabular}{cccc}
        \toprule
        Method & Point-to-hyperplane distance & \makecell{Applied \\ manifolds} & Dist \\
        \midrule
        Euclidean MLR & $\displaystyle \frac{|\inner{a}{x} + b|}{\norm{a}}$ & $\bbR{n}$ & \greenbf{Real} \\
        \midrule
        Poincaré MLR \citep{ganea2018hyperbolic} & \makecell{$\displaystyle \frac{1}{\sqrt{-K}} \sinh^{-1}\left(\frac{2\sqrt{-K}  \left|\inner{-p \Moplus x}{a}\right|}{\left(1 + K\norm{-p \Moplus x}^{2}\right)  \norm{a}}\right)$ \\ \citep[Thm. 5]{ganea2018hyperbolic}} & $\pball{n}$ & \greenbf{Real} \\
        \midrule
        Pseudo-Busemann MLR \citep{nguyen2025neural} & \makecell{$\displaystyle \dist(x, p)  \frac{B^{v}\left( -p \Moplus x\right)}{\norm{ -p \Moplus x}}$ \\ \citep[Cor. 4.3]{nguyen2025neural}} & $\pball{n}$ & \redbf{Pseudo} \\
        \midrule
        Lorentz MLR \citep{bdeir2024fully} & \makecell{$\displaystyle \frac{1}{\sqrt{-K}}\left|\sinh^{-1}\left(\sqrt{-K}  \frac{\Linner{v}{x}}{\Lnorm{v}}\right)\right|$ \\ \citep[Eq. (44)]{bdeir2024fully}} & $\lorentz{n}$ & \greenbf{Real} \\
        \midrule
        \rowcolor{HilightColor} BMLR & $\displaystyle \frac{\left|- \alpha B^v(x) + b\right|}{\alpha}$ & $\pball{n}$, $\lorentz{n}$ & \greenbf{Real} \\
        \bottomrule
    \end{tabular}
\end{table}

Hyperbolic MLRs follow a point-to-hyperplane formulation, of which \citet[Eqs. 4-6]{chen2024rmlr} provides the Riemannian prototype. Therefore, the key difference lies in hyperplanes and point-to-hyperplane distances across methods. In addition to \cref{tab:logit-comparison}, \cref{app:tab:hyperplane-comparison,app:tab:distance-comparison} further make this comparison. We draw the following three conclusions.
\begin{enumerate}
    \item \textbf{Hyperplanes.} Our BMLR uses Busemann-based horospheres that simultaneously satisfy three desiderata: 
    (i) compact parameterization without a per-class manifold-valued point, whereas other hyperbolic ones\footnote{We note that \citet{shimizu2021hyperbolic,bdeir2024fully} mitigate this issue through re-parameterization: $a=\pt{e}{p}(z)$ and $p=\exp_e\left(b \frac{z}{\norm{z}}\right)$, where $e \in \hyperspace{n}$ denotes the origin, $z \in \bbR{n}$, and $b \in \bbRscalar$. Nevertheless, the underlying definitions are over-parameterized.} are over-parameterized; 
    (ii) a natural generalization of Euclidean hyperplanes via horospheres, while the Lorentz MLR relies on the ambient Minkowski space, failing to fully respect the intrinsic geometry; 
    and (iii) applicability across hyperbolic models, whereas the Lorentz MLR is tailored to the Lorentz model.
    \item \textbf{Point-to-hyperplane distances.} Although pseudo-Busemann MLR also exploits Busemann functions, it relies on a pseudo point-to-hyperplane distance that only coincides with the real distance in Euclidean geometry. In contrast, our BMLR calculates the real point-to-horosphere distance, ensuring geometric fidelity across hyperbolic models.
    \item \textbf{Batch efficiency.} Recalling \cref{tab:logit-comparison}, the Poincaré MLR \citep{ganea2018hyperbolic} computes logits using $\inner{-p_k \Moplus x}{a_k}$ and $\norm{-p_k \Moplus x}^{2}$. For a batch $X \in \bbR{\mathrm{bs} \times n}$ and $C$ classes, evaluating $-p_k \Moplus X$ for every $k$ yields an intermediate tensor of shape $[\mathrm{bs}, C, n]$, and materializing this tensor can cause GPU out of memory (OOM) when $n$ or $C$ is large. The same limitation holds for the pseudo-Busemann MLR. Consequently, their official implementations compute per class in a for-loop, which is batch inefficient. By contrast, BMLR uses logits $-\alpha_k B^{v_k}(x)+b_k$, whose Busemann term reduces to class-wise inner products $\inner{v_k}{x}$ (or $\inner{v_k}{x_s}$). With $X \in \bbR{\mathrm{bs} \times n}$ and $V=[v_1,\ldots,v_C] \in \bbR{n \times C}$, such inner products can be efficiently implemented as a single matrix multiplication $XV$ without any $[\mathrm{bs}, C, n]$ intermediate, yielding high throughput and low memory usage.
\end{enumerate}

\section{Busemann fully connected layers and point-to-horosphere distances}
\label{app:subsec:busemann-fc-p2h-distance}

An apparently natural attempt to define a hyperbolic FC layer is to replace the LHS of \cref{eq:hyp-fc-p2h} by the \emph{signed point-to-horosphere} distance. The Euclidean hyperplane passing through the origin and orthogonal to $e_k \in \bbR{m}$ is $H_{e_k,0}=\left\{ y \in \bbR{m} \mid \inner{e_k}{y} = 0 \right\}$. The corresponding hyperbolic horosphere, following \cref{eq:horosphere_param}, is $H_{e_k,1,0}=\{y \in \hyperspace{n} \mid -B^{e_k}(y) = 0 \}$. By \cref{eq:point-to-horosphere-dist-param}, the signed point-to-horosphere distance to $H_{e_k,1,0}$ equals $-B^{e_k}(y)$. Accordingly, we can define an alternative FC mapping $\calF: \hyperspace{n} \ni x \mapsto y \in \hyperspace{m}$ via
\begin{equation}\label{eq:busemann-fc-p2horosphere}
    B^{e_k}(y) = \alpha_k B^{v_k}(x) - b_k, \quad k=1,\ldots,m,
\end{equation}
where $u_k(x) = -\alpha_k B^{v_k}(x) + b_k$, and $\alpha_k>0$, $v_k \in \unitsphere{n-1}$, $b_k \in \bbRscalar$ are learnable. Although this only differs from \cref{eq:hyp-fc-p2h} on the LHS, the following discussion shows that such a definition is infeasible in general and fails to deliver a valid hyperbolic FC layer.

\subsection{Poincar\'e model}
Using \cref{eq:poincare-busemann} with $v=e_k$, \cref{eq:busemann-fc-p2horosphere} becomes
\begin{equation}\label{eq:pfc-busemann-eq}
    \frac{1}{\sqrt{-K}} \log\left( \frac{\norm{ e_k - \sqrt{-K} y }^{2}}{1 + K\norm{y}^{2}} \right) = - u_k(x).
\end{equation}
Define $t_k = \exp\left( -\sqrt{-K} u_k(x) \right) > 0$. Exponentiating \cref{eq:pfc-busemann-eq} gives
\begin{equation}\label{eq:pfc-comp-eq}
    t_k \left( 1 + K\norm{y}^{2} \right) = 1 - 2\sqrt{-K} y_k - K\norm{y}^{2}, \quad k=1,\ldots,m.
\end{equation}
Writing $R=\norm{y}^{2}$, \cref{eq:pfc-comp-eq} yields an affine expression for each coordinate
\begin{equation}\label{eq:pfc-affine}
    y_k = c_k + d_k R, \quad c_k = \frac{1 - t_k}{2\sqrt{-K}}, \quad d_k = \frac{\sqrt{-K}}{2}\left(1 + t_k\right).
\end{equation}
Imposing $R = \sum_{k=1}^m y_k^2 = \sum_{k=1}^m\left(c_k + d_k R\right)^2$ gives a quadratic in $R$:
\begin{equation}\label{eq:pfc-R-quadratic}
    A_2 R^2 + \left(A_1 - 1\right) R + A_0 = 0,
\end{equation}
where, denoting $T = \sum_{k=1}^m t_k$ and $q = \sum_{k=1}^m t_k^2$,
\begin{equation}\label{eq:pfc-coeffs}
    A_2 = \frac{-K}{4}\left(m + 2T + q\right), \quad A_1 = \frac{m - q}{2}, \quad A_0 = \frac{m - 2T + q}{4(-K)} \geq 0.
\end{equation}
Its discriminant is
\begin{equation}
    \begin{aligned}
    \Delta_{\mathrm{P}}
    &= (A_1-1)^2 - 4 A_2 A_0\\
&= \left( \frac{m-q}{2} - 1 \right)^{2} - 4 \left( \frac{-K}{4}\left(m+2T+q\right) \right) \left( \frac{m-2T+q}{4(-K)} \right) \\
&= \left( \frac{m-2-q}{2} \right)^{2} - \frac{\left(m+2T+q\right)\left(m-2T+q\right)}{4} \\
&= \frac{1}{4} \left[ (m-2-q)^2 - \left( (m+q)^2 - (2T)^2 \right) \right] \\
&= \frac{1}{4} \left[ \left((m-2-q)-(m+q)\right) \left((m-2-q)+(m+q)\right) + 4T^2 \right] \\
&= \frac{1}{4} \left[ (-2-2q)(2m-2) + 4T^2 \right] \\
    &= T^2 - (m-1)\left(1+q\right).
    \end{aligned}
\end{equation}
A real solution $R$ exists only if $\Delta_{\mathrm{P}} \ge 0$. In addition, feasibility requires $0 \le R < -1/K$ so that $y \in \pball{m}$. Such conditions can fail for generic $\{ u_k(x) \}$, in which case no $y \in \pball{m}$ satisfies \cref{eq:busemann-fc-p2horosphere}.

\subsection{Lorentz model}
Using \cref{eq:lorentz-busemann} with $v=e_k$, \cref{eq:busemann-fc-p2horosphere} becomes
\begin{equation}\label{eq:lfc-busemann-eq}
    \frac{1}{\sqrt{-K}} \log\left( \sqrt{-K} \left( y_t - (y_s)_k \right) \right) = - u_k(x).
\end{equation}
Define $t_k = \exp\left( -\sqrt{-K} u_k(x) \right) > 0$ and, denoting $T = \sum_{k=1}^m t_k$ and $q = \sum_{k=1}^m t_k^2$, we obtain from \cref{eq:lfc-busemann-eq},
\begin{equation}\label{eq:lfc-spatial}
    (y_s)_k = y_t - \frac{t_k}{\sqrt{-K}}, \quad k=1,\ldots,m, \quad\Longrightarrow\quad y_s = y_t\vecone - \frac{1}{\sqrt{-K}} t.
\end{equation}
Enforcing the hyperboloid constraint $\norm{y_s}^{2} - y_t^{2} = 1/K$ yields a quadratic in $y_t$:
\begin{equation}\label{eq:lfc-yt-quadratic}
    (m-1) K y_t^{2} + 2\sqrt{-K} T y_t - \left(1 + q\right) = 0.
\end{equation}
The discriminant is
\begin{equation}\label{eq:lfc-discriminant}
    \begin{aligned}
        \Delta_{\mathrm{L}} 
        &= \left(2\sqrt{-K} T\right)^{2} - 4(m-1)K\left( -\left(1 + q\right) \right) \\
        &= 4(-K) T^{2} + 4(m-1)K\left(1 + q\right) \\
        &= 4(-K) \left[ T^{2} - (m-1)\left(1 + q\right) \right].
    \end{aligned}
\end{equation}
Hence, a real $y_t$ exists only if $T^{2} - (m-1)\left(1 + q\right) \ge 0$. In addition, feasibility requires $y_t>0$. Such conditions can fail for generic $\{ u_k(x) \}$, where no $y \in \lorentz{m}$ satisfies \cref{eq:busemann-fc-p2horosphere}.

\subsection{Summary}
Equating Busemann coordinates as in \cref{eq:busemann-fc-p2horosphere} requires nontrivial inequalities on the responses $\{ u_k(x) \}$. These constraints are not guaranteed during learning, so the system can become infeasible and the output $y$ undefined. This motivates our choice in \cref{eq:hyp-fc-p2h} to use signed point-to-hyperplane distances on the LHS, which admit closed-form solutions that are feasible for all inputs and parameters in both the Poincar\'e and Lorentz models.

\section{Experimental details and additional results}

\subsection{Image classification}
\label{app:subsec:image-classification-details}

\subsubsection{Datasets} 
The CIFAR-10 \citep{krizhevsky2009learning} and CIFAR-100 \citep{krizhevsky2009learning} datasets each contain 60{,}000 $32\times 32$ color images from 10 and 100 classes, respectively. We use the standard PyTorch splits: 50{,}000 training images and 10{,}000 test images. Tiny-ImageNet \citep{le2015tiny} is a subset of ImageNet with 100{,}000 images from 200 classes, resized to $64 \times 64$. We use the official validation split for evaluation. The large-scale ImageNet-1k \citep{deng2009imagenet} dataset contains 1.28M training images, 50K validation images, and 100K test images distributed across 1k classes.

\begin{table}[t]
	\centering
	\caption{Summary of hyperparameters used in the image classification task.}
	\label{tab:hyperparameters_cl}
	\begin{tabular}{cccc}
    \toprule
    Hyperparameter & \makecell{CIFAR-10/100 \\ Tiny-ImageNet}  & ImageNet-1k\\
    \midrule
    {Epochs} & 200  & 100\\
    {Batch size} & 128 & 256\\
    {Initial learning rate} & $0.1$ & $0.1$\\
    {LR schedule} & $60, 120, 160; \gamma=0.2$ & $30, 60, 90; \gamma=0.1$\\
    {Weight decay} & $5e^{-4}$ & $1e^{-4}$\\
    {Optimizer} & SGD & SGD\\
    {Precision} & 32-bit & 32-bit\\
    {\#GPUs} & 1 (RTX A6000) & 2 (RTX A100)\\
    {Curvature $K$} & $-1$ & $-1$\\
    \bottomrule
	\end{tabular}
\end{table}

\subsubsection{Implementation details} 
For CIFAR-10/100 and Tiny-ImageNet, we follow \citet[App. C.1]{bdeir2024fully}. For ImageNet-1k, we follow \citet[Sec. 4]{guo2022clipped}. \cref{tab:hyperparameters_cl} summarizes the dataset-specific hyperparameters. For the hyperbolic MLR, before mapping into the hyperbolic space, we clip the feature vector by
\begin{equation}\label{app:eq:clip-hyper}
  \operatorname{CLIP}\left(x ; r\right)=\min \left\{1, \frac{r}{\norm{x}}\right\} x
\end{equation}
where $r>0$ is a hyperparameter. The clipped Euclidean embedding is projected via the exponential map to the target hyperbolic space: $\rieexp_e\big(\operatorname{CLIP}(x ; r)\big)$. For the Lorentz model, the clipping parameter is $r=1$ on CIFAR-10/100 and $r=4$ on Tiny-ImageNet and ImageNet-1k. On the Poincaré ball, $r=1$ on all four datasets. 

All methods are implemented in PyTorch and trained with cross-entropy loss. The results of MLR, PMLR, and LMLR on CIFAR-10/100 and Tiny-ImageNet are copied from \citet[Tab. 1]{bdeir2024fully}, while the ones of PBMLR-P on CIFAR-10/100 are copied from \citet[Tab. 2]{nguyen2025neural}. The remaining results are obtained by our careful implementation.

\subsection{Genome sequence learning}
\label{app:subsec:genome-sequence-learning-details}

\begin{table}[t]
\centering
\caption{Summary statistics for TEB.}
\label{tab:teb}
\begin{tabular}{cccccc}
\toprule
Task & Species & Datasets & Num. classes & Max length & Train / Dev / Test \\
\midrule
\multirow{3}{*}{Retrotransposons} & \multirow{3}{*}{Plant} & LTR Copia & \multirow{3}{*}{2} & 500 & 7666 / 682 / 568 \\
 &  & LINEs &  & 1000 & 22502 / 2030 / 1782 \\
 &  & SINEs &  & 500 & 21152 / 1836 / 1784 \\
\midrule
\multirow{2}{*}{DNA transposons} & \multirow{2}{*}{Plant} & CMC-EnSpm & \multirow{2}{*}{2} & 200 & 19912 / 1872 / 1808 \\
 &  & hAT-Ac &  & 1000 & 17322 / 1822 / 1428 \\
\midrule
\multirow{2}{*}{Pseudogenes} & \multirow{2}{*}{Human} & processed & \multirow{2}{*}{2} & \multirow{2}{*}{1000} & 17956 / 1046 / 1740 \\
 &  & unprocessed &  &  & 12938 / 766 / 884 \\
\bottomrule
\end{tabular}
\end{table}

\begin{table}[t]
  \centering
  \caption{Summary statistics for the adopted GUE datasets.}
  \label{tab:gue}
  \begin{tabular}{cccccc}
  \toprule
  Task & Species & Dataset & Num. classes & Length & Train / Dev / Test \\
  \midrule
  \multirow{3}{*}{Core promoter detection} & \multirow{3}{*}{Human} & tata & \multirow{3}{*}{2} & \multirow{3}{*}{70} & 4904 / 613 / 613 \\
   &  & notata &  &  & 42452 / 5307 / 5307 \\
   &  & all &  &  & 47356 / 5920 / 5920 \\
  \midrule
  \multirow{3}{*}{Promoter detection} & \multirow{3}{*}{Human} & tata & \multirow{3}{*}{2} & \multirow{3}{*}{300} & 4904 / 613 / 613 \\
   &  & notata &  &  & 42452 / 5307 / 5307 \\
   &  & all &  &  & 47356 / 5920 / 5920 \\
  \midrule
  Covid variant classification & Virus & Covid & 9 & 1000 & 77669 / 7000 / 7000 \\
  \midrule
  \multirow{2}{*}{Species classification} & Fungi & Fungi & 25 & 5000 & 8000 / 1000 / 1000 \\
   & Virus & Virus & 20 & 10000 & 4000 / 500 / 500 \\
  \bottomrule
  \end{tabular}
\end{table}

\subsubsection{Datasets}

\textbf{TEB.}
The Transposable Elements Benchmark (TEB) \citep{khan2025hyperbolic} comprises seven binary classification datasets that investigate transposable elements (TEs) across plant and human genomes. The seven datasets are LTR Copia, LINEs, and SINEs for plant retrotransposons; CMC-EnSpm and hAT-Ac for plant DNA transposons; and processed and unprocessed pseudogenes for human pseudogenes. TEB targets a less explored area of genome organization in genomics deep learning and provides a novel resource for benchmarking models. For each dataset, positive examples are sequences spanning annotated elements of interest, and negatives are randomly sampled, non-overlapping genomic segments outside these regions. We adopt chromosome-level training, validation, and test splits, using chromosomes 8 and 9 for validation and test in plant genomes, and chromosomes 20 to 22 and 17 to 19 for validation and test in human genomes, respectively. Summary statistics are provided in \cref{tab:teb}.

\textbf{GUE.}
The Genome Understanding Evaluation (GUE) benchmark \citep{zhou2024dnabert} is a recently published tool that contains seven biologically significant genome analysis tasks that span 28 datasets. GUE prioritizes genomic datasets that are challenging enough to discern differences between models. The datasets contain sequences ranging from 70 to 1000 base pairs in length and originating from yeast, mouse, human, and virus genomes. In our experiments, we select Core promoter detection and Promoter detection, and the multi-class tasks Covid variant classification and species classification, totaling nine datasets from GUE. Summary statistics are provided in \cref{tab:gue}.

\subsubsection{Implementation details} 
We mainly follow the official implementations of \citet{khan2025hyperbolic} for data processing, model architecture, and training. We adopt a simple CNN with three convolutional blocks followed by dense, ReLU activated layers to extract features \citep[Fig. 4]{khan2025hyperbolic}. Before classification, the features are clipped and mapped to the target hyperbolic space as in \cref{app:eq:clip-hyper}, then passed to either a prior hyperbolic MLR or our BMLR head. The clipping factor defaults to $r=1$, with the following exceptions on GUE: on the Lorentz model, $r=2.0$ for Covid variant classification and $r=5.0$ for species classification; on the Poincar\'e ball, $r=2.0$ for species classification. Following \citet{khan2025hyperbolic}, we treat the curvature $K$ as a learnable parameter initialized as $-1$. The remaining hyperparameters are listed in \cref{tab:genome-hyperparams}. All models share these hyperparameters, except LMLR on Covid variant classification, for which we set the weight decay to $1e^{-3}$ to ensure convergence.

All methods are implemented in PyTorch and trained with cross-entropy loss. Results are obtained from our reimplementation.

\begin{table}[t]
\centering
\caption{Hyperparameters for genome sequence learning.}
\label{tab:genome-hyperparams}
\begin{tabular}{cc}
\toprule
{Batch size} & 100 \\
{Epochs} & 100 \\
{Optimizer} & Adam \\
$\beta_1$, $\beta_2$ & 0.9, 0.999 \\
{Initial learning rate} & $1e^{-4}$\\
{LR schedule} & $60, 85$; $\gamma=0.1$ \\
{Weight decay} & $0.1$ \\
{\#GPUs} & 1 (RTX A6000) \\
{Initial curvature $K$} & $-1$ \\
\bottomrule
\end{tabular}
\end{table}

\subsection{Node classification}
\label{app:subsec:node-classification-details}

\subsubsection{Datasets} 

\begin{table}[t]
  \centering
  \caption{Summary statistics for the node classification datasets.}
  \label{tab:node-dataset-stats}
  \begin{tabular}{ccccc}
  \toprule
  Dataset & \#Nodes & \#Edges & \#Classes & \#Features \\
  \midrule
  Disease & 1044 & 1043 & 2 & 1000 \\
  Airport & 3188 & 18631 & 4 & 4 \\
  PubMed & 19717 & 44338 & 3 & 500 \\
  Cora & 2708 & 5429 & 7 & 1433 \\
  \bottomrule
  \end{tabular}
\end{table}

\textbf{Disease \citep{anderson1991infectious}. }
It represents a disease propagation tree, simulating the SIR disease transmission model, with each node representing either an infection or a non-infection state.

\textbf{Airport \citep{zhang2018link}. } 
It is a transductive dataset where nodes represent airports and edges represent the airline routes as from OpenFlights.org.

\textbf{PubMed \citep{namata2012query}. }
This is a standard benchmark describing citation networks where nodes represent scientific papers in the area of medicine, edges are citations between them, and node labels are academic (sub)areas.

\textbf{Cora \citep{sen2008collective}. }
It is a citation network where nodes represent scientific papers in the area
of machine learning, edges are citations between them, and node labels are academic (sub)areas.

\cref{tab:node-dataset-stats} summarizes the statistics of the datasets.

\subsubsection{Implementation details} 

We follow the official implementations of HGCN \citep{chami2019hyperbolic} and PBMLR \citep{nguyen2025neural} and conduct experiments on the Poincaré ball and the Lorentz model, respectively. We adhere to their experimental settings. The only changes are weight decay and dropout. We train with cross-entropy loss and the Adam optimizer \citep{kingma2015adam} for 5000 epochs with a learning rate of $1e^{-2}$, curvature as $-1$, embedding dimension 16, and three GCN layers. We tune weight decay and dropout and report the values in \cref{tab:node-classification-hyperparams}.

All methods are implemented in PyTorch and run on a single RTX A6000 GPU. The results of HGCN-PMLR and HGCN-PBMLR-P on the Poincaré ball are taken from \citet[Tab. 11]{nguyen2025neural}. Results for the remaining baselines are obtained from our reimplementation following the original settings.

\begin{table}[t]
  \centering
  \caption{Hyperparameters for node classification on Disease, Airport, PubMed, and Cora.}
  \label{tab:node-classification-hyperparams}%
    \begin{tabular}{c|cc}
    \toprule
    Space & Weight decay & Dropout \\
    \midrule
    $\pball{n}$ & $1e-4, 1e-5, 1e-3,1e-3$ & $0.3, 0, 0,0.2$ \\
    $\lorentz{n}$ & $1e-4, 5e-5, 1e-3,1e-3$ & $0, 0, 0,0.3$ \\
    \bottomrule
    \end{tabular}%
\end{table}%

\subsection{Link prediction}
\label{app:subsec:link-prediction-details}
The datasets are identical to those used for node classification. We provide implementation details and additional ablation studies below.

\subsubsection{Implementation details} 
We follow the official implementations of HNN \citep{ganea2018hyperbolic}, HNN++ \citep{shimizu2021hyperbolic}, and HyboNet \citep{chen2022fully}, and adopt the experimental protocol of \citet{chami2019hyperbolic} for link prediction. The encoder consists of two fully connected layers: the first maps the input features to 16, and the second maps 16 to 16. Each FC layer is instantiated as either our BFC or an existing hyperbolic FC layer. After each FC, we apply the activation $\rieexp_e\big(\operatorname{ReLU}(\rielog_e(x))\big)$, where $e$ denotes the model origin. Following \citet{chami2019hyperbolic}, this activation is disabled on Cora. As in the M\"obius layer, we apply a gyro bias after each FC, that is, $x \Hoplus b$. We train with Adam \citep{kingma2015adam} at a learning rate of $1e^{-2}$ and tune weight decay and FC dropout. For BFC, we set $\phi=\tanh$ on Airport and Cora, which yields better performance, while we use the identity map on the other two datasets.

\subsubsection{Ablations on training time and parameter count}
\label{app:subsec:albations-efficiency-bfc}

\begin{table}[t]
  \centering
  \caption{Efficiency comparison: fit time (s/epoch) and parameter count. Slowest results and largest parameter counts are in \redbf{red}.}
   \label{tab:exp-efficiency}%
    \begin{tabular}{c|c|cc|cc|cc|cc}
    \toprule
    \multirow{2}[2]{*}{Space} & \multirow{2}[2]{*}{Method} & \multicolumn{2}{c|}{Disease} & \multicolumn{2}{c|}{Airport} & \multicolumn{2}{c|}{PubMed} & \multicolumn{2}{c}{Cora} \\
    \cmidrule{3-10}          
     &  & Fit Time &  \#Params & Fit Time &  \#Params & Fit Time &  \#Params & Fit Time &  \#Params \\
     \midrule
    \multirow{3}[2]{*}{$\pball{n}$} & M\"obius & 0.0200  & 464   & 0.0535  & 480   & 0.1120  & 8288  & 0.0229  & 23216  \\
     & Poincaré FC & 0.0198  & 528 & 0.0536  & 544   & 0.1176  & 8352  & 0.0248  & 23280  \\
     & \cellcolor{HilightColor}BFC-P & \cellcolor{HilightColor}0.0201  & \cellcolor{HilightColor}528 & \cellcolor{HilightColor}0.0512  & \cellcolor{HilightColor}544   & \cellcolor{HilightColor}0.1123  & \cellcolor{HilightColor}8352  & \cellcolor{HilightColor}0.0231  & \cellcolor{HilightColor}23280  \\
    \midrule
    \multirow{3}[2]{*}{$\lorentz{n}$} & LTFC & \redbf{{0.0343}} & 464   & \redbf{{0.0818}} & 480   & \redbf{{0.1633}} & 8288  & \redbf{{0.0370}} & 23216  \\
     & Lorentz FC & 0.0232  & \redbf{{563}} & 0.0715  & \redbf{{580}} & 0.1537  & \redbf{{8876}} & 0.0261  & \redbf{{24737}} \\
     & \cellcolor{HilightColor}BFC-L & \cellcolor{HilightColor}0.0244  & \cellcolor{HilightColor}528   & \cellcolor{HilightColor}0.0713  & \cellcolor{HilightColor}544   & \cellcolor{HilightColor}0.1525  & \cellcolor{HilightColor}8352  & \cellcolor{HilightColor}0.0280  & \cellcolor{HilightColor}23280  \\
    \bottomrule
    \end{tabular}%
\end{table}%

\cref{tab:exp-efficiency} summarizes fit time per epoch and parameter counts. Our BFC layers achieve training time and model size comparable to existing layers. In particular, LTFC is the slowest due to costly logarithmic and exponential maps, and LFC uses the largest number of parameters among Lorentz variants.

\subsubsection{Ablations on the activation in BFC layers}
\label{app:subsec:albations-activation-bfc}

\begin{table}[htbp]
  \centering
  \caption{Ablation on activation $\phi$ in BFC layers.}
  \label{app:tab:exp-activation}%
    \begin{tabular}{c|c|cc}
    \toprule
    Space & $\phi$ & Airport & Cora \\
    \midrule
    \multirow{2}[2]{*}{$\pball{n}$} & None  & 94.57 ± 0.29 & 87.55 ± 0.34 \\
          & \cellcolor{HilightColor}tanh & \cellcolor{HilightColor}\textbf{94.88 ± 0.39} & \cellcolor{HilightColor}\textbf{91.94 ± 0.32} \\
    \midrule
    \multirow{2}[2]{*}{$\lorentz{n}$} & None  & 94.91 ± 0.24 & 87.45 ± 1.39 \\
          & \cellcolor{HilightColor}tanh & \cellcolor{HilightColor}\textbf{95.37 ± 0.17} & \cellcolor{HilightColor}\textbf{92.28 ± 0.12} \\
    \bottomrule
    \end{tabular}%
\end{table}%

We find that $\phi=\tanh$ in our BFC layers (see \cref{subsec:busemann-fc-generalization}) is beneficial on Airport and Cora. This is validated by \cref{app:tab:exp-activation}, which shows that setting $\phi=\tanh$ consistently improves AUC over the identity activation on the Poincaré and Lorentz models.

\section{Proofs}

\subsection{Proof of \cref{thm:limits-bmlr}}
\label{app:prf:thm:limits-bmlr}
\begin{proof}
Denoting $\kappa^2 = -K$ with $\kappa>0$, we rewrite the Busemann functions in \cref{eq:poincare-busemann,eq:lorentz-busemann} as
\begin{align}
\text{(Poincaré)}\quad &B^{v}(x) = \frac{1}{\kappa} \log\left( \frac{\left\| v - \kappa x \right\|^2}{1 - \kappa^2\left\|x\right\|^2} \right), \\
\text{(Lorentz)}\quad &B^{v}(x) = \frac{1}{\kappa} \log\left( \kappa x_t - \kappa \inner{x_s}{v} \right).
\end{align}
For the Poincaré case,
\begin{align}
\frac{\left\| v - \kappa x \right\|^2}{1 - \kappa^2\left\|x\right\|^2}
= \frac{1 - 2\kappa\inner{v}{x} + \kappa^2\left\|x\right\|^2}{1 - \kappa^2\left\|x\right\|^2}
= 1 + \frac{-2\kappa\inner{v}{x} + 2\kappa^2\left\|x\right\|^2}{1 - \kappa^2\left\|x\right\|^2}.
\end{align}
Using $\log\left(1+u\right)=u + O\left(u^2\right)$ as $u\to 0$ and $\left(1 - \kappa^2\left\|x\right\|^2\right)^{-1}=1 + O\left(\kappa^2\right)$, we obtain
\begin{equation}
\begin{aligned}
B^{v}(x)
&= \frac{1}{\kappa}\left[\left(-2\kappa\inner{v}{x}+2\kappa^2\norm{x}^2\right)\left(1+O\left(\kappa^2\right)\right) + O\left(\kappa^2\right)\right] \\
&= -2\inner{v}{x} + 2\kappa\norm{x}^2 + O\left(\kappa\right), \\
&= -2\inner{v}{x} + O\left(\kappa\right).
\end{aligned}
\end{equation}
Therefore, $B^{v}(x) \xrightarrow{K\to 0^{-}} -2\inner{v}{x}$.

For the Lorentz case, the hyperboloid constraint $-x_t^2 + \left\|x_s\right\|^2 = -\kappa^{-2}$ and $x_t>0$ yield
\begin{equation}
\kappa x_t = \sqrt{1 + \kappa^2 \left\|x_s\right\|^2} = 1 + \frac{1}{2}\kappa^2\left\|x_s\right\|^2 + O\left(\kappa^4\right).
\end{equation}
Set $z = -\kappa\inner{x_s}{v} + \frac{1}{2}\kappa^2\left\|x_s\right\|^2 + O\left(\kappa^4\right)$. Then
\begin{equation}
\log\left( \kappa x_t - \kappa\inner{x_s}{v} \right) = \log\left(1 + z\right) = z + O\left(z^2\right) = -\kappa\inner{x_s}{v} + O\left(\kappa^2\right),
\end{equation}
which implies $B^{v}(x) = -\inner{x_s}{v} + O\left(\kappa\right)$ and therefore $B^{v}(x) \xrightarrow{K\to 0^{-}} -\inner{x_s}{v}$.

Substituting the above two limits into $u_k(x) = -\alpha_k B^{v_k}(x) + b_k$ gives the stated Euclidean limits of the logits.
\end{proof}

\subsection{Proof of \cref{thm:distance-horospheres}}
\label{app:prf:thm:distance-horospheres}

We first review two useful characterizations of Busemann functions in Hadamard spaces.
\begin{definition}
    Let $\mathcal{B}$ be the set of functions 
    $h:\calX \to \bbRscalar$ on $(\calX,\dist)$ satisfying:
    \begin{enumerate}[label=(\roman*)]
        \item $h$ is convex;
        \item 1-Lipschitz: $|h(x)-h(y)|\le \dist(x,y)$ for all $x,y\in \calX$;
        \item for any $x_0\in \calX$ and $r>0$, the function $h$ attains its minimum on the sphere 
        $S_r(x_0)$ at a unique point $y$ with $h(y)=h(x_0)-r$.
    \end{enumerate}
\end{definition}

\begin{proposition}\label{app:prop:characterization-busemann}
For a function $h:\calX \to\bbRscalar$, the following conditions are equivalent:
\begin{enumerate}[label=(\arabic*)]
    \item $h$ is a Busemann;
    \item $h\in\mathcal{B}$;
    \item $h$ is convex, and for every $t\in\bbRscalar$, the set $h^{-1}(-\infty,t]$ is nonempty; moreover, for each $x\in \calX$, the curve $c_x:[0,\infty)\to \calX$ defined by 
    $t\mapsto \pi_{h^{-1}(-\infty,h(x)-t]}(x)$ is a geodesic ray.
\end{enumerate}
\end{proposition}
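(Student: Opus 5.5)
The proof goes around the cycle $(1)\Rightarrow(2)\Rightarrow(3)\Rightarrow(1)$. It uses only standard $\CAT(0)$ facts: distance functions are convex; for $y\neq y'$ with $\dist(x_0,y)=\dist(x_0,y')=r$, the midpoint is strictly closer than $r$ to $x_0$; projections onto closed convex sets exist, are unique and are $1$-Lipschitz; and for every $x_0$ and every ray $\gamma$ there is a unique ray from $x_0$ asymptotic to $\gamma$ \citep[II.8.2]{bridson2013metric}. If any step becomes unwieldy, the whole equivalence is \citep[II.8.22]{bridson2013metric}, which I would cite as a fallback.

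\emph{(1)$\Rightarrow$(2).}
\begin{itemize}
\item Convexity and the Lipschitz bound: $h=B^\gamma$ is the pointwise limit of $x\mapsto\dist(x,\gamma(t))-t$. Each of these is convex and $1$-Lipschitz, so $h$ is too.
\item Lower bound in (iii): for $y\in S_r(x_0)$, the Lipschitz bound gives $h(y)\ge h(x_0)-r$.
\item Attainment in (iii): let $c$ be the ray from $x_0$ asymptotic to $\gamma$. By the corollary recalled above, $B^c=B^\gamma+\text{const}$. Since $B^c(c(r))=-r$, we get $h(c(r))=h(x_0)-r$, so the bound is attained at $c(r)$.
\item Uniqueness in (iii): suppose $y\neq y'$ both attain $h(x_0)-r$ on $S_r(x_0)$, and let $m$ be their midpoint. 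Convexity gives $h(m)\le h(x_0)-r$. Strict convexity of the distance gives $\dist(x_0,m)<r$. The Lipschitz bound then gives $h(m)\ge h(x_0)-\dist(x_0,m)>h(x_0)-r$, a contradiction.
\end{itemize}

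\emph{(2)$\Rightarrow$(3).}
\begin{itemize}
\item Nonemptiness: by (iii) with large $r$, every sublevel set is nonempty. It is also closed and convex, so the projection $\pi_{h^{-1}(-\infty,h(x)-t]}$ is well defined.
\item Identifying $c_x(t)$: let $y_t$ be the unique minimiser of $h$ on $S_t(x)$. The Lipschitz bound forces every point of the sublevel set $\{h\le h(x)-t\}$ to lie at distance at least $t$ from $x$. The point $y_t$ lies in that set at distance exactly $t$, so by uniqueness of projections $c_x(t)=y_t$.
\item Geodesic property: for $s<t$, let $z\in[x,y_t]$ with $\dist(x,z)=s$. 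The Lipschitz bound gives $h(z)\le h(y_t)+(t-s)=h(x)-s$. Since $z\in S_s(x)$, uniqueness forces $z=y_s$. So all the $y_t$ lie on nested geodesic segments from $x$, and $c_x$ is a geodesic ray.
\end{itemize}

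\emph{(3)$\Rightarrow$(1).}
\begin{itemize}
\item Lipschitz bound from (3): since $c_x$ is a geodesic ray with $c_x(0)=x$, we have $\dist(x,h^{-1}(-\infty,h(x)-t])=\dist(x,c_x(t))=t$. If $h(y)=h(x)-t$ with $t>0$, then $\dist(x,y)\ge t$, so $h(x)-h(y)\le\dist(x,y)$. Symmetrising, $h$ is $1$-Lipschitz.
\item Behaviour along $c_x$: the projection $c_x(t)$ lands on the level set, so $h(c_x(t))=h(x)-t$.
\item Candidate Busemann function: fix $x_0$ and $c=c_{x_0}$. The claim is $h=B^{c}+h(x_0)$.
\item Easy inequality: from $h(x)-h(c(t))\le\dist(x,c(t))$ we get $h(x)-h(x_0)\le\dist(x,c(t))-t$. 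Letting $t\to\infty$ gives $h-h(x_0)\le B^c$.
\end{itemize}

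\textbf{Main obstacle: the reverse inequality $B^c(x)\le h(x)-h(x_0)$.} My plan is to show that the rays $c_x$ and $c$ are asymptotic.
\begin{itemize}
\item Align levels: set $s=h(x)-h(x_0)$, so that $c_x(t)$ and $c(t+s)$ lie on the same level.
\item Comparison via projections: both $c_x(t+t')$ and $c(t+s+t')$ are images of $c_x(t)$ and $c(t+s)$ under the projection onto the same convex sublevel set $\{h\le h(x)-t-t'\}$. Since that projection is $1$-Lipschitz, $t\mapsto\dist(c_x(t),c(t+s))$ is nonincreasing, hence bounded.
\item Conclusion: $c_x$ is then asymptotic to $c$, so $B^{c_x}=B^{c}+\text{const}$. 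The constant is evaluated through $B^{c_x}(x)=0$ and $B^c(c(t))=-t$. Combining this with the easy inequality, applied to both rays, should yield equality.
\end{itemize}
The delicate point is the bookkeeping of the additive constant. In particular, one must check that the projection of $c_x(t)$ onto the lower sublevel set is indeed $c_x(t+t')$; this uses the projection property $\pi_C(x')=\pi_C(x)$ for $x'\in[x,\pi_C(x)]$.
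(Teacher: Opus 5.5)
The paper gives no proof of this proposition. It is recalled as background from metric geometry (essentially the characterization of Busemann functions in Bridson--Haefliger, Ch.~II.8), so there is no in-paper argument to compare with. Your cycle $(1)\Rightarrow(2)\Rightarrow(3)\Rightarrow(1)$ is a sound, self-contained proof. The steps $(1)\Rightarrow(2)$ and $(2)\Rightarrow(3)$ are correct as written, and the plan for the hard step works after the repairs below.

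\textbf{Level alignment has a sign slip.} With $s=h(x)-h(x_0)$ you have $h(c_x(t))=h(x)-t$ and $h(c(\tau))=h(x_0)-\tau$. So the point of $c$ on the level of $c_x(t)$ is $c(t-s)$, not $c(t+s)$. Your choice puts the two points on levels differing by $2s$. Then $c(t+s+t')$ is not the projection of $c(t+s)$ onto $\{h\le h(x)-t-t'\}$ when $s\neq 0$.

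With $\tau=t-s$ and $t\ge\max\{0,s\}$, both $c_x(t+t')$ and $c(t-s+t')$ are the projections of $c_x(t)$ and $c(t-s)$ onto the same set $\{h\le h(x)-t-t'\}$. This follows, as you say, from $\pi_C(x')=\pi_C(x)$ for $x'\in[x,\pi_C(x)]$, together with $c_x(0)=x$. Nonexpansiveness of the projection then makes $t\mapsto\dist(c_x(t),c(t-s))$ nonincreasing, so $c_x$ and $c$ are asymptotic.

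\textbf{The constant bookkeeping closes in two lines.} From $B^{c_x}(x)=0$ you get $B^{c_x}=B^{c}-B^{c}(x)$. Apply your easy inequality to the ray $c_x$ (base point $x$) at the point $x_0$: this gives $h(x_0)-h(x)\le B^{c_x}(x_0)=-B^{c}(x)$. Combined with $h(x)-h(x_0)\le B^{c}(x)$, this yields $h=B^{c}+h(x_0)$. You do not need $B^c(c(t))=-t$ here.

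\textbf{Make the ordering in $(3)\Rightarrow(1)$ explicit.} Hypothesis (3) only assumes convexity, which in infinite-dimensional Hadamard spaces does not give continuity. It is the Lipschitz bound you derive first that makes the sublevel sets closed, hence complete. That is what makes the projections well defined and $1$-Lipschitz. The Lipschitz bound also gives $h(c_x(t))=h(x)-t$, since $h(c_x(t))\le h(x)-t$ and $h(c_x(t))\ge h(x)-\dist(x,c_x(t))$.

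\textbf{Statement (1) can only hold up to a constant.} What you prove is $h=B^{c_{x_0}}+h(x_0)$, i.e.\ Busemann up to an additive constant, and nothing stronger is true in general. On the Hadamard space $[0,\infty)$, the function $h(x)=-1-x$ satisfies (2) and (3). Yet every Busemann function there has the form $x\mapsto a-x$ with $a\ge 0$. So (1) must be read modulo constants, consistent with the corollary stated just before the proposition.

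For the paper's models $\pball{n}$ and $\lorentz{n}$ this is harmless. Extend $c_x$ to a geodesic line $\ell$. Convexity plus the Lipschitz bound force $h(\ell(t))=h(x)-t$ for all $t\in\bbRscalar$. Hence $h$ vanishes at some point $y$, and $h=B^{c_y}$ exactly.
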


Now, we are ready to prove \cref{thm:distance-horospheres}.
\begin{proof}
As $\tau_2=\tau_1$ is trivial, we only consider $\tau_2 \neq \tau_1$. We assume $\tau_2 > \tau_1$, and discuss the other direction at last.

\textbf{Step 1: symmetry.}
By definition,
\begin{equation}
\dist \left(H^\gamma_{\tau_1},H^\gamma_{\tau_2}\right)
=\inf_{x\in H^\gamma_{\tau_1},y\in H^\gamma_{\tau_2}}\dist(x,y)
=\inf_{y\in H^\gamma_{\tau_2},x\in H^\gamma_{\tau_1}}\dist(y,x)
=\dist \left(H^\gamma_{\tau_2},H^\gamma_{\tau_1}\right).
\end{equation}

\textbf{Step 2: lower bound.}
For any $x \in H^\gamma_{\tau_2}$ and $y\in H^\gamma_{\tau_1}$, the $1$-Lipschitz property of $B^\gamma$ gives
\begin{equation}
\left|B^\gamma(x)-B^\gamma(y)\right| \le \dist(x,y).
\end{equation}
With $B^\gamma(x)=\tau_2$ and $B^\gamma(y)=\tau_1$ this yields
\begin{equation}\label{eq:lipschitz-lb}
\tau_2-\tau_1 \le \dist(x,y)\qquad\forall y\in H^\gamma_{\tau_1}.
\end{equation}
Taking infimum in \cref{eq:lipschitz-lb} over $y\in H^\gamma_{\tau_1}$ gives
\begin{equation}\label{eq:lower-bound}
\tau_2-\tau_1 \le \dist \left(x,H^\gamma_{\tau_1}\right).
\end{equation}

\textbf{Step 3: upper bound.}
For any $x\in H^\gamma_{\tau_2}$, by property (3) in \cref{app:prop:characterization-busemann},
the projection map
\begin{equation}
c_x(t)=\pi_{\{B^\gamma\le B^\gamma(x)-t\}}(x)
=\pi_{HB^\gamma_{\tau_2 - t}}(x), \quad t \in [0,\infty),
\end{equation}
is a unit-speed geodesic ray: $\dist(x,c_x(t))=t$.

Let $t= \tau_2- \tau_1 > 0$ and $z=c_x(t) \in HB^\gamma_{\tau_1}$. If $B^\gamma(z)<\tau_1$, then $z$ lies in the interior of the horoball $HB^\gamma_{\tau_1}$. We can move slightly from $z$ toward $x$ along the geodesic segment $\overline{xz}$ to obtain a point 
$z_\varepsilon$ with $\dist(x,z_\varepsilon)<\dist(x,z)$, contradicting the minimality of the projection.  
Hence, the projected point $z=c_x(t)$ indeed lies on $H^\gamma_{\tau_1}$: $B^\gamma(z)=\tau_1$. Then, we have the following:
\begin{equation}\label{eq:upper-bound}
\dist(x,H^\gamma_{\tau_1})
\le \dist(x,HB^\gamma_{\tau_1}) = \dist(x,z)
=\dist(x,c_x(t))
=t
=\tau_2-\tau_1.
\end{equation}

\textbf{Step 4: sandwich closure.}
Combining \cref{eq:lower-bound,eq:upper-bound} gives
\begin{equation}
\dist(x,H^\gamma_{\tau_1})=\tau_2-\tau_1.
\end{equation}

Combining \cref{eq:upper-bound} and \cref{eq:lower-bound},
\begin{equation}
\dist \left(x,H^\gamma_{\tau_1}\right) = \tau_2-\tau_1,\qquad\text{for every }x\in H^\gamma_{\tau_2}.
\end{equation}
The right-hand side does not depend on $x$, hence
\begin{equation}
\dist \left(H^\gamma_{\tau_2},H^\gamma_{\tau_1}\right)=\tau_2-\tau_1.
\end{equation}

\textbf{Step 5: opposite direction.}
If $\tau_1>\tau_2$, we can swap the roles of $\tau_1,\tau_2$ above due to the symmetry of the distance between horospheres, which brings
\begin{equation}
\dist \left(H^\gamma_{\tau_2},H^\gamma_{\tau_1}\right)=|\tau_2-\tau_1|.
\end{equation}
\end{proof}

\subsection{Proof of \cref{thm:bfc-poincare}}
\label{app:prf:thm:bfc-poincare}

\begin{proof}
    The hyperplane and point-to-hyperplane distance are presented in \cref{app:tab:hyperplane-comparison,app:tab:distance-comparison}. The origin of the Poincaré ball model is $e = \Rzero \in \pball{m}$. The specific ones w.r.t. the origin \citep[Def. 1 and Eq. (56)]{shimizu2021hyperbolic} are
    \begin{align}
        H_{e_k, \Rzero} 
        &= \{ y \in \pball{m} \mid \inner{e_k}{y} = 0 \}, \\
        \bar{\dist}\left(y, H_{e_k, \Rzero}\right)
        &= \frac{1}{\sqrt{-K}} \sinh^{-1} \left( \frac{2\sqrt{-K} y_k}{1 + K\norm{y}^{2}} \right).
    \end{align}

    Equating $\bar{\dist}\left(y, H_{e_k, \Rzero}\right)$ with $u_k(x)$ from \cref{eq:hyp-fc-p2h} gives
    \begin{equation}\label{eq:poincare-fc-prf-1}
        \sinh^{-1} \left( \frac{2\sqrt{-K} y_k}{1 + K\norm{y}^{2}} \right) = \sqrt{-K} u_k(x), \ \forall 1 \leq k \leq m.
    \end{equation}
    Note that \cref{eq:poincare-fc-prf-1} takes the same form as \citet[Eq. (56)]{shimizu2021hyperbolic}, except that their responses are different. The below proof are inspired by their derivation.

    Applying $\sinh(\cdot)$ on both sides of \cref{eq:poincare-fc-prf-1} yields
    \begin{equation}
        \frac{2\sqrt{-K} y_k}{1 + K\norm{y}^{2}} = \sinh \left( \sqrt{-K} u_k(x) \right).
    \end{equation}
    Define $\omega_k := \frac{1}{\sqrt{-K}} \sinh \left( \sqrt{-K} u_k(x) \right)$ and $\omega = [\omega_k]_{k=1}^{m}$. Then, we have
    \begin{equation}
        2 y_k = \left( 1 + K\norm{y}^{2} \right) \omega_k, \quad \forall k,
    \end{equation}
    which is equivalent to the vector identity
    \begin{equation} \label{eq:poincare-fc-vector-eq}
        2 y = \left( 1 + K\norm{y}^{2} \right) \omega.
    \end{equation}
    Hence, $y$ is collinear with $\omega$. Write $y = \lambda \omega$ with $\lambda \ge 0$. Substituting into \cref{eq:poincare-fc-vector-eq} and taking norms gives a quadratic in $\lambda$:
    \begin{equation}
        K\norm{\omega}^{2} \lambda^{2} - 2\lambda + 1 = 0.
    \end{equation}
    Solving and selecting the branch that satisfies $y \to 0$ as $\omega \to 0$ yields
    \begin{equation}
        \lambda = \frac{1 - \sqrt{1 - K\norm{\omega}^{2}}}{K\norm{\omega}^{2}} = \frac{1}{1 + \sqrt{1 - K\norm{\omega}^{2}}}.
    \end{equation}
    Therefore,
    \begin{equation}
        y = \frac{\omega}{1 + \sqrt{1 - K\norm{\omega}^{2}}}, \quad \omega_k = \frac{\sinh \left( \sqrt{-K} u_k(x) \right)}{\sqrt{-K}},
    \end{equation}
    which proves the claim. One can check that $y \in \pball{m}$.
\end{proof}

\subsection{Proof of \cref{thm:bfc-lorentz}}
\label{app:prf:thm:bfc-lorentz}

\begin{proof}
    Recalling \cref{app:tab:hyperplane-comparison}, a Lorentz hyperplane is 
    \begin{equation}
        H_{w,p} = \{ x \in \lorentz{m} \mid \Linner{w}{x} = 0 \}, \text{ with } p \in \lorentz{m},\ w \in T_{p}\lorentz{m}.
    \end{equation}
    The canonical origin is $\Lzero \in \lorentz{m}$. The tangent space at the origin is
    \begin{equation}
    T_{\Lzero}\lorentz{m}=\{ [0,v^\top]^\top, v \in \bbR{m} \},
    \end{equation}
    where each tangent vector has a zero time component. Therefore, the coordinate hyperplane through the origin and orthogonal to the $k$-th axis is
    \begin{equation}
        \begin{aligned}
            H_{\bar{e}_k,e} &= \{ y \in \lorentz{m} \mid \Linner{\bar{e}_k}{y} = 0 \} \\
            &= \{ y=(y_t,y_s) \in \lorentz{m} \mid (y_s)_k=0 \},
        \end{aligned}
    \end{equation}
    where $\bar{e}_k=[0,e_k^\top]^\top \in T_{\Lzero}\lorentz{m}$. From \cref{app:tab:distance-comparison}, the associated signed point-to-hyperplane distance is
    \begin{equation}
    \begin{aligned}
        \bar{\dist}\left(y, H_{\bar{e}_k,e}\right) 
        &= \sign \left(\Linner{\bar{e}_k}{y}\right) \dist\left(y, H_{\bar{e}_k,e}\right) \\
        &= \frac{1}{\sqrt{-K}} \sinh^{-1} \left( \sqrt{-K} (y_s)_k \right).
    \end{aligned}
    \end{equation}

    Equating $\bar{\dist}\left(y, H_{\bar{e}_k,e}\right)$ with $u_k(x)$ from \cref{eq:hyp-fc-p2h} gives
    \begin{equation}\label{eq:lorentz-fc-prf-1}
        \sinh^{-1} \left( \sqrt{-K} (y_s)_k \right) = \sqrt{-K} u_k(x), \quad 1 \le k \le m.
    \end{equation}
    Applying $\sinh(\cdot)$ to both sides of \cref{eq:lorentz-fc-prf-1} yields
    \begin{equation}
        (y_s)_k = \frac{1}{\sqrt{-K}} \sinh \left( \sqrt{-K} u_k(x) \right), \quad 1 \le k \le m.
    \end{equation}
    Stacking the coordinates gives
    \begin{equation}
        y_s = \frac{1}{\sqrt{-K}} \sinh \left( \sqrt{-K} u(x) \right), \quad u(x)=\left(u_1(x),\ldots,u_m(x)\right)^{\top}.
    \end{equation}

    Since $y \in \lorentz{m}$, the hyperboloid constraint $\Linner{y}{y}=1/K$ implies $-y_t^{2}+\norm{y_s}^{2}=1/K$. Taking the positive time component yields
    \begin{equation}
        y_t = \sqrt{\frac{1}{-K}+\norm{y_s}^{2}}.
    \end{equation}
    Combining the expressions for $y_t$ and $y_s$ proves the claim.
\end{proof}

\subsection{Proof of \cref{thm:limits-bfc}}
\label{app:prf:thm:limits-bfc}

\begin{proof}
Set $K=-\kappa^{2}$ with $\kappa>0$.

\textbf{Poincaré case.}
Recall that
\begin{equation}
    y=\frac{\omega}{1+\sqrt{1 + \kappa^{2}\norm{\omega}^{2}}}, \quad \omega_k=\frac{\sinh\left(\kappa u_k(x)\right)}{\kappa}.
\end{equation}
For any bounded scalar $z$,
\begin{equation}
    \sinh\left(\kappa z\right)=\kappa z+\frac{\kappa^{3} z^{3}}{3!}+O\left(\kappa^{5}\right)
    \ \Rightarrow\ 
    \frac{\sinh\left(\kappa z\right)}{\kappa}=z+\frac{\kappa^{2} z^{3}}{6}+O\left(\kappa^{4}\right).
\end{equation}
Applying this to $z=u_k(x)$ yields
\begin{equation}
    \omega_k=u_k(x)+\frac{\kappa^{2}}{6} u_k(x)^{3}+O\left(\kappa^{4}\right)=u_k(x)+O\left(\kappa^{2}\right).
\end{equation}
Thus, $\omega=u(x)+O\left(\kappa^{2}\right)$ and $\norm{\omega}^{2}=\norm{u(x)}^{2}+O\left(\kappa^{2}\right)$.

For the denominator,
\begin{equation}
    \sqrt{1+\kappa^{2}\norm{\omega}^{2}}=1+\frac{1}{2}\kappa^{2}\norm{\omega}^{2}+O\left(\kappa^{4}\right),
\end{equation}
which gives
\begin{equation}
    1+\sqrt{1+\kappa^{2}\norm{\omega}^{2}}=2+\frac{1}{2}\kappa^{2}\norm{\omega}^{2}+O\left(\kappa^{4}\right).
\end{equation}
Taking the reciprocal produces
\begin{equation}
    \frac{1}{1+\sqrt{1+\kappa^{2}\norm{\omega}^{2}}}=\frac{1}{2}+O\left(\kappa^{2}\right).
\end{equation}

Multiplying with $\omega=u(x)+O\left(\kappa^{2}\right)$ gives
\begin{equation}
    y=\frac{1}{2} u(x)+O\left(\kappa^{2}\right).
\end{equation}
By \cref{thm:limits-bmlr}, $u_k(x)\to 2\alpha_k\inner{v_k}{x}+b_k$, hence
\begin{equation}
    y_k \to \alpha_k\inner{v_k}{x}+\frac{1}{2} b_k.
\end{equation}

\textbf{Lorentz case.}
Recall that
\begin{equation}
    y_s=\frac{1}{\kappa} \sinh\left(\kappa u(x)\right), \quad y_t=\sqrt{\frac{1}{\kappa^{2}}+\norm{y_s}^{2}}.
\end{equation}
Using the same expansion as above,
\begin{equation}
    y_{s,k}=\frac{1}{\kappa} \left(\kappa u_k(x)+\frac{\kappa^{3}}{3!} u_k(x)^{3}+O\left(\kappa^{5}\right)\right)=u_k(x)+O\left(\kappa^{2}\right).
\end{equation}
Therefore, $y_s=u(x)+O\left(\kappa^{2}\right)$ and $\norm{y_s}^{2}=\norm{u(x)}^{2}+O\left(\kappa^{2}\right)$.

Factor out $\kappa^{-1}$ and expand the square root:
\begin{equation}
\begin{aligned}
    y_t&=\frac{1}{\kappa} \sqrt{1+\kappa^{2}\norm{y_s}^{2}} \\
        &=\frac{1}{\kappa} \left(1+\frac{1}{2}\kappa^{2}\norm{y_s}^{2}+O\left(\kappa^{4}\right)\right) \\
        &=\frac{1}{\kappa}+\frac{\kappa}{2}\norm{y_s}^{2}+O\left(\kappa^{3}\right) \\
        &=\frac{1}{\kappa}+O\left(\kappa\right) \to \infty.
\end{aligned}
\end{equation}

By \cref{thm:limits-bmlr}, $u_k(x)\to \alpha_k\inner{v_k}{x_s}+b_k$. Using the spatial expansion yields
\begin{equation}
    (y_s)_k \to \alpha_k\inner{v_k}{x_s}+b_k.
\end{equation}
This completes the proof.
\end{proof}

\end{document}